%spider_SIAM-20180701 = arXiv! changed template

\documentclass[11pt]{article}

\usepackage[utf8]{inputenc} % allow utf-8 input

%-----------------------------------
\pdfoutput=1

\usepackage{algorithm}
\usepackage{algorithmic}
\usepackage{amsmath}
\usepackage{amssymb}
\usepackage{amsthm,amsfonts}
\usepackage{array}
\usepackage{authblk}
\usepackage{babel}
\usepackage{booktabs}
\usepackage{caption}
\usepackage{colortbl}% http://ctan.org/pkg/xcolor
\usepackage{enumitem}
\usepackage[T1]{fontenc}    
\usepackage{framed}
\usepackage{fullpage}
\usepackage{graphicx}
\usepackage[colorlinks, linkcolor=red, anchorcolor=blue, citecolor=blue]{hyperref}
\usepackage{indentfirst}
\usepackage[utf8]{inputenc}
\usepackage{lipsum}        
\usepackage{makecell}
\usepackage{mathrsfs}
\usepackage{mathtools}

\usepackage[framemethod=default]{mdframed}
\usepackage{microtype}     
\usepackage{multicol}
\usepackage{multirow}
\usepackage{natbib}
\usepackage{nicefrac}       % compact symbols for 1/2, etc.
\usepackage{tablefootnote}
\usepackage{url}            % simple URL typesetting
\usepackage{wrapfig,lipsum}
\usepackage{xcolor}         % colors
\usepackage{xspace}

\usepackage{doi}
\usepackage{tikz}
\usetikzlibrary{decorations.pathreplacing}
\parindent 15pt

% For paper revision:

%\input{0_config/init_load}
%%%%% NEW MATH DEFINITIONS %%%%%

\usepackage{amsmath,amsfonts,bm}

% Mark sections of captions for referring to divisions of figures

% Highlight a newly defined term

% Figure reference, lower-case.

% Figure reference, capital. For start of sentence

% Section reference, lower-case.

% Section reference, capital.

% Reference to two sections.

% Reference to three sections.

% Reference to an equation, lower-case.
%\def\eqref#1{equation~(\ref{#1})}
% Reference to an equation, upper case
%\def\Eqref#1{Equation~(\ref{#1})}
% A raw reference to an equation---avoid using if possible

% Reference to a chapter, lower-case.

% Reference to an equation, upper case.

% Reference to a range of chapters

% Reference to an algorithm, lower-case.

% Reference to an algorithm, upper case.

% Reference to a part, lower case

% Reference to a part, upper case

\def\1{\bm{1}}

% Random variables

% rm is already a command, just don't name any random variables m

\def\ry{{\textnormal{y}}}

% Random vectors

\def\rvx{{\mathbf{x}}}
\def\rvy{{\mathbf{y}}}
\def\rvz{{\mathbf{z}}}

% Elements of random vectors

% Random matrices

% Elements of random matrices

% Vectors
\def\vzero{{\bm{0}}}
\def\vone{{\bm{1}}}

\def\ve{{\bm{e}}}

\def\vq{{\bm{q}}}

\def\vu{{\bm{u}}}

\def\vx{{\bm{x}}}
\def\vy{{\bm{y}}}
\def\vz{{\bm{z}}}

% Elements of vectors

% Matrix
\def\mA{{\bm{A}}}

\def\mI{{\bm{I}}}

\def\mM{{\bm{M}}}

\def\mQ{{\bm{Q}}}
\def\mR{{\bm{R}}}

% Tensor
\DeclareMathAlphabet{\mathsfit}{\encodingdefault}{\sfdefault}{m}{sl}
\SetMathAlphabet{\mathsfit}{bold}{\encodingdefault}{\sfdefault}{bx}{n}

% Graph

\def\gE{{\mathcal{E}}}

\def\gL{{\mathcal{L}}}

\def\gQ{{\mathcal{Q}}}

\def\gX{{\mathcal{X}}}
\def\gY{{\mathcal{Y}}}

% Sets

% Don't use a set called E, because this would be the same as our symbol
% for expectation.

%\def\sP{{\mathbb{P}}}
% Don't use a set called P, because this would be the same as our symbol for probability
\def\Pr{\mathbb{P}}

% Entries of a matrix

% entries of a tensor
% Same font as tensor, without \bm wrapper

%

\newcommand{\grad}{\ensuremath{\nabla}}

% The true underlying data generating distribution

% The empirical distribution defined by the training set

% The model distribution

% Stochastic autoencoder distributions

 % Laplace distribution

\newcommand{\E}{\mathbb{E}}

\newcommand{\R}{\mathbb{R}}

\newcommand{\KL}[2]{\mathrm{KL}\left(#1 \big\| #2\right)}
\newcommand{\Breg}[2]{D_\mathrm{\phi}\left(#1 \big\| #2\right)}

\newcommand{\TVD}[2]{\mathrm{TV}\left(#1, #2\right)}

% Wolfram Mathworld says $L^2$ is for function spaces and $\ell^2$ is for vectors
% But then they seem to use $L^2$ for vectors throughout the site, and so does
% wikipedia.

 % See usage in notation.tex. Chosen to match Daphne's book.

\newcommand{\der}{\mathrm{d}}

% Table settings

\makeatletter
\def\thickhline{%
  \noalign{\ifnum0=`}\fi\hrule \@height \thickarrayrulewidth \futurelet
   \reserved@a\@xthickhline}
\def\@xthickhline{\ifx\reserved@a\thickhline
               \vskip\doublerulesep
               \vskip-\thickarrayrulewidth
             \fi
      \ifnum0=`{\fi}}
\makeatother

\newlength{\thickarrayrulewidth}
\setlength{\thickarrayrulewidth}{3\arrayrulewidth}

%%%%%%%%%%%%%%%%%%%%%%%%%%%%%%%%
% THEOREMS
%%%%%%%%%%%%%%%%%%%%%%%%%%%%%%%%
\theoremstyle{plain}
\newtheorem{theorem}{Theorem}[section]

\newtheorem{lemma}[theorem]{Lemma}

\theoremstyle{definition}

\theoremstyle{remark}
\newtheorem{remark}[theorem]{Remark}

\newcommand{\Cb}[1]{\mathrm{Cube}\left(#1\right)}
\newcommand{\Ce}[1]{\mathrm{Cell}\left(#1\right)}

\NewDocumentCommand{\yl}{ mO{} }{\textcolor{brown}{\textsuperscript{\textit{YL}}\textrm{{\small[#1]}}}}

\newcommand{\ourmethod}{\text{QTD}}

\NewDocumentCommand{\nikki}{ mO{} }{\textcolor{purple}{\textsuperscript{\textit{Nikki}}\textrm{{\small[#1]}}}}

%%%%%%Revised

% \title{\LARGE Recursive Score Estimation Accelerates Diffusion-Based \\ Monte Carlo}

\title{Almost Linear Convergence under Minimal Score Assumptions: Quantized Transition Diffusion}

% \title{Scaling Reverse Diffusion Monte Carlo \\ with Recursive Score Estimation}

\author[1]{\normalsize Xunpeng Huang$^*$}
\author[1]{Yingyu Lin$^*$}
\author[1]{Nikki Lijing Kuang}
\author[4]{Hanze Dong}
\author[2]{\\Difan Zou$^\dagger$}
\author[1]{Yian Ma$^\dagger$}
\author[3]{Tong Zhang}

\affil[1]{University of California San Diego}
\affil[2]{The University of Hong Kong}
\affil[3]{University of Illinois Urbana-Champaign}
\affil[4]{SalesForce AI Research}

%\def\thefootnote{*}\footnotetext{These authors contributed equally to this work}

%\affil[ ]{\texttt {\{xhuangck,hendry.dong\}@connect.ust.hk, dzou@cs.hku.hk}}
%\affil[ ]{\texttt{yianma@ucsd.edu, tongzhang@ust.hk}}

\begin{document}

\date{}
\maketitle

\def\thefootnote{*}\footnotetext{Equal contribution}
\def\thefootnote{$\dagger$}\footnotetext{Mail to \href{yianma@ucsd.edu}{yianma@ucsd.edu}, \href{dzou@cs.hku.hk}{dzou@cs.hku.hk}}

\begin{abstract}

Continuous diffusion models have demonstrated remarkable performance in data generation across various domains, yet their efficiency remains constrained by two critical limitations: (1) the local adjacency structure of the forward Markov process, which restricts long-range transitions in the data space, and (2) inherent biases introduced during the simulation of time-inhomogeneous reverse denoising processes. To address these challenges, we propose \textbf{Quantized Transition Diffusion} (\ourmethod), a novel approach that integrates data quantization with discrete diffusion dynamics. 
%\yl{first quantize, then discrete diffusion model?} 
Our method first transforms the continuous data distribution $p_*$ into a discrete one $q_*$ via histogram approximation and binary encoding, enabling efficient representation in a structured discrete latent space. We then design a continuous-time Markov chain (CTMC) with Hamming distance-based transitions as the forward process, which inherently supports long-range movements in the original data space. For reverse-time sampling, we introduce a \textit{truncated uniformization} technique to simulate the reverse CTMC, which can provably provide unbiased generation from $q_*$ under \textit{minimal score assumptions}. Through a novel KL dynamic analysis of the reverse CTMC, we prove that QTD can generate samples with $O(d\ln^2(d/\epsilon))$ score evaluations in expectation to approximate the $d$--dimensional target distribution $p_*$ within an $\epsilon$ error tolerance. 
%\textcolor{red}{Difan: which matches (or maybe outperforms) xxxx mention some existing works?}
Our method not only establishes state-of-the-art inference efficiency but also advances the theoretical foundations of diffusion-based generative modeling by unifying discrete and continuous diffusion paradigms.

\end{abstract}

\section{Introduction}
Diffusion models~\cite{sohl2015deep,song2019generative,ho2020denoising} have become a powerful and widely used class of generative models, achieving state-of-the-art (SOTA) performance across diverse domains, including image~\cite{nichol2021improved,rombach2022high,ho2022cascaded}, audio~\cite{schneider2023archisound,kong2020diffwave,popov2021grad}, and video generation~\cite{ho2022video,yang2023diffusion}, as well as scientific discovery~\cite{guo2023diffusion,trippe2023diffusion,watson2023novo,boffi2023probability}. 
The core idea of their design lies in a noising-denoising process: the forward process incrementally adds noise to the training data, mapping an unknown and potentially complex distribution to a simpler prior (often standard Gaussian), while the reverse process progressively denoises samples into the original data distribution by estimating the logarithmic gradient (aka \emph{score}) of the noised distributions~\cite{vincent2011connection, song2019generative}.
Despite their notable empirical successes, understanding and improving the runtime complexity of generating high-quality samples, especially in high-dimensional settings, remain a major challenge.

Various theoretical works~\cite{chensampling,chen2023improved,bentonnearly,li2024d} study continuous diffusion models for generating $d$--dimensional samples (or approximating the training distribution within an $\epsilon$ tolerance) by simulating the time-inhomogeneous reverse Ornstein--Uhlenbeck (OU) process. For instance, DDPM~\cite{ho2020denoising} is proved to achieve an $\tilde{O}(d/\epsilon)$ complexity for total variation (TV) distance convergence under minimal smoothness assumptions~\cite{chen2023improved}. 
Some DDPM variants~\cite{huang2024reverse,li2024provable} improve or balance complexity to the extent of $\tilde{O}(\sqrt{d}/\epsilon)$ or $\tilde{O}(d^{5/4}/\sqrt{\epsilon})$, but require stricter conditions such as smooth score function along the entire OU process.
% We conjecture that the polynomial dependence on $\epsilon$ in these works arises from two factors: 
There are two factors limiting the improvement of the current results.
\textbf{(1) The local adjacency structure} of the forward process: the forward OU process confines each update to a small neighborhood with a high probability.
This neighborhood transition structure constrains the particle movement in each iteration to be tiny, inversely proportional to the expected smoothness of the noised score, so as to control the cumulative error in the inference process.
As a result, using small step sizes hinders the convergence of the particles' distribution to the original data distribution.
\textbf{(2) inherent biases} introduced by discretizing and simulating time-inhomogeneous reverse OU processes: the ideal reverse OU process corresponds to a time-inhomogeneous Markov semigroup governed by the Fokker--Planck equation, yet it cannot be unbiasedly implemented through existing numerical techniques in the diffusion inference pipeline.

In this work, we propose a new \emph{quantized transition diffusion} method, \ourmethod, which addresses the two issues outlined above and attains a total variation (TV) convergence with $O\bigl(d\ln^2(d/\epsilon)\bigr)$ expected score evaluations under minimal score assumptions. 
The core idea is to transform data distribution on the continuous space into a discrete one, which is then parameterized and sampled with a novel discrete diffusion model that we design~\cite{lou2024discrete,zhang2024convergence}. 
% For the discrete diffusion models, we design a forward continuous-time Markov chain (CTMC) by leveraging the Hamming distance of binary-encoded states. This leads to a sparse graph whose out-degree and diameter both grow {logarithmically}, thereby balancing the number of iterations against their individual computational cost, as depicted in Figure~\ref{fig:adj_diff_space}. 
For the discrete diffusion model, we first design the structure of the space by leveraging the Hamming distance of binary-encoded states. 
This leads to a sparse graph structure whose diameter and out-degree both grow \emph{logarithmically}, as explained in Fig.~\ref{fig:adj_diff_space} and Sec.~\ref{sec:binary_encode}.  
This design balances the number of jumps required to reach one state from another against the number of options for transition that we need to consider at each node. 
The former is related to the number of iterations required for Markov chain convergence, the latter relates to the complexity of computing the transition probability in each iteration.
Over this discrete space, we design a forward continuous-time Markov Chain (CTMC).
To simulate the reverse process, we design an unbiased simulation technique called \emph{truncated uniformization}, which generalizes classical uniformization methods~\citep{van1992approximate,van2018uniformization} to our setting without additional assumptions.
Our main contributions are summarized as follows.
\begin{itemize}[leftmargin=*]
    \item We propose the \ourmethod\ framework and provably improve the inference rate from {polynomial} to {logarithmic} dependence on $\epsilon$. Specifically, \ourmethod\  generate $d$--dimensional samples to approximate the data distribution with $\Theta(\epsilon)$--TV error with only $O(d\ln^2(d/\epsilon))$ expected score evaluations.

    \item We present a new perspective on modeling continuous data distributions by discretizing the state space and replacing Euclidean ($\ell_2$) neighborhoods with a Hamming-distance-based graph over binary encodings. This allows the discrete process to capture long-range transitions in the original space through sparse, structured jumps in the discrete domain.

    \item We introduce the \emph{truncated uniformization} technique for an unbiased and tractable CTMC simulation. This method removes the restricted bounded-score assumption imposed in prior discrete diffusion analyses~\cite{chen2024convergence,zhang2024convergence}.
    %We offer a novel perspective on data distribution modeling by transforming the task of generating samples on continuous distributions into sampling from a high-dimensional discrete distribution with a bounded support set. By replacing the $L_2$--based adjacency with Hamming distance on discrete states, we allow some infinitesimal transitions in discrete space to correspond to long-distance jumps in the original Euclidean space.

    \item We develop a novel proof technique for analyzing the inference process of discrete diffusion models. In place of the standard Girsanov-based approach~\cite{chen2024convergence,zhang2024convergence}, we leverage the chain rule of KL divergence over infinitesimal time intervals to derive convergence guarantees.
    %, replacing the Girsanov-based analysis from~\cite{chen2024convergence,zhang2024convergence} by introducing the chain rule of KL divergence in an infinitesimal time. 
\end{itemize}

\begin{figure}[t]
    \centering
    \includegraphics[width=0.7\linewidth]{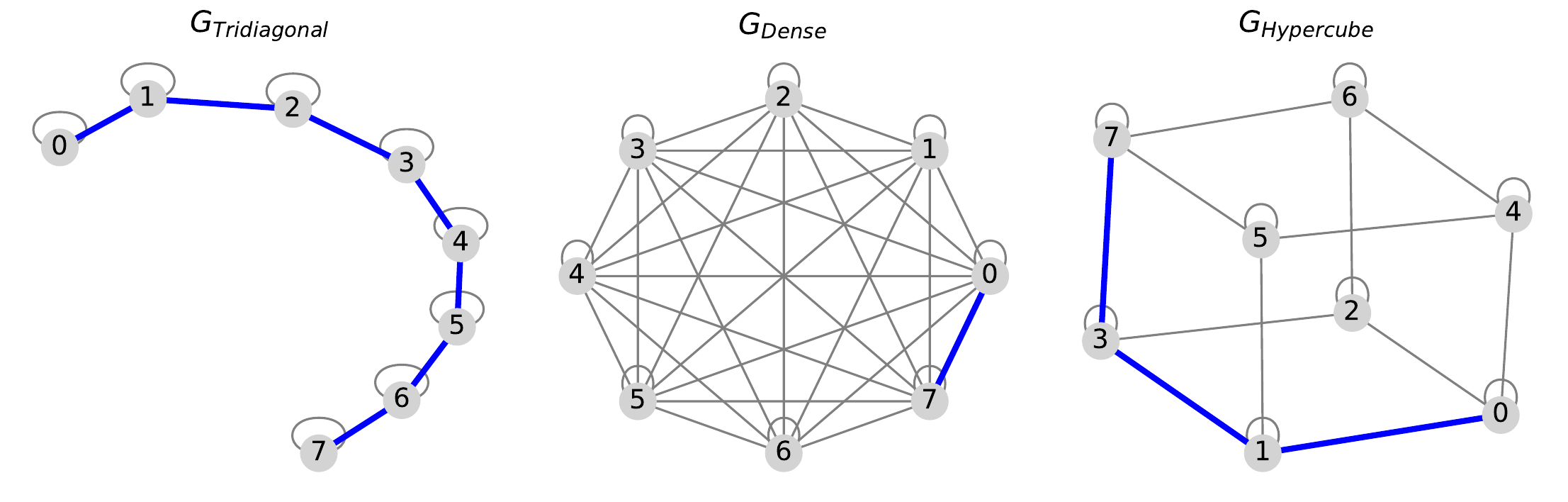}
    %{}
    \caption{\small Visualization of different adjacency structures. The bold blue edges highlight a diameter path---a shortest path between the two most distant vertices in each graph. Drawing samples in a discrete space $\gY$ by simulating a CTMC can be viewed as traversing a graph whose diameter governs the number of iterations required for convergence, while the out-degree of each node influences the per-iteration complexity. In the neighborhood adjacency $G_{\text{Tridiagonal}}$, each node has an out-degree of $O(1)$ but a diameter of $O(|\gY|)$. For the dense adjacency, the graph $G_{\text{Dense}}$ attains a diameter of $O(1)$ at the cost of an $O(|\gY|)$ out-degree. Notably, the binary adjacency $G_{\text{Hypercube}}$ offers a balanced design, featuring both a diameter and an out-degree of $O(\log|\gY|)$.}
    \label{fig:adj_diff_space}
    \vspace{-0.1in}
\end{figure}

\section{Preliminaries}
\label{sec:pre}
%\nikki{This section is lengthy. It is beneficial to add subtitles to make it more structured and help people locate the necessary info quickly, e.g. problem setup, forward process, reverse process, reverse process simulation and uniformization}

Our goal is to approximate continuous target distributions via tractable discrete processes. In this section, we define discrete forward and reverse Markov processes, parameterized by transition rate functions, and introduce the uniformization technique \cite{van1992approximate,van2018uniformization} to simulate these processes efficiently. All notations introduced below are summarized in Table~\ref{tab:notations} in Appendix~\ref{appendix:notation}.

% \hanze{a small notation paragraph would be helpful: \paragraph{Notations.} $x \in \mathbb{R}^d$ for continuous space, $\vy \in \gY$ for discrete state. $p_*$ for true continuous distribution, $q_*$ for discrete approximation. Define $R^\to$ and $R^\gets$ upfront as transition rate functions for CTMCs.}

\noindent\textbf{Problem setup.} Without loss of generality, we focus on distributions that admit probability density functions in Euclidean space.
These continuous density functions are represented by $p\colon \R^d\rightarrow \R^+$.
Specifically, let the data distribution be $p_*\propto\exp(-f_*)$ for some potential function $f_*$.
We consider the task of approximating $p_*$ using some discrete distribution with probability mass function $q_*\colon \gY \rightarrow \R_0^+$, defined on a finite discrete space $\gY$.
This discrete approximation is modeled via a forward Markov process $\{\rvy^\to_t\}_{t=0}^T$ and named as discrete diffusion model~\cite{lou2024discrete,zhang2024convergence, chen2024convergence}, with initial distribution $q_0^\to = q_*$ that evolves toward the uniform distribution.
%\textcolor{red}{Specifically, we denote the data distribution by $p_*\propto\exp(-f_*)$. In this paper, we consider drawing samples from $p_*$ by introducing some specific discrete distributions whose probability mass function is denoted by $q_*\colon  \gY \rightarrow \R_0^+$, where $\gY$ is a discrete space. Since samples drawn from $q_*$ are implemented by discrete diffusion models~\cite{lou2024discrete,zhang2024convergence, chen2024convergence}, $q_*$ can also be seen as the initial distribution of a forward process $\{\rvy^\to_t\}_{t=0}^T$ that evolves toward the uniform distribution, i.e., $q_* = q_0^\to$.}
%\nikki{Specifically, let the data distribution be $p_*\propto\exp(-f_*)$ for some potential function $f_*$. We consider the task of approximating $p_*$ using some discrete distribution with probability mass function $q_*\colon \gY \rightarrow \R_0^+$, defined on a finite discrete space $\gY$. This discrete approximation can be modeled via a forward Markov process $\{\rvy^\to_t\}_{t=0}^T$, with initial distribution $q_0^\to = q_*$ that evolves toward the uniform distribution. The forward process is implemented using a discrete diffusion~\cite{lou2024discrete,zhang2024convergence,chen2024convergence}. }
Then, the marginal distribution at time $t$ is denoted by $q^\to_t$, the joint and conditional distributions over different time steps $t^\prime > t$ are given by
\begin{equation*}
    (\rvy^\to_{t^\prime},\rvy_t^\to)\sim q^\to_{t^\prime, t}\quad \text{and}\quad q^\to_{t^\prime|t}(\vy^\prime|\vy) = q^\to_{t^\prime, t}(\vy^\prime,\vy)/q_t^\to(\vy).
\end{equation*}
For simplicity, we set the forward process to be a time-homogeneous CTMC constructed via the transition rate function $R^\to\colon \gY\times \gY \rightarrow \R$, which implies that both conditional and marginal distributions satisfy
\begin{equation}
    \label{eq:fwd_func}
    \frac{\der q^\to_{t|s}}{\der t}(\vy) =\sum_{\vy^\prime \in \gY} R^\to(\vy,\vy^\prime)\cdot q^\to_{t|s}(\vy^\prime)\quad \text{and}\quad \frac{\der q^\to_t}{\der t}(\vy) =\sum_{\vy^\prime \in \gY} R^\to(\vy,\vy^\prime)\cdot q^\to_t(\vy^\prime).
\end{equation}
The transition rate function $R^\to$ characterizes the instantaneous rate of transitioning from state $\vy^\prime$ to $\vy$ and is formally defined as
\begin{equation}
    \label{def:mean_of_R}
    R^\to(\vy, \vy^\prime) \coloneqq  \lim_{\Delta t\rightarrow 0}\left[\Delta t^{-1}\cdot \left(q^\to_{\Delta t|0}(\vy|\vy^\prime)-\delta_{\vy^\prime}(\vy)\right)\right],
\end{equation}
where $\delta_{\vy^\prime}(\vy) = 1$ if $\vy = \vy^\prime$ and $0$ otherwise.

\noindent\textbf{Reverse process.} Additional properties of $R^\to$ are discussed in Appendix~\ref{sec:a1_fwd_and_infi}. 
To sample from the target distribution $q_* = q_0^\to$ in practice, we simulate the reverse-time process ${\rvy_t^\gets}$ that starts from $q_T^\to$ and moves backward.
\begin{equation*}
    \left\{\rvy^\gets_{t}\right\}_{t=0}^T\quad \text{where}\quad \rvy^\gets_t\sim q^\gets_t = q^\to_{T-t},\quad (\rvy^\gets_{t^\prime},\rvy^\gets_t)\sim q^\gets_{t^\prime, t},\quad \text{and}\quad q^\gets_{t^\prime, t} = q^\gets_t\cdot q^\gets_{t^\prime|t},
\end{equation*}
whose dynamic follows from 
\begin{equation}
    \label{eq:rev_func}
    \frac{\der q^\gets_t}{\der t}(\vy) = \sum_{\vy^\prime\in \gY} R^\gets_t(\vy,\vy^\prime)\cdot q^\gets_t(\vy^\prime)\quad \text{where}\quad R_t^\gets(\vy, \vy^\prime) \coloneqq   R^\to(\vy^\prime, \vy)\cdot\frac{q^\gets_t(\vy)}{q^\gets_t(\vy^\prime)},
\end{equation}
proven in Appendix~\ref{sec:a2_reverse_prop}.
Similar to the $R^\to$ in the forward process, $R^\gets_t$ characterizes the transition rates for the time-inhomogeneous reverse process $\{\rvy^\gets_t\}_{t=0}^T$, i.e., 
%\nikki{Maybe good to italicize "time-homogeneous" and "time-inhomogeneous"}
\begin{equation}
    \label{def:mean_of_R_gets}
    R^\gets_t(\vy, \vy^\prime) \coloneqq  \lim_{\Delta t\rightarrow 0}\left[\frac{q^\gets_{t+\Delta t|t}(\vy|\vy^\prime)-\delta_{\vy^\prime}(\vy)}{\Delta t}\right],
\end{equation}
as shown in Appendix~\ref{sec:app_prof_mean_of_r_gets}.
In practice, the probability density ratio $q^\gets_t(\vy)/q^\gets_t(\vy^\prime)$ will usually be approximated with neural networks due to its unknown closed form, which is presented as 
\begin{equation*}
    \tilde{v}_{t,\vy^\prime}(\cdot)\approx v_{t,\vy^\prime}(\cdot) = q^\gets_t(\cdot)/q_t(\vy^\prime).
\end{equation*}
To simulate the reverse process in Eq.~\eqref{eq:rev_func}, we must estimate the time-varying rate matrix $R_t^\gets$, which depends on the intractable ratio $q_t^\gets(\vy)/q_t^\gets(\vy')$. We approximate this ratio using neural networks trained via score entropy minimization ~\cite{lou2024discrete,benton2024denoising},
\begin{equation}
    \label{eq:score_estimation}
    L_{\text{SE}}(\hat{v}) = \int_0^T \E_{\rvy_t\sim q^\to_t}\left[\sum_{\vy\not=\rvy_t} R^\to(\rvy_t, \vy)\cdot \Breg{v_{T-t, \rvy_t}(\vy)}{\tilde{v}_{T-t, \rvy_t}(\vy)} \right]\der t,
\end{equation}
where $\Breg{\cdot}{\cdot}$ denotes the Bregman divergence, and $\phi(c)=c\ln c$. 
Similar to the score estimation loss in continuous cases~\cite{chensampling}, the loss $L_{\text{SE}}$ is not directly estimable. Instead, implicit score entropy and denoising score entropy~\cite{lou2024discrete,benton2024denoising} are introduced to enable an equivalent minimization.

\noindent\textbf{Uniformization.} 
%With a well-trained score estimation $\tilde{v}_{t}$, one can implement various algorithms to simulate the reverse process, i.e., Eq.~\eqref{eq:rev_func}, including biased methods such as the Euler method~\cite{zhang2024convergence} and $\tau$-leaping~\cite{lou2024discrete, ren2024discrete}, as well as the unbiased uniformization method~\cite{chen2024convergence}.
%The former two numerical methods, i.e., the Euler method and $\tau$--leaping, rely on evaluating the transition rate function at every small time step.
With a well-trained score estimation $\tilde{v}_{t}$, uniformization simulates CTMCs by decoupling transition timing from state changes: it samples candidate transition times from a Poisson process with rate $\beta$, and then selects the next state based on a normalized version of the rate matrix. This avoids evaluating transition rates at every fine-grained time step without compromising accuracy.
Specifically, uniformization splits the probability that a state remains unchanged into two scenarios: 
first, no state transition event happens, and second, the state transitions but ultimately returns to itself.
When state self-transition dominates, most steps are spent in place.
Uniformization suggests focusing on the number of actual transitions within a certain interval or on the waiting time until the next transition, hence effectively reducing the frequency of calls to the transition rate evaluations of $R_t^\gets$.
Consider the reverse process presented in Eq.~\eqref{eq:rev_func}, the conditional transition probability satisfies
\begin{equation}
    \label{eq:ideal_infini_ope}
    q^\gets_{t+\Delta t|t}(\vy^\prime|\vy) = \left\{
        \begin{aligned}
            & \Delta t \cdot R^\gets_{t}(\vy^\prime, \vy) && \vy^\prime\not= \vy\\
            & 1-\Delta t\sum_{\tilde{\vy}\not= \vy} R^\gets_t(\tilde{\vy}, \vy) && \vy^\prime = \vy
        \end{aligned}
    \right.
\end{equation}
in an infinitesimal time $\Delta t$ due to Eq.~\eqref{def:mean_of_R_gets}, where the $o(\Delta t)$ term is omitted.
Suppose the probability of transitioning to a different state is upper bounded by $\Delta t \cdot \beta$:
\begin{equation}
    \label{ineq:trans_rate_upb}
    \sum_{\vy^\prime\not=\vy}R^\gets_t(\vy^\prime,\vy) \coloneqq R^\gets_t(\vy) \le \beta, ~~\forall t.
\end{equation}
We can then simulate Eq.~\eqref{eq:rev_func} with the following uniformization procedure:
\begin{enumerate}[leftmargin=*]
    \item With probability $\Delta t\cdot \beta$, allow a state transition.
    \item Conditioning on an allowed transition, move from $\vy$ to $\vy^\prime$ with probability
    \begin{equation*}
        \mM_t(\vy^\prime|\vy) = \left\{
            \begin{aligned}
                & \beta^{-1}R_t^\gets(\vy^\prime, \vy) && \vy^\prime\not=\vy\\
                & 1- \beta^{-1}R^\gets_t(\vy) && \text{otherwise}
            \end{aligned}
        \right. .
    \end{equation*}
\end{enumerate}
Under these two steps, the practical conditional probability satisfies 
\begin{equation}
    \label{eq:prac_infini_ope}
    \hat{q}_{t+\Delta t| t}(\vy^\prime|\vy) = \left\{
        \begin{aligned}
        & \Delta t\cdot \beta \cdot R_t^\gets(\vy^\prime, \vy)\cdot \beta^{-1} = \Delta t \cdot R_t^\gets(\vy^\prime,\vy) && \vy^\prime\not=\vy\\
        & 1-\Delta t\cdot \beta + \Delta t \cdot \beta \cdot (1-\beta^{-1}\cdot R^\gets_t(\vy)) = 1-\Delta t\cdot R_t^\gets(\vy) && \vy^\prime = \vy,
        \end{aligned}    
    \right.
\end{equation}
which exactly matches Eq.~\eqref{eq:ideal_infini_ope}.
Under this condition, the number of transition events within a time interval $[s, t]$ follows a Poisson distribution~\cite{van1992approximate,van2018uniformization} whose expectation is $\beta(t-s)$, which coincides with the number of required evaluations of the transition rate function $R^\gets_t$.
This implies choosing a tighter upper bound $\beta$ directly leads to better complexity.

%In this paper, we introduce a truncated uniformization approach that combines the advantages of both, which is deferred to Section~\ref{sec:truncted_unif}.

\section{Quantized Transition Diffusion}
\label{sec:method}

In this section, we present a novel Quantized Transition Diffusion (\ourmethod) for efficiently approximating samples from a continuous data distribution. Our approach addresses the inefficiency of standard diffusion-based inference in continuous space by discretizing the problem into a structured CTMC over a binary-encoded state space. Key innovations include (i) a histogram-based approximation of the target density, (ii) a binary embedding that enables long-range transitions while maintaining manageable state connectivity, and (iii) a truncated uniformization scheme for efficient and unbiased simulation of the reverse-time CTMC. 

\subsection{Histogram Approximation}

\begin{figure}
    \centering
    \includegraphics[width=0.7\linewidth]{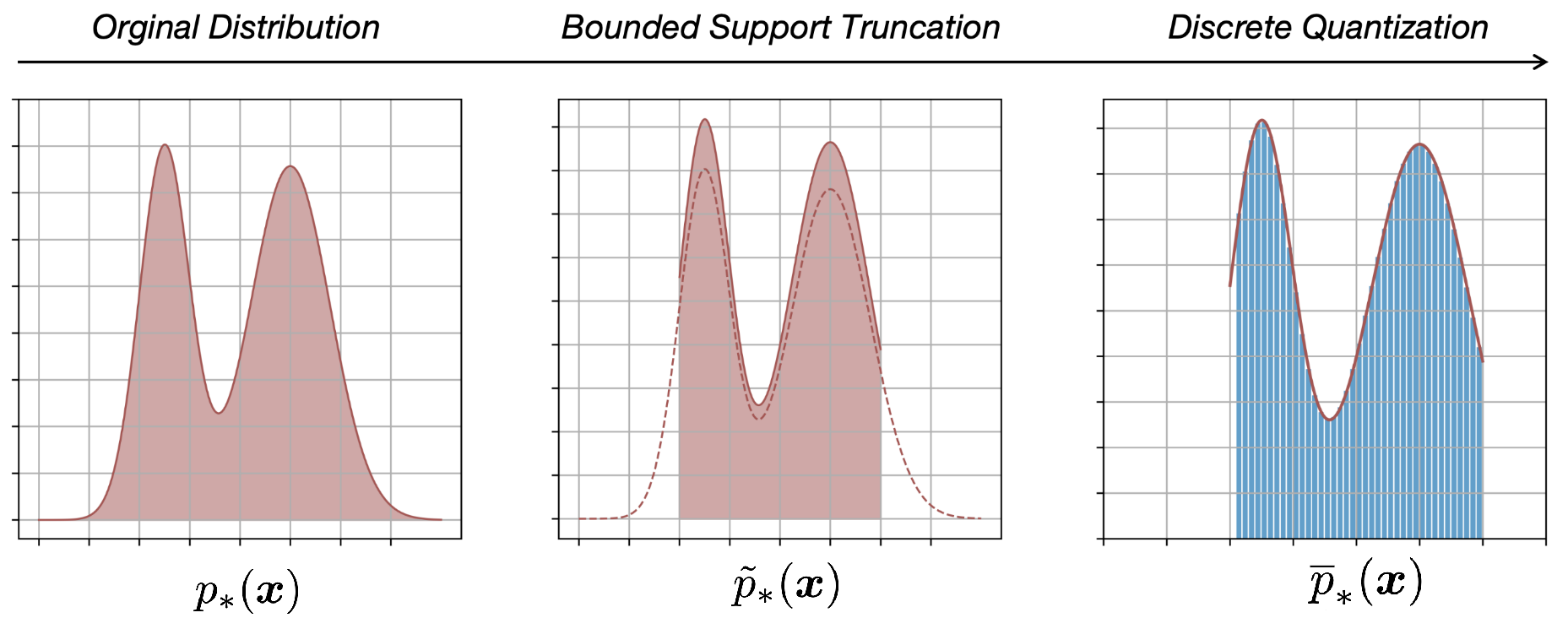}
    \caption{\small Visualization of the histogram approximation. The first step regularizes the original distribution in some bounded sets but controls the TV gap by Lemma~\ref{lem:histog_dis_comp}. The second step quantizes the probability density to a histogram-like distribution but controls the TV gap by Lemma~\ref{lem:histog_dis_quan}.}
    %\hanze{Move to introduction?} 
    \label{fig:DisPro_1}
\end{figure}

To approximate the target distribution $p_*$ defined in the Euclidean space $\R^d$ with a histogram-like distribution, we first restrict its support to a bounded region, which can be represented by a cube of side length $L$ as follows:
\begin{equation*}
    \Cb{L}\coloneqq \left\{\vx \,\middle|\,-L\le \vx_j \le L,\ \forall j\in\{1,2,\ldots,d\}\right\}. 
\end{equation*}

Given that $\Cb{L}$ covers most probability mass of $p_*$, we construct a probability density restricted to this region to approximate $p_*$:
\begin{equation}
    \label{def:bounded_approx}
    \tilde{p}_*(\vx)\coloneqq \frac{p_*(\vx)}{\int_{\vx\in \Cb{L}}p_*(\vx)\,\der \vx}\quad \forall \vx\in\Cb{L}.
\end{equation}
Standard concentration arguments allow us to control the TV distance between $p_*$ and $\tilde{p}_*$. Next, we quantize $\tilde{p}_*$ over $\Cb{L}$ by discretizing each dimension into $K\coloneqq 2L/l$ intervals of width $l$, with partition points defined by:
\begin{equation*}
    l_{i} =-L + i\cdot l\quad i\in\{0, 1,\ldots, K\}\quad \text{and}\quad -L\le l_i\le L.
\end{equation*}
That means the high-dimensional cube $\Cb{L}$ will be decomposed into $K^d$ cells (subsets), and each cell will cover a small region shown as follows
\begin{equation}
    \label{def:cell}
    \Ce{i_0, i_1,\ldots,i_{d-1}} \coloneqq \left\{\vx|l_{i_j} < \vx_j \le l_{i_j+1}, \forall j\in\{0,1,\ldots, d-1\} \right\}.
\end{equation}
We construct the piecewise constant distribution $\overline{p}_*(\vx)$ by averaging the original density $\tilde{p}_*$ over each quantization cell. Specifically, for each cell $\Ce{i_0, i_1,\ldots,i_{d-1}}$, we assign a constant density to all points $\vx$ in the cell:
\begin{equation}
    \label{def:quantize_cell}
    \overline{p}_*(\vx)=l^{-d}\cdot \int_{\vu \in \Ce{i_0, i_1,\ldots,i_{d-1}}}\tilde{p}_*(\vu)\der \vu, \quad \vx\in\Ce{i_0, i_1,\ldots,i_{d-1}}.
\end{equation}
This construction ensures that $\int_{\Cb{L}} \overline{p}_*(\vx) \der \vx = 1$.
As shown in the following lemma, under the smoothness assumption on $p_*$, we can control the TV distance between $\overline{p}_*$ and ${p}_*$. It implies that with proper choices of $L$ and $l$, the histogram-like distribution $\overline{p}_*$ can be made arbitrarily close to $p_*$.
\begin{lemma}
    \label{lem:discrete_quantization_gap}
    Suppose the target distribution $p_*\propto \exp(-f_*)$ is $\sigma$ sub-Gaussian and $f_*$ is $H$--smooth, we can construct $\overline{p}_*$ defined on a finite cube $\Cb{L}$ with length
    \begin{equation*}
        L = \sigma\cdot \sqrt{2\ln (2d/\epsilon)} \quad \mathrm{and}\quad  l= \left[2H\cdot \left(\sigma\sqrt{2d\ln(2d/\epsilon)} + d + \sqrt{dm_0}\right)\right]^{-1} \cdot \epsilon,
    \end{equation*}
    to satisfy $\TVD{p_*}{\overline{p}_*}\le 3\epsilon$.
\end{lemma}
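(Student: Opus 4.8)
The strategy is to split the total variation distance via the triangle inequality into two pieces: $\TVD{p_*}{\tilde p_*}$ (the truncation error) and $\TVD{\tilde p_*}{\overline p_*}$ (the quantization error), and to bound each by $O(\epsilon)$ using the chosen values of $L$ and $l$. Concretely, I would show $\TVD{p_*}{\tilde p_*}\le\epsilon$ and $\TVD{\tilde p_*}{\overline p_*}\le 2\epsilon$, so that the triangle inequality yields $\TVD{p_*}{\overline p_*}\le 3\epsilon$. These two bounds are presumably exactly the content of the auxiliary Lemmas referenced in Fig.~\ref{fig:DisPro_1} (``\texttt{lem:histog\_dis\_comp}'' and ``\texttt{lem:histog\_dis\_quan}''), so the main work here is to choose $L$ and $l$ so their hypotheses are met and then assemble the pieces.

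\textbf{Step 1: truncation error.} Since $\tilde p_*$ is just $p_*$ conditioned on the cube $\Cb{L}$, a standard computation gives $\TVD{p_*}{\tilde p_*} = p_*\!\left(\R^d\setminus\Cb{L}\right)$ (the mass of $p_*$ outside the cube), or more precisely $\TVD{p_*}{\tilde p_*}=\Pr_{p_*}[\vx\notin\Cb{L}]$ up to the usual factor. The event $\vx\notin\Cb{L}$ means some coordinate exceeds $L$ in absolute value; by the $\sigma$ sub-Gaussian assumption on $p_*$, each coordinate satisfies $\Pr[|\vx_j|>L]\le 2\exp(-L^2/(2\sigma^2))$, so a union bound over $d$ coordinates gives $\Pr[\vx\notin\Cb{L}]\le 2d\exp(-L^2/(2\sigma^2))$. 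Plugging in $L=\sigma\sqrt{2\ln(2d/\epsilon)}$ makes this at most $2d\cdot\frac{\epsilon}{2d}=\epsilon$, as desired. (If the sub-Gaussian tail constant differs by a factor, absorb it into the $O(\epsilon)$; the statement already leaves slack via the ``$3\epsilon$''.)

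\textbf{Step 2: quantization error.} Here I would bound $\TVD{\tilde p_*}{\overline p_*}=\frac12\int_{\Cb{L}}|\tilde p_*(\vx)-\overline p_*(\vx)|\,\der\vx$ cell by cell. On each cell, $\overline p_*$ is the average of $\tilde p_*$, so $|\tilde p_*(\vx)-\overline p_*(\vx)|\le \sup_{\vu,\vx\in\text{cell}}|\tilde p_*(\vx)-\tilde p_*(\vu)|\le \text{(Lipschitz constant of }\tilde p_*\text{ on the cube)}\cdot\text{diam(cell)}$. The cell diameter is $l\sqrt d$. For the Lipschitz bound: $\nabla\tilde p_* = -\tilde p_*\nabla f_*$ on $\Cb{L}$, and $|\nabla f_*(\vx)|\le |\nabla f_*(\vx_0)|+H|\vx-\vx_0|$ by $H$-smoothness; bounding $|\nabla f_*|$ over the cube using $\ln$-of-the-normalizing-constant / the sub-Gaussian moment bounds on $\tilde p_*$ (this is where the terms $\sigma\sqrt{2d\ln(2d/\epsilon)}$, $d$, and $\sqrt{dm_0}$ entering $l$ come from, with $m_0$ presumably a bound on $|\nabla f_*|$ at a reference point like the mean). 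Integrating the local oscillation bound over all $K^d$ cells (total volume $(2L)^d$, but the $\tilde p_*$ weight makes it an expectation, so the volume factor is absorbed) yields $\TVD{\tilde p_*}{\overline p_*}\le C\cdot H\cdot l\cdot\left(\sigma\sqrt{2d\ln(2d/\epsilon)}+d+\sqrt{dm_0}\right)$; the prescribed choice of $l$ makes this $\le 2\epsilon$.

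\textbf{Main obstacle.} The delicate part is Step 2 — specifically, getting a clean bound on the effective Lipschitz constant of $\tilde p_*$ (equivalently on $\tilde p_*|\nabla f_*|$) uniformly over $\Cb{L}$, and doing the cell-wise aggregation so that the $\tilde p_*$-mass plays the role of a probability weight rather than incurring a volume blowup of order $(2L)^d$. One must be careful that restricting and renormalizing to $\Cb{L}$ only increases the density by a factor $(1-\epsilon)^{-1}\le 2$, so all moment and gradient bounds for $p_*$ transfer to $\tilde p_*$ up to constants. I expect the formal statement simply invokes the two auxiliary lemmas and substitutes; the substance is verifying the hypotheses and tracking the three additive terms in $l$, which I would defer to those lemmas rather than re-derive inline.
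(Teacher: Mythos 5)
Your overall plan — triangle inequality through $\tilde p_*$, sub-Gaussian tail plus union bound for $\TVD{p_*}{\tilde p_*}\le\epsilon$, and a cell-wise smoothness argument for $\TVD{\tilde p_*}{\overline p_*}\le 2\epsilon$ — is exactly the paper's decomposition, and your Step 1 is correct as written. The problem is that the step you label the ``main obstacle'' and defer is precisely the substance of the lemma, and the additive form in which you set it up does not go through by itself. Bounding $|\tilde p_*(\vx)-\overline p_*(\vx)|$ by $\bigl(\sup_{\mathrm{cell}}\tilde p_*\|\grad f_*\|\bigr)\cdot l\sqrt d$ and integrating over $\Cb{L}$ produces the $(2L)^d$ volume factor you worry about; to convert $\sup_{\mathrm{cell}}\tilde p_*$ into $\tilde p_*(\vx)$ times a harmless constant you need multiplicative control of the density within a cell, and that is the argument the paper actually runs: by $H$-smoothness, for $\vx,\vu$ in the same cell, $|f_*(\vx)-f_*(\vu)|\le\bigl(2HL+\|\grad f_*(\vzero)\|\bigr)\sqrt d\,l+\tfrac H2 dl^2\le 2\epsilon$, hence $1-4\epsilon\le\tilde p_*(\vu)/\tilde p_*(\vx)\le 1+4\epsilon$; combined with a mean-value point $\bar\vx$ in each cell where $\tilde p_*(\bar\vx)$ equals the cell average, this gives the pointwise bound $|\overline p_*(\vx)-\tilde p_*(\vx)|\le 4\epsilon\,\tilde p_*(\vx)$, which integrates to $2\epsilon$ with no volume blow-up. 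So the ratio (log-density) bound is not a technicality to be outsourced; it is the missing idea, and once you supply it your additive route collapses into the paper's multiplicative one.

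A second concrete gap: your reading of $m_0$ is wrong. In the paper $m_0$ is the second-moment bound of Assumption~\ref{a1}, not a bound on $\|\grad f_*\|$ at a reference point, and the terms $d+\sqrt{dm_0}$ in the prescribed $l$ do not appear ``presumably'' — they come from a separate estimate, $\|\grad f_*(\vzero)\|^2\le 2Hd+2H^2m_0$, proved via $\E_{p_*}\|\grad f_*\|^2\le Hd$ (Lemma~\ref{lem:lem11_vempala2019rapid}) together with \ref{a1}. Without this ingredient you cannot verify that the stated $l$ satisfies the requirement $l\le\epsilon/\bigl(\sqrt d\,(2HL+\|\grad f_*(\vzero)\|)\bigr)$ needed in Step 2, so the hypotheses of your own quantization bound are not checked for the specific $L,l$ in the lemma statement.
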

We defer the proof to Appendix~\ref{sec:discrete_quantization_gap}.
Under this condition, we have constructed a histogram-like distribution to approximate $p_*$, which can be visualized by Fig.~\ref{fig:DisPro_1}.

% \yl{another quantized distribution, easy to sample from, almost the same as the old one using Lagrange mean value theorem:}

\subsection{Binary Encoding of the Discrete Space}
\label{sec:binary_encode}
While direct discretization via grid quantization is natural, it suffers from exponentially increasing connectivity, which increases the complexity of transition probability calculation. To address this, we introduce a binary encoding scheme that allows efficient long-distance transitions in Euclidean space with only $\mathcal{O}(d \log K)$ neighbors per state.
Recall from Eq.~\eqref{def:quantize_cell} that distribution $\overline{p}_*$ remains defined on $\mathbb{R}^d$. However, due to the histogram shape, it can be sampled by introducing a discrete distribution $\overline{q}_*$, which is defined as
\begin{equation}
    \label{def:quantized_mass_func}
    \overline{q}_*(\vy) \propto \overline{p}_*(-L\cdot \vone + l\cdot (\vy - 0.5\cdot \vone)),\quad \text{where}\quad \vy\in\{0,1,\ldots, K-1\}^d.
\end{equation}
This means that we integrate all points of each cell into a discrete state whose probability mass function is proportional to the probability density at the midpoint of $\Ce{\vy}$.
% Given the mapping between the discrete state index $\vy$, we can recover the the midpoint of $\Ce{\vy}$ with 
% \begin{equation}
%     \label{eq:discrete_to_continous_mapping}
%      -L \cdot \vone + l\cdot (\vy-0.5\cdot \vone) \in\mathbb{R}^d.
% \end{equation}
Consequently, sampling from $\overline{p}$ reduces to the following two-stage procedure:
\begin{enumerate}
    \item Sample from the discrete distribution $\overline{q}_*$ defined on $\overline{\gY}$;
    \item Uniformly draw a sample from the cell, i.e., $\Ce{\vy}$.
\end{enumerate}
Then, we can obtain samples from $\overline{p}_*$ that are arbitrarily close to $p_*$.
From the diffusion modeling perspective, the remaining challenge is how to parameterize $\overline{q}_*$.

In continuous diffusion models, the score function is typically modeled via a neural network trained to estimate gradients of the log density under noised distributions. Importantly, both the forward noise process and the reverse inference process are governed by the adjacency structure of the particle space.
Specifically, consider an Ornstein--Uhlenbeck (OU) process starting from $p^\to_0 = p_*$. The forward transition kernel is
\begin{equation*}
    p^\to_{t^\prime|t}(\cdot|\vx) = \mathcal{N}\left(e^{-(t^\prime - t)}\cdot \vx , 1-e^{-2(t^\prime - t)}\right).
\end{equation*}
This implies that over an infinitesimal time, the particle $\rvx_t = \vx$ will, with high probability, move to a nearby point $\vx^\prime$ with a small $\|\vx^\prime-\vx\|_2$. Thus, the $L_2$ metric defines the natural adjacency structure in the Euclidean space.
Moreover, this adjacency structure also governs the diffusion inference process. In the reverse OU process, the range of possible next states for a particle is constrained by which states could have transitioned into that particle in the forward OU process.
Because particles are most likely to move to states that are close in terms of the $L_2$ norm, it becomes difficult for the reverse process to make long-range jumps within an infinitesimal time.

However, for a discrete space, such as $\overline{\gY}$, we can use the Hamming distance, i.e.,
\begin{equation*}
    \mathrm{Ham}(\vy,\vy^\prime) = \left|\left\{i | \vy_i\not=\vy^\prime_i\right\}\right|
\end{equation*}
to describe the adjacency structure. 
Two states, e.g., $\vy, \vy^\prime\in\overline{\gY}$, are considered as adjacent only when $\mathrm{Ham}(\vy,\vy^\prime)\le 1$.
Under this condition, we are able to jump from $\vy=(0,\ldots, 0)$ to $\vy^\prime = (K-1, 0, \ldots, 0,)$ in a single step.
When mapped back to Euclidean space, such a jump allows the particle to transit from $(L, -L,\ldots, -L)$ to $(-L, -L,\ldots, -L)$, traversing an entire edge of the cube $\Cb{L}$.  While such long-range jumps are permitted, each discrete state in $\overline{\gY}$ has $\mathcal{O}(d \cdot 2K)$ neighbors, which undermines the sampling efficiency in the reverse process.

To trade off the jump distance and the number of out-degrees of discrete states, we propose a binary encoding scheme for the discrete states in each dimension.
Specifically, assuming $\log_2 K$ is an integer (without loss of generality), we encode the $d$--dimensional state 
\begin{equation*}
    \overline{\vy} = [\overline{\vy}_0, \overline{\vy}_1, \ldots \overline{\vy}_{d-1}] \in \overline{\gY}
\end{equation*}
into $\vy\in \gY \coloneqq \{0,1\}^{d\log_2K}$ by the following one-one mapping
\begin{equation*}
    \vy = [\vy_0, \vy_1, \ldots \vy_{d\log_2K-1}]\quad \text{where}\quad \vy_i = \lfloor \overline{\vy}_{\lfloor i/\log_2K \rfloor} / 2^{i - {\lfloor i/\log_2K \rfloor} } \rfloor\  \mathrm{mod}\  2,
\end{equation*}
and abbreviate this mapping as $\mathrm{vBin}\colon \overline{\gY}\rightarrow \gY$.
With this encoding, drawing samples from $\overline{q}_*$ on $\overline{\gY}$ is equivalent to sampling from $q_*$ on $\gY$, where $q_*(\mathrm{vBin}(\vy)) \coloneqq \overline{q}_*(\vy)$.
We then impose an adjacency structure on $\gY$ using Hamming distance, and require $\mathrm{Ham}(\vy,\vy^\prime)\le 1$ for states $\vy,\vy^\prime\in \gY$.
Then the number of jumps between the following two discrete states:
\begin{equation*}
    \vy = [\underbrace{0, 0,\ldots,0}_{d\log_2K}]\quad \text{and}\quad \vy^\prime = [\underbrace{1,1,\ldots,1}_{\log_2K},0,\ldots, 0]
\end{equation*}
will be $\log_2K$ only. 
When mapped back to Euclidean space, this again corresponds to a transition from $(L, -L, \ldots, -L)$ to $(-L, -L, \ldots, -L)$—a long-range jump, but now each binary state $\vy \in \gY$ has only $d\log_2 K$ adjacent states, offering a dramatic reduction in connectivity compared to the original discrete grid.
We visualize the differences among adjacency structures in Fig.~\ref{fig:adj_diff_space}.

\begin{algorithm}[t]
    \caption{\sc Training Data Quantization}
    \label{alg:data_quanta}
    \begin{algorithmic}[1]
            \STATE {\bfseries Input:}  The training set $\gX=\{\vx^{(i)}\}_{i=1}^N$.
            \STATE Initialize output set $\gY=\{\}$ and the parameters, e.g., $L$ and $l$ as shown in Lemma~\ref{lem:discrete_quantization_gap}.
            \FOR{$n = 1$ {\bfseries to} $N$}
                \STATE Quantize the training sample $\vx^{(n)}$ to $\overline{\vy}^{(n)}$ via
                \begin{equation*}
                    \overline{\vy}^{(n)} = [\overline{\vy}^{(n)}_0, \overline{\vy}^{(n)}_1,\ldots, \overline{\vy}^{(n)}_{d-1}] \quad \text{where}\quad \overline{\vy}_i^{(n)} = \lfloor (\vx_i^{(n)}+L)/l \rfloor.
                \end{equation*}
                \STATE Append the set $\gY$ with binary encoded $\vy^{(n)} = \mathrm{vBin}(\overline{\vy}^{(n)})$ where $\vy^{(n)}\in \{0,1\}^{d\log_2K}$.
            \ENDFOR
            \STATE {\bfseries return} $\gY$.
    \end{algorithmic}
\end{algorithm}

\paragraph{The forward CTMC starting from $q_*$.}
Analogous to the role of the Ornstein–Uhlenbeck (OU) process in Euclidean space—which gradually injects noise into $p_*$ to reach a tractable distribution, we aim to design a forward process that transforms $q_*$ into an easy-to-sample distribution, while fully exploiting the balance between long-range transitions and controlled neighborhood size provided by the binary encoding scheme.
In practice, we begin by generating discrete samples $\rvy\sim q_*$ corresponding to data samples $\rvx\sim p_*$ from the training set.
The specific algorithm is given in Alg.~\ref{alg:data_quanta}.
Once $q_*$ is established, we follow from Eq.~\eqref{eq:fwd_func} to construct transition rate function as
\begin{equation}
    \label{eq:fwd_transtion_rate_max}
    R^\to(\vy, \vy^\prime) = \left\{
        \begin{aligned}
            & 1 && \mathrm{Ham}(\vy,\vy^\prime) = 1\\
            & -d\log_2K && \vy=\vy^\prime\\
            & 0 && \text{otherwise}
        \end{aligned}.
    \right. 
\end{equation}
This choice defines a simple and symmetric CTMC where each state has exactly $d\log_2K$ neighbors, each reachable at a unit rate. As a result, the forward process behaves like a time-homogeneous diffusion over the hypercube, and converges linearly to the stationary distribution $q^\to_\infty$ following Lemma~\ref{lem:fwd_convergence}. The proof is deferred to Appendix~\ref{sec:app_dis_fwd_conv}.
\begin{lemma}
    \label{lem:fwd_convergence}
    Suppose the transition rate function $R^\to$ of CTMC $\{\rvy^\to_t\}_{t=0}^{T}$ is set as Eq.~\eqref{eq:fwd_transtion_rate_max}, the underlying distribution $q^\to_t$ of $\rvy^\to_t$ satisfies
    \begin{equation*}
        \KL{q^\to_t}{q^\to_\infty}\le e^{-t}\cdot d\log_2K.
    \end{equation*}
\end{lemma}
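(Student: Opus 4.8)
The plan is to analyze the KL divergence along the forward CTMC by differentiating it in time and showing that it decays exponentially, which is exactly the classical argument that the relative entropy with respect to the stationary distribution contracts at a rate governed by the spectral gap (or, more elementarily, by a modified log-Sobolev type inequality). First I would identify the stationary distribution: since $R^\to$ in Eq.~\eqref{eq:fwd_transtion_rate_max} is symmetric, i.e. $R^\to(\vy,\vy^\prime) = R^\to(\vy^\prime,\vy)$ for $\vy \neq \vy^\prime$, the uniform distribution on $\gY = \{0,1\}^{d\log_2 K}$ is stationary, so $q^\to_\infty$ is uniform and $\KL{q^\to_0}{q^\to_\infty} = \ln|\gY| - \ent(q_*) \le \ln|\gY| = d\log_2 K \cdot \ln 2 \le d\log_2 K$, which already gives the claimed bound at $t=0$ and fixes the constant on the right-hand side.

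Next I would compute $\frac{\der}{\der t}\KL{q^\to_t}{q^\to_\infty}$. Writing $h_t(\vy) = q^\to_t(\vy)/q^\to_\infty(\vy)$ and using Eq.~\eqref{eq:fwd_func}, a direct calculation gives the entropy-production (Dirichlet-form) identity
\begin{equation*}
    \frac{\der}{\der t}\KL{q^\to_t}{q^\to_\infty} = -\sum_{\vy\neq\vy^\prime} R^\to(\vy,\vy^\prime)\, q^\to_\infty(\vy^\prime)\, \bigl(h_t(\vy) - h_t(\vy^\prime)\bigr)\bigl(\ln h_t(\vy) - \ln h_t(\vy^\prime)\bigr),
\end{equation*}
which is manifestly nonpositive since each summand has $(a-b)(\ln a - \ln b)\ge 0$. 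The structure of $R^\to$ means this sum runs exactly over Hamming-adjacent pairs with unit rate, so the right-hand side is (up to the normalization $q^\to_\infty(\vy^\prime) = |\gY|^{-1}$) the standard Dirichlet form of the continuous-time random walk on the hypercube $\{0,1\}^{d\log_2 K}$ driven by single-bit flips.

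The key inequality I then need is that this Dirichlet form dominates the KL divergence: for the hypercube with single-site flip dynamics the modified log-Sobolev constant is $1$ (it tensorizes over the $d\log_2 K$ coordinates, each a two-point space with flip rate $1$, whose modified log-Sobolev constant is $1$), which yields
\begin{equation*}
    \frac{\der}{\der t}\KL{q^\to_t}{q^\to_\infty} \le -\KL{q^\to_t}{q^\to_\infty}.
\end{equation*}
Grönwall's inequality then gives $\KL{q^\to_t}{q^\to_\infty}\le e^{-t}\KL{q^\to_0}{q^\to_\infty}\le e^{-t}\, d\log_2 K$, as desired. The main obstacle is establishing the modified log-Sobolev inequality with the sharp constant $1$: I would either cite the known tensorization of modified log-Sobolev inequalities for product chains together with the two-point computation, or, to keep the paper self-contained, reduce to the two-point case by the product structure of both $R^\to$ (a sum of single-coordinate generators) and the uniform stationary measure, and verify the two-point inequality by an elementary one-variable calculus argument. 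Everything else — the stationarity of the uniform measure, the entropy-production identity, and the Grönwall step — is routine.
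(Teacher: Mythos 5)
Your proposal is correct and follows essentially the same route as the paper: identify the uniform distribution as stationary, derive the entropy-production (Dirichlet-form) identity for $\frac{\der}{\der t}\KL{q^\to_t}{q^\to_\infty}$, establish a modified log-Sobolev inequality with constant giving decay rate $1$ by tensorizing over the $d\log_2 K$ coordinates (the paper does this via the sub-additivity of entropy and a per-coordinate concavity argument), apply Gr\"onwall, and bound the initial divergence by $\ln|\gY| = d\log_2 K\cdot\ln 2 \le d\log_2 K$. The only minor quibble is calling the constant $1$ ``sharp''---the true modified log-Sobolev constant of the single-bit-flip hypercube dynamics is larger---but the inequality with constant $1$ is all that is needed and is exactly what the paper proves.
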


\subsection{Truncated Uniformization}
\label{sec:truncted_unif}

As shown in Section~\ref{sec:pre}, the complexity of simulating a CTMC via uniformization is closely tied to the upper bound on the total transition rate to states other than the current one—denoted by 
$\beta$ in Eq.~\eqref{ineq:trans_rate_upb}. A tighter upper bound on this rate improves efficiency.
This observation motivates us to explore the time-dependent $\beta_t$ for the time-inhomogeneous reverse CTMC given by Eq.~\eqref{eq:rev_func}. 
Specifically, if we choose the transition rate function as in Eq.~\eqref{eq:fwd_transtion_rate_max} for the forward CTMC in Eq.~\eqref{eq:rev_func}, the resulting time-varying upper bound $\beta_t$ satisfies the lemma below. The proof is deferred to Appendix~\ref{sec:out_degree_rate_wrt_time}.
\begin{lemma}
    \label{lem:out_degree_rate_wrt_time}
    Consider a CTMC whose transition rate function $R^\to$ is defined as Eq.~\eqref{eq:fwd_transtion_rate_max}. Then, for any $\vy$, the reverse transition rate function satisfies
    \begin{equation}
        \label{def:beta_t_}
        \sum_{\vy^\prime\not=\vy}R^\gets_t(\vy^\prime,\vy) \coloneqq R^\gets_t(\vy)\le \beta_t\coloneqq  (2d\log_2K)\cdot \max\{1, (T-t)^{-1}\}.
    \end{equation}
\end{lemma}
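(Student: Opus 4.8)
## Proof Proposal for Lemma~\ref{lem:out_degree_rate_wrt_time}

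The plan is to bound $R^\gets_t(\vy) = \sum_{\vy'\neq\vy} R^\to(\vy',\vy)\cdot q^\gets_t(\vy')/q^\gets_t(\vy)$ directly from the definition in Eq.~\eqref{eq:rev_func}. Since $R^\to(\vy',\vy)=1$ exactly when $\mathrm{Ham}(\vy',\vy)=1$ and is $0$ for all other $\vy'\neq\vy$, the sum collapses to $R^\gets_t(\vy) = \sum_{\vy'\colon \mathrm{Ham}(\vy',\vy)=1} q^\gets_t(\vy')/q^\gets_t(\vy)$, a sum over exactly $d\log_2 K$ Hamming-neighbors. So everything reduces to controlling the marginal density ratio $q^\gets_t(\vy')/q^\gets_t(\vy) = q^\to_{T-t}(\vy')/q^\to_{T-t}(\vy)$ between adjacent states, uniformly in $\vy$ and its neighbor $\vy'$.

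To control that ratio I would exploit the explicit product structure of the forward CTMC. Because $R^\to$ acts independently on each of the $n\coloneqq d\log_2 K$ binary coordinates (it flips one bit at unit rate), the forward process factorizes: $q^\to_s(\vy) = \sum_{\vz} q_*(\vz)\prod_{i=1}^{n} \kappa_s(\vy_i\mid \vz_i)$, where $\kappa_s$ is the transition kernel of a single continuous-time two-state (symmetric) chain, namely $\kappa_s(a\mid a) = \tfrac12(1+e^{-2s})$ and $\kappa_s(a\mid b) = \tfrac12(1-e^{-2s})$ for $a\neq b$. Writing $\rho_s \coloneqq \tfrac{1}{2}(1-e^{-2s})$ for the per-bit flip probability, the kernel ratio for a single coordinate, comparing $\vy$ and $\vy'$ differing only in bit $j$, is $\kappa_s(\vy'_j\mid\vz_j)/\kappa_s(\vy_j\mid\vz_j) \in \{\rho_s/(1-\rho_s),\,(1-\rho_s)/\rho_s\}$; in particular this ratio is at most $(1-\rho_s)/\rho_s$ and at least $\rho_s/(1-\rho_s)$ pointwise in $\vz$. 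Since $q^\to_s(\vy)$ and $q^\to_s(\vy')$ are mixtures (over $\vz\sim q_*$) of the coordinate-product kernels, and the two mixtures differ only in the bit-$j$ factor with all other factors identical, a term-by-term comparison of the mixtures gives
\begin{equation*}
    \frac{q^\to_s(\vy')}{q^\to_s(\vy)} \le \frac{1-\rho_s}{\rho_s} = \frac{1+e^{-2s}}{1-e^{-2s}}.
\end{equation*}

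With $s = T-t$, summing over the $n=d\log_2 K$ Hamming-neighbors yields $R^\gets_t(\vy)\le d\log_2 K \cdot \tfrac{1+e^{-2(T-t)}}{1-e^{-2(T-t)}}$. It then remains to verify the clean two-regime bound claimed: when $T-t\ge \tfrac12$ (say), $\tfrac{1+e^{-2(T-t)}}{1-e^{-2(T-t)}}$ is a bounded absolute constant $\le 2$, matching the ``$\max\{1,\cdot\}=1$'' branch up to the stated factor of $2$; when $T-t$ is small, the elementary estimate $1-e^{-2u}\ge u$ for $u\in[0,1]$ (together with $1+e^{-2u}\le 2$) gives $\tfrac{1+e^{-2(T-t)}}{1-e^{-2(T-t)}}\le 2/(T-t) = 2\max\{1,(T-t)^{-1}\}$ in that range, and combining the two regimes gives the uniform bound $\beta_t = 2d\log_2 K\cdot\max\{1,(T-t)^{-1}\}$. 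I would present this final step as a short calculus inequality check.

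The main obstacle is the mixture-ratio comparison: one must be careful that $\tfrac{\sum_\vz a_\vz c_\vz}{\sum_\vz b_\vz c_\vz}\le \max_\vz \tfrac{a_\vz}{b_\vz}$ genuinely applies here. Concretely, with $c_\vz \coloneqq q_*(\vz)\prod_{i\neq j}\kappa_s(\vy_i\mid\vz_i)\ge 0$, $a_\vz\coloneqq \kappa_s(\vy'_j\mid\vz_j)$ and $b_\vz\coloneqq\kappa_s(\vy_j\mid\vz_j)$, the bound $a_\vz/b_\vz\le(1-\rho_s)/\rho_s$ holds for \emph{every} $\vz$ because a single symmetric two-state kernel has its off-diagonal-to-diagonal (and diagonal-to-off-diagonal) entry ratio bounded by $(1-\rho_s)/\rho_s$ regardless of the source bit $\vz_j$. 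Once this pointwise bound is in hand the weighted-average inequality is immediate, so the argument is clean; the care is just in setting up the factorization and making sure the "all other factors identical" cancellation is stated correctly. An alternative, fully equivalent route avoids the mixture bound entirely by noting the single-site detailed-balance identity and the fact that $q^\to_s$ restricted to any coordinate, conditioned on the rest, is a convex combination pulling each bit toward $\mathrm{Bernoulli}(1/2)$; but the mixture comparison above is the most direct.
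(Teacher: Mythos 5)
Your proposal follows essentially the same route as the paper's proof: reduce $R^\gets_t(\vy)$ to a sum over the $d\log_2K$ Hamming neighbors of the marginal ratio $q^\to_{T-t}(\vy\oplus\ve_i)/q^\to_{T-t}(\vy)$, write each marginal as a mixture over the initial state using the factorized single-bit-flip kernel, and bound the ratio pointwise by $\tfrac{1+e^{-2s}}{1-e^{-2s}}$ (the paper phrases your weighted-average comparison as an expectation over the posterior $q^\to_{0|t}(\cdot\,|\,\vy)$, which is the same argument). The only blemish is the final calculus check: the inequality $1-e^{-2u}\ge u$ actually fails on roughly $u\in(0.8,1]$, and the ratio exceeds $2$ for $u$ slightly above $\tfrac12$ (it is $\le 2$ only for $u\ge \tfrac{\ln 3}{2}\approx 0.55$), so your stated regime thresholds need adjusting; the conclusion is still true, and the paper's cleaner route is $\tfrac{1+e^{-2u}}{1-e^{-2u}}\le 1+u^{-1}$ (equivalent to $e^{2u}\ge 1+2u$), followed by $1+u^{-1}\le 2\max\{1,u^{-1}\}$.
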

Therefore, it is important for us to divide the entire reverse process into $W$ segments. 
With a proper segmentation, we can assign a tight upper bound $\beta_{t_w}$ for $R^\gets_t(\vy)$ when $t\in[t_{w-1},t_{w})$ and minimize the expectation of transition events, given by $\sum_w \beta_{t_w}\cdot (t_{w}-t_{w-1})$.
In practice, since the exact form $R^\gets_t$ is intractable, we approximate it by minimizing Eq.~\eqref{eq:score_estimation}:
\begin{equation*}
    R^\gets_t(\vy,\vy^\prime)\approx \tilde{R}_{t}(\vy,\vy^\prime) = R^\to(\vy^\prime, \vy)\cdot \tilde{v}_{t,\vy^\prime}(\vy).
\end{equation*}
Here, $\tilde{v}_{t,\vy^\prime}(\vy)$ can approximate the ideal density ratio $q^\gets_t(\vy)/q^\gets_t(\vy^\prime)$ with high accuracy.
However, this approximation may violate the desired global rate bound in Lemma~\ref{lem:out_degree_rate_wrt_time}. To address this, prior work~\cite{chen2024convergence} imposes an estimated score boundedness assumption for discrete diffusion inference:
\begin{equation}
    \label{ineq:estimated_score_bound}
    \sum_{\vy\not=\vy^\prime}\tilde{R}_t(\vy,\vy^\prime)\le Cd\log_2K\cdot \max\{1, (T-t)^{-1}\}
\end{equation}
We argue that this assumption can be safely removed by truncating the approximate transition rate function as follows:
\begin{equation}
    \label{def:prac_infi_oper_1}
    \hat{R}_t(\vy,\vy^\prime) = \left\{
        \begin{aligned}
            & \tilde{R}_t(\vy, \vy^\prime)\cdot \beta_t/\tilde{R}_t(\vy^\prime) && \tilde{R}_t(\vy^\prime)>\beta_t\\
            & \tilde{R}_t(\vy, \vy^\prime) && \text{otherwise}.
        \end{aligned}
    \right., \quad \forall \vy^\prime\not=\vy,
\end{equation}
and
\begin{equation}
    \label{def:prac_infi_oper_2}
    \hat{R}_t(\vy^\prime,\vy^\prime) = -\sum_{\vy\not=\vy^\prime}\hat{R}_t(\vy,\vy^\prime).
\end{equation}
It ensures that the total outgoing rate from any state does not exceed $\beta_t$, hence eliminating the need for explicit score bounds. Combining $\hat{R}_t$ with the two-step uniformization mentioned in Section~\ref{sec:pre}, we obtain a practical and efficient inference algorithm, summarized in  Alg.~\ref{alg:uni_inf}. Here, $\ve_i$ denotes the one-hot vector with a $1$ at position $i$ and $0$ elsewhere, and $\mathrm{mod}$ is an element-wise operator.

\begin{algorithm}[t]
    \caption{\sc Inference Process with Truncated Uniformization}
    \label{alg:uni_inf}
    \begin{algorithmic}[1]
            \STATE {\bfseries Input:}  Total time $T$, a time partition $0=t_0<\ldots<t_W=T-\delta$, parameters $\beta_{t_1}, \ldots, \beta_{t_W}$ set as Eq.~\eqref{def:beta_t_}, a reverse transition rate function $\hat{R}^\gets_t$ obtained by the learnt score function $\tilde{v}_{t,\vy^\prime}(\cdot)$.
            \STATE Draw an initial sample $\hat{\rvy}_{t_0} \sim \mathrm{Uniform}(\{0,1\}^{d\log_2K})$. \label{step:infer_with_trunc_uni_start}
            \FOR{$w = 1$ {\bfseries to} $W$}
                \STATE Draw $N\sim \mathrm{Poisson}(\beta_{t_w} (t_w-t_{w-1}))$;
                \STATE Sample $N$ points i.i.d. uniformly from $[t_{w-1}, t_w]$ and sort them as $\tau_1<\tau_2<\ldots<\tau_N$;
                \STATE Set $\rvz_0 = \hat{\rvy}_{t_{w-1}}$;
                \FOR{$n = 1$ {\bfseries to} $N$}
                    \STATE Set 
                    \begin{equation*}
                        \rvz_{n} = \left\{
                            \begin{aligned}
                                & (\rvz_{n-1} + \ve_i) \ \mathrm{mod}\ 2, && w.p.\ \beta_{t_w}^{-1}\cdot \hat{R}^\gets_{\tau_n}(\vz_{n-1}+\ve_i, \vz_{n-1}),\quad 0\le i\le d\log_2K -1\\
                                & \rvz_{n-1}, && w.p.\ 1- \beta_{t_w}^{-1}\cdot \hat{R}^\gets_{\tau_n}(\vz_{n-1}).
                            \end{aligned}
                        \right.
                    \end{equation*}
                    %\yl{maybe link to $\ve_i$'s definition?}
                \ENDFOR
                \STATE Set $\hat{\rvy}_{t_w} = \rvz_{N}$. \label{step:approximate_discrte_sample}
            \ENDFOR
            \STATE Recover the cell index with $\overline{\rvy}=\mathrm{vBin}^{-1}(\hat{\rvy}_{t_W})$ and uniformly draw a sample $\hat{\rvx}$ from $\Ce{\overline{\rvy}}$.\label{step:from_discrete_to_continuous}
            \STATE {\bfseries return} $\hat{\rvy}_{t_W}$.
    \end{algorithmic}
\end{algorithm}

\section{Theoretical Results}
In this section, we begin by introducing a set of commonly used assumptions for analyzing the inference efficiency of diffusion models. Next, we show that the total variation (TV) distance between the generated and target data distributions decays exponentially under Alg.~\ref{alg:uni_inf}. Finally, we compare the proposed truncated uniformization scheme with alternative discrete inference algorithms, demonstrating its significant advantages.

\noindent\textbf{General Assumptions.}
To analyze convergence and the gradient complexity required to achieve TV distance convergence, we make the following assumptions on $p_*$:
\begin{enumerate}[label=\textbf{[A{\arabic*}]}]
    \item \label{a1} The second moment of $p_*$ is bounded, i.e., $\E_{\rvx\sim p_*}[\|\rvx\|^2]\le m_0$.

    \item \label{a2} The energy function of $p_*$ has bounded Hessian, i.e., $\|\grad^2 \ln p_*\|\le H$.

    \item \label{a3} For any $\vu\in\R^d$, there is a scalar sub-Gaussian tail, i.e.,
    \begin{equation*}
        \E_{\rvx\sim p_*}\left[\exp\left(t\cdot \rvx^\top \vu\right)\right]\le \exp\left(\sigma^2 t^2 \|\vu\|^2/2\right).
    \end{equation*}

    \item \label{a4} Quantize the continuous training set $\gX$ into a discrete one $\gY$ by Alg.~\ref{alg:data_quanta}, and train the discrete score $\tilde{v}_t$ by Eq.~\eqref{eq:score_estimation}, the score estimation error is sufficiently small, i.e., $L_{\text{SE}}(\hat{v})\le \epsilon^2_{\text{score}}$.
\end{enumerate}
Assumptions~\ref{a1} and~\ref{a2} constitute the minimal smoothness conditions proposed in~\cite{chen2023improved}. As noted, Assumption~\ref{a2} can often be circumvented using early stopping trick ~\cite{chen2023improved,bentonnearly}.
It also appears in state-of-the-art convergence analyses such as~\cite{li2024d}.
% Unlike some work on typical or accelerated diffusion inference (e.g.,~\cite{chen2022sampling,huang2024reverse}), these minimal smoothness assumptions do not require the noised target distributions to be smooth, which is challenging to verify for general targets $p_*$~\cite{he2025query}.
Although our analysis additionally calls for a light-tailed assumption, it does not impose isoperimetric constraints, and $p_*$ need not be log-concave or unimodal.
Under this condition, Assumption~\ref{a3}, the $\sigma$ sub-Gaussian property, is introduced solely for providing clear convergence. A similar result can be achieved by any distribution with an exponential tail.
Assumption~\ref{a4} is a standard assumption widely used in recent works~\cite{zhang2024convergence,chen2024convergence} to study discrete score estimation error.
Crucially, our analysis does not impose any smoothness or boundedness assumption on the intermediate estimated scores $\tilde{v}_t$. 
We argue that our analysis achieves the minimal score assumption.

\begin{table*}[t]
    \centering
    \caption{\small Comparison with prior works simulating reverse particle SDEs, where \textcolor{red}{\bf[A4]'} denotes the score estimation error trained in Euclidean space and \textcolor{red}{\bf smooth score} denotes the smooth score assumption for the whole OU process ($p_t$) starting from $p_*$. Note that Assumptions~\ref{a2} is only about $p_*$ and can be replaced by the early stopping trick. All complexities for TV convergence are achieved by assuming $\epsilon_{\text{score}} = \tilde{o}(\epsilon)$.} 
    \begin{tabular}{ccccc}
    \toprule
     Results & Algorithm & Assumptions &  Complexity (for TV) \\
     \midrule
    \cite{chensampling} & DDPM & {\ref{a1}},  
    % \textcolor{blue}{\bf[A5]}  
    \textcolor{red}{\bf smooth score}, \textcolor{red}{\bf[A4]'} 
    & $\tilde{\mathcal{O}}(d\epsilon^{-2})$\\
     \midrule
     \cite{chen2023improved} & DDPM & \ref{a1}, \ref{a2}, \textcolor{red}{\bf[A4]'}  & $\tilde{\mathcal{O}}(d^2\epsilon^{-2})$\\
    % \midrule
%     \cite{chen2023improved} & DDPM & \ref{a1}, \textcolor{blue}{\bf[A4]'}  & $\tilde{\mathcal{O}}(d^2\epsilon^{-2})$\\
     \midrule
     \cite{bentonnearly} & DDPM & \ref{a1}, \ref{a2}, \textcolor{red}{\bf[A4]'}  & $\tilde{\mathcal{O}}(d\epsilon^{-2})$\\
     %\midrule
%     \cite{bentonnearly} & DDPM & \ref{a1}, \textcolor{blue}{\bf[A4]'}  & $\tilde{\mathcal{O}}(d\epsilon^{-2})$\\
     \midrule
     \cite{li2024d} & DDPM & \ref{a1}, \ref{a2}, \textcolor{red}{\bf[A4]'}  & $\tilde{\mathcal{O}}(d\epsilon^{-1})$\\
     \midrule
     \cite{huang2024reverse} & RTK-ULD & \ref{a1},  
     % \textcolor{blue}{\bf[A5]}  
     \textcolor{red}{\bf smooth score}, \textcolor{red}{\bf[A4]'} 
     & $\tilde{\mathcal{O}}(d^{1/2}\epsilon^{-1})$\\
     \midrule
     \cite{li2024provable} & MidPoint-DDPM & \ref{a1}, \ref{a2}, \textcolor{red}{\bf[A4]'}  & $\tilde{\mathcal{O}}(d^{5/4}\epsilon^{-1/2})$\\
     \midrule
     This paper & \ourmethod & 
     \ref{a1}, \ref{a2}, \ref{a3}, \ref{a4},
 & \textcolor{red}{$\tilde{\mathcal{O}}(d)$}\\
     \bottomrule 
    \end{tabular}
    % \caption{Comparison with prior works simulating reverse particle SDEs, where \textcolor{blue}{\bf[A5]} denotes the smooth score assumption for the OU process starting from $p_*$ and \textcolor{blue}{\bf[A4]'} denotes the score estimation error trained in Euclidean space. For the assumptions, \ref{a2} can be easily replaced by the early stopping trick. All complexities for TV convergence are achieved by assuming $\epsilon_{\text{score}} = \tilde{o}(\epsilon)$.} 
    \label{tab:comp_old}
    % \vspace{-14pt}
\end{table*}
Under these assumptions, we establish the following theorem, with the proof deferred to Appendix~\ref{sec:proof_main_thm}.
\begin{theorem}
    \label{thm:main_thm}
    Suppose Assumption~\ref{a1}--\ref{a4} hold, if we introduce Alg.~\ref{alg:data_quanta} with
    \begin{equation*}
        L = \sigma\cdot \sqrt{2\ln (2d/\epsilon)} \quad  l= \left[2H\cdot \left(\sigma\sqrt{2d\ln(2d/\epsilon)} + d + \sqrt{dm_0}\right)\right]^{-1} \cdot \epsilon\quad \text{and}\quad K=2L/l.
    \end{equation*}
    to quantize $p_*$, train a discrete diffusion model to satisfy $
        \epsilon_{\text{score}} \le \frac{\epsilon}{\ln(d/\epsilon) + \ln\log_2 K} = \tilde{O}(\epsilon)$,  and implement 
    Alg.~\ref{alg:uni_inf} with $
        t_0 = 0, \quad  t_{w+1} - t_w = 0.5\cdot (T-t_{w+1}),\quad t_W= T-\delta$,
    and $
        \beta_{t_w}\coloneqq 2d \log_2K/{\min\{1, T-t_w\}}$,
    where
    \begin{equation*}
        T = \ln(d/\epsilon) + \ln\log_2K \quad \text{and}\quad \delta\le d^{-1}\epsilon\cdot [\log_2K]^{-1}
    \end{equation*}    
    the expectation of iteration/score estimation complexity of Alg.~\ref{alg:uni_inf} will be $O(d\ln^2(d/\epsilon))$ to achieve $\TVD{p_*}{\hat{p}}\le 5\epsilon$ where $\hat{p}$ denotes the underlying distribution of generated samples.
\end{theorem}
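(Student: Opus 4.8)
\textbf{Proof proposal for Theorem~\ref{thm:main_thm}.}

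The plan is to decompose the total variation error $\TVD{p_*}{\hat{p}}$ into three conceptually separate contributions and bound each one, then separately control the expected number of score evaluations. First, I would write
\[
\TVD{p_*}{\hat p} \le \TVD{p_*}{\overline{p}_*} + \TVD{\overline{p}_*}{\text{(law of exact reverse CTMC run to time }T-\delta\text{)}} + \text{(simulation/score error)},
\]
where the first term is the quantization gap, the second is the gap from stopping the reverse process early at time $T-\delta$ rather than running it to completion (equivalently, the forward process not having fully mixed by time $T$), and the third collects the error from using the truncated, learned rate $\hat R_t$ instead of the true $R^\gets_t$ in Alg.~\ref{alg:uni_inf}. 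The first term is $\le 3\epsilon$ immediately by Lemma~\ref{lem:discrete_quantization_gap} with the stated choices of $L$ and $l$. For the early-stopping term, I would use Lemma~\ref{lem:fwd_convergence}: the reverse process is initialized at the uniform distribution $q^\to_\infty$ rather than at $q^\to_{T}$, and $\KL{q^\to_T}{q^\to_\infty}\le e^{-T}\cdot d\log_2 K$; by the data-processing inequality for KL under the reverse Markov kernel and Pinsker's inequality, this contributes $\lesssim \sqrt{e^{-T} d\log_2 K}$, which with $T=\ln(d/\epsilon)+\ln\log_2 K$ is $O(\epsilon)$. The terminal time $\delta$ handles the final snap to a clean discrete sample; since $\delta \le d^{-1}\epsilon[\log_2 K]^{-1}$ and the total exit rate near $t=T$ is of order $d\log_2K/\delta$... actually one must be careful here — I would instead argue that the distribution at time $T-\delta$ of the exact reverse process is already $O(\epsilon)$-close in TV to $q_*$ because $q^\gets_{T-\delta} = q^\to_{\delta}$ and by Lemma~\ref{lem:fwd_convergence} applied in the forward direction together with the fact that the forward generator has bounded rates, $\TVD{q^\to_\delta}{q^\to_0}\lesssim \delta\cdot d\log_2 K \le \epsilon$.

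The heart of the argument is the third term, and here I would use the announced novel technique: the chain rule of KL divergence over infinitesimal intervals applied to the path measures of the exact reverse CTMC and of the simulated process. Let $\mathbb{Q}$ denote the path law on $[0,T-\delta]$ of the exact reverse CTMC (with rates $R^\gets_t$, started from $q^\gets_0 = q^\to_T$) and $\hat{\mathbb{Q}}$ the path law of Alg.~\ref{alg:uni_inf} (with rates $\hat R_t$, started from uniform). Since uniformization produces an \emph{exact} sample of the CTMC with generator $\hat R_t$ (this is the content of Eq.~\eqref{eq:prac_infini_ope} and the surrounding discussion), $\hat{\mathbb{Q}}$ is literally the path law of a CTMC with time-inhomogeneous rate matrix $\hat R_t$ — there is no additional discretization bias, which is the whole point of truncated uniformization. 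Therefore $\KL{\mathbb{Q}}{\hat{\mathbb{Q}}}$ is a pure relative-entropy-between-CTMCs quantity. I would compute it via the standard formula: up to the initialization term $\KL{q^\to_T}{\mathrm{Unif}}$ (already controlled above),
\[
\KL{\mathbb{Q}}{\hat{\mathbb{Q}}} = \int_0^{T-\delta}\E_{\rvy\sim q^\gets_t}\Bigl[\sum_{\vy'\ne \rvy} R^\gets_t(\vy',\rvy)\ln\frac{R^\gets_t(\vy',\rvy)}{\hat R_t(\vy',\rvy)} - R^\gets_t(\rvy) + \hat R_t(\rvy)\Bigr]\der t,
\]
which is exactly an integrated Bregman divergence with $\phi(c)=c\ln c$ — i.e., it is controlled by $L_{\text{SE}}$ from Eq.~\eqref{eq:score_estimation} plus the extra error incurred by the truncation $\hat R_t$ vs.\ $\tilde R_t$. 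For the truncation error I would argue that truncation only activates on the event $\tilde R_t(\vy')>\beta_t$; since by Lemma~\ref{lem:out_degree_rate_wrt_time} the \emph{true} $R^\gets_t(\vy')\le \beta_t$, such an event already signals a large score error at $\vy'$ and its Bregman cost is dominated (up to constants) by the corresponding terms already present in $L_{\text{SE}}$ — concretely, $\Breg{v}{\hat v}$ is large precisely when the ratio is badly estimated, and projecting $\hat v$ toward the feasible region can only decrease, not increase, this divergence in the relevant regime. Then the time integral: with the geometric schedule $t_{w+1}-t_w = 0.5(T-t_{w+1})$ there are $W=O(\ln(1/\delta))=O(\ln(d/\epsilon)+\ln\log_2 K)$ segments, and I would show $\KL{\mathbb{Q}}{\hat{\mathbb{Q}}} \lesssim T\cdot \epsilon_{\text{score}}^2 \cdot (\text{poly-log})$, so that with $\epsilon_{\text{score}}\le \epsilon/(\ln(d/\epsilon)+\ln\log_2 K)$ this is $O(\epsilon^2)$; Pinsker and data-processing then give $\TVD{q_*}{\hat p_{\text{discrete}}}=O(\epsilon)$, and the final uniform draw within the cell $\Ce{\overline{\rvy}}$ is TV-isometric to comparing against $\overline{p}_*$.

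For the complexity claim, the number of score evaluations equals the total number of Poisson points $\sum_{w=1}^W N_w$ with $\E N_w = \beta_{t_w}(t_w-t_{w-1})$. With $\beta_{t_w}=2d\log_2 K/\min\{1,T-t_w\}$ and the geometric schedule, $\beta_{t_w}(t_w-t_{w-1}) = 2d\log_2 K \cdot \tfrac{0.5(T-t_w)}{\min\{1,T-t_w\}} \le d\log_2 K$ for the "tail" segments where $T-t_w\le 1$, and for the $O(T)$ early segments where $\min\{1,T-t_w\}=1$ the product is $\le d\log_2 K \cdot (t_w-t_{w-1})$ summing to $O(d\log_2 K\cdot T)$; in total $\E[\sum_w N_w] = O(d\log_2 K \cdot (W+T)) = O(d\log_2 K\cdot \ln(d/\epsilon))$. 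Finally $\log_2 K = \log_2(2L/l) = O(\ln(d/\epsilon))$ by substituting the expressions for $L,l$ and using [A1] to bound $m_0$, giving the advertised $O(d\ln^2(d/\epsilon))$. The main obstacle I anticipate is the rigorous treatment of the truncation step inside the KL/Bregman computation: one must show that replacing $\tilde R_t$ by the projected $\hat R_t$ does not blow up $\KL{\mathbb{Q}}{\hat{\mathbb{Q}}}$ beyond a constant multiple of $L_{\text{SE}}$, which requires a pointwise inequality relating $\Breg{R^\gets_t(\vy',\rvy)}{\hat R_t(\vy',\rvy)}$ to $\Breg{R^\gets_t(\vy',\rvy)}{\tilde R_t(\vy',\rvy)}$ on the truncation event, and care that the truncation rescaling factor $\beta_t/\tilde R_t(\vy')$ is applied consistently across all out-edges of $\vy'$ so that the generator structure is preserved. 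A secondary subtlety is justifying the infinitesimal chain-rule / Girsanov-free KL formula for CTMCs on a finite state space rigorously (this is classical but should be stated as a lemma), and ensuring the early-stopping bound at $\delta$ genuinely uses only [A1]–[A2] through Lemma~\ref{lem:discrete_quantization_gap} and Lemma~\ref{lem:fwd_convergence} without hidden dependence on the learned score.
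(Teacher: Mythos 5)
Your overall architecture is essentially the paper's: quantization error via Lemma~\ref{lem:discrete_quantization_gap}, initialization error $e^{-T}d\log_2K$ via Lemma~\ref{lem:fwd_convergence}, early stopping at $\delta$ via a one-jump/coupling bound, a relative-entropy comparison between the exact reverse CTMC and the truncated-uniformization process whose integrand is exactly the score-entropy Bregman divergence of Eq.~\eqref{eq:score_estimation}, and the Poisson-count complexity bound $\sum_w\beta_{t_w}(t_w-t_{w-1})\lesssim d\log_2K\,(T+\ln(1/\delta))$ with the geometric schedule (the paper's Lemma~\ref{lem:ideal_transtion_event_num}). The only structural difference is cosmetic: you work with path-measure KL (a Girsanov-type formula for CTMCs), whereas the paper deliberately avoids path measures and runs a marginal-KL ODE via the chain rule of KL over infinitesimal intervals (Lemma~\ref{thm:unif_linear_convergence}); the integrand is the same, so either route yields the bound $\KL{q^\gets_{T-\delta}}{\hat q_{T-\delta}}\le e^{-T}d\log_2K+(T-\delta)\epsilon_{\text{score}}^2$, and your final assembly ($5\epsilon$ in TV, $O(d\ln^2(d/\epsilon))$ expected evaluations) matches.

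The genuine gap is the truncation step, which you correctly flag as the main obstacle but for which your proposed mechanism would not go through as stated. You suggest (i) that the truncation cost is ``dominated up to constants by terms already present in $L_{\text{SE}}$'' and (ii) a pointwise, \emph{edge-wise} inequality comparing $\Breg{R^\gets_t(\vy',\vy)}{\hat R_t(\vy',\vy)}$ with $\Breg{R^\gets_t(\vy',\vy)}{\tilde R_t(\vy',\vy)}$ on the truncation event. The edge-wise claim is false in general: truncation rescales \emph{every} out-edge of $\vy$ by $\beta_t/\tilde R_t(\vy)<1$, and on an edge where $\tilde R_t(\vy',\vy)$ already \emph{under}estimates $R^\gets_t(\vy',\vy)$, shrinking it further moves it away from the truth and strictly increases that edge's Bregman divergence (for fixed $u$, $v\mapsto\Breg{u}{v}$ is decreasing only for $v<u$). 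The correct argument must be aggregated over all out-edges of the current state and uses the bound on the \emph{true} total exit rate, not the score error: writing the difference between using $\hat R_t$ and $\tilde R_t$ in the KL-rate integrand at state $\vy$, one gets on the truncation event
\begin{equation*}
\sum_{\vy'\ne\vy}R^\gets_t(\vy',\vy)\ln\frac{\tilde R_t(\vy',\vy)}{\hat R_t(\vy',\vy)}+\hat R_t(\vy)-\tilde R_t(\vy)
= R^\gets_t(\vy)\ln\frac{\tilde R_t(\vy)}{\beta_t}+\beta_t-\tilde R_t(\vy)\le 0,
\end{equation*}
where the last step uses $R^\gets_t(\vy)\le\beta_t$ (Lemma~\ref{lem:out_degree_rate_wrt_time}, which is precisely why $\beta_t$ carries the factor $2$) together with $\ln(1+x)\le x$. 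So truncation is exactly nonpositive in the KL rate---no constant-factor loss, and no need to charge it against $L_{\text{SE}}$ at all; after discarding this term the remaining integrand is identically the Bregman loss in Eq.~\eqref{eq:score_estimation}, and Assumption~\ref{a4} finishes the bound. Supplying this aggregate inequality (and, as a small implementation check, noting that $\beta_t\le\beta_{t_w}$ on each segment $[t_{w-1},t_w]$ so the uniformization probabilities in Alg.~\ref{alg:uni_inf} are valid) is what your sketch is missing; the rest of your proposal aligns with the paper's proof.
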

We provide a complexity comparison in Table~\ref{tab:comp_old}. Unlike conventional diffusion models that directly apply the noising--denoising procedure in Euclidean space, \ourmethod\ achieves a SOTA linear convergence rate with respect to the error tolerance $\epsilon$, only requiring the additional mild sub-Gaussian assumption~\ref{a3}. 
Even under more restrictive settings, such as assuming bounded support for the target distribution, prior works for DDPM ~\cite{chen2023improved,chensampling} achieve complexity results that are only comparable to the minimal smooth case presented in Table~\ref{tab:comp_old}.

Moreover, the proposed truncated uniformization technique is of independent interest as a general-purpose inference algorithm for discrete diffusion models.
In comparison to biased discrete inference, such as the Euler method~\cite{zhang2024convergence} and $\tau$-leaping~\cite{ren2025fast}, which respectively require $\tilde{O}(d^{4/3}\epsilon^{-4/3})$ and $\tilde{O}(d\epsilon^{-1})$ complexity to ensure total variation convergence, truncated uniformization method only requires $\tilde{O}(d)$ discrete score evaluations, significantly improving efficiency. 
Further distinguishing itself from standard uniformization methods, truncated uniformization removes the widely-adopted assumption in Eq.~\eqref{ineq:estimated_score_bound}, thus significantly enhancing its practical applicability. 
We defer the comparison table to Table~\ref{tab:comp_old_dis}.
\section{Conclusion and Limitation}
In conclusion, we introduce a novel approach, \ourmethod, which first quantizes the continuous data distribution into a discrete counterpart, and then applies a truncated uniformization procedure to achieve unbiased inference with improved score-evaluation complexity for continuous data generation. 
Beyond its SOTA theoretical complexity---namely, linear convergence with respect to the error tolerance---the truncated uniformization framework is of independent interest as an inference algorithm for discrete diffusion models, where it also attains top-tier theoretical complexity under minimal assumptions.

A key limitation of our approach is that achieving accelerated convergence without degrading generation quality requires the discrete score estimation error to be on par with the continuous score estimation error outlined by \cite{chensampling,bentonnearly}. While some works \cite{meng2022concrete,lou2024discrete} have introduced discrete training objectives such as concrete score matching and denoising score entropy, no direct comparison between discrete and continuous score training has been conducted. Lastly, our study is primarily theoretical, so its scalability and applicability remain to be investigated in real-world settings.

\bibliographystyle{apalike}
\bibliography{0_contents/ref}  %%% Uncomment this line and comment out the ``thebibliography'' section below to use the external .bib file (using bibtex) .

%%% Uncomment this section and comment out the \bibliography{references} line above to use inline references.
% \begin{thebibliography}{1}

% 	\bibitem{kour2014real}
% 	George Kour and Raid Saabne.
% 	\newblock Real-time segmentation of on-line handwritten arabic script.
% 	\newblock In {\em Frontiers in Handwriting Recognition (ICFHR), 2014 14th
% 			International Conference on}, pages 417--422. IEEE, 2014.

% 	\bibitem{kour2014fast}
% 	George Kour and Raid Saabne.
% 	\newblock Fast classification of handwritten on-line arabic characters.
% 	\newblock In {\em Soft Computing and Pattern Recognition (SoCPaR), 2014 6th
% 			International Conference of}, pages 312--318. IEEE, 2014.

% 	\bibitem{hadash2018estimate}
% 	Guy Hadash, Einat Kermany, Boaz Carmeli, Ofer Lavi, George Kour, and Alon
% 	Jacovi.
% 	\newblock Estimate and replace: A novel approach to integrating deep neural
% 	networks with existing applications.
% 	\newblock {\em arXiv preprint arXiv:1804.09028}, 2018.

% \end{thebibliography}
\newpage
\appendix

\section{Notation Summary}
\label{appendix:notation}
We summarize all notations used in the main paper and appendix in Table~\ref{tab:notations}.
%\vspace{-0.5cm}
%\vspace{-5pt}
% \begin{table}[!hbpt]
%     \centering
%     \begin{tabular}{c|c}
%          &  \\
%          & 
%     \end{tabular}
%     \caption{Caption}
%     \label{tab:notations}
% \end{table}
\begin{table}[H]
\small
\centering
\caption{\small Summary of key notations used in the paper.}
\vspace{-2pt}
\label{tab:notations}
\renewcommand{\arraystretch}{1.11}
\begin{tabular}{@{}lp{0.75\linewidth}@{}}
\toprule
\textbf{Symbol} & \textbf{Description} \\
\midrule
$\Cb{L}$ & Bounded cube $[-L, L]^d$ covering high-probability mass of $p_*$ \\
$\Ce{i_0, \ldots, i_{d-1}}$ & Quantization cell (hypercubes) defined by coordinate bins, Eq.~\eqref{def:cell} \\
$\gY$ & Binary discrete space $\{0,1\}^{d\log_2 K}$ \\
$\overline{\gY}$ & Grid index space $\{0, \ldots, K-1\}^d$ \\
$\mathrm{vBin}(\cdot)$ & Mapping from grid index $\overline{\gY}$ to binary code $\gY$ \\
\midrule
$p_* \propto \exp(-f_*)$ & Target continuous distribution in $\mathbb{R}^d$ \\
$\tilde{p}_*$ & Truncated and renormalized version of $p_*$ over $\Cb{L}$, Eq.~\eqref{def:bounded_approx}  \\
$\overline{p}_*$ & Histogram approximation to $\tilde{p}_*$ over $\Cb{L}$, Eq.~\eqref{def:quantize_cell} \\
%, Eq.~\eqref{def:quantize_cell}
\midrule
$\overline{q}_*$ & Discrete distribution on $\overline{\gY}=\{0, \ldots, K-1\}^d$ induced by $\overline{p}_*$, Eq.~\eqref{def:quantized_mass_func} \\
\midrule
$q_*$ & Discrete distribution on $\gY=\{0,1\}^{d\log_2 K}$, $q_* = \overline{q}_* \circ \mathrm{vBin}^{-1}$  \\
$\rvy^\to_t$ & Forward-time CTMC on $\gY$ \\
$q^\to_t$ & Marginal distribution of forward process at time $t$, i.e., $\rvy^\to_t\sim q^\to_t$ \\
$q^\to_{t',t}$ & Joint distribution of $(\rvy^\to_{t^\prime},\rvy^\to_t)$ \\
$q^\to_\infty$ & Stationary distribution of the forward CTMC (uniform distribution)\\
$q^\to_{t'|t}(\vy'|\vy)$ & Conditional transition probability in forward process, Eq.~\eqref{eq:fwd_func} \\
$\rvy^\gets_t$ & Reverse-time CTMC defined by $q^\gets_t := q^\to_{T-t}$, $\rvy^\gets_t\sim q_t^\gets$ \\
$q^\gets_t$ & Marginal distribution of reverse process at time $t$, $q^\gets_t=q^\to_{T-t}$ \\
$q^\gets_{t',t}$ & Joint distribution of $(\rvy^\gets_{t^\prime},\rvy^\gets_t)$\\
$q^\gets_{t'|t}(\vy'|\vy)$ & Conditional transition probability of the ideal reverse process \\
$\hat{q}_{t+\Delta t|t}(\vy'|\vy)$ & Practical reverse conditional probability, Eq.~\eqref{eq:prac_infini_ope} \\
\midrule
$R^\to(\vy,\vy')$ & Forward transition rate from state $\vy'$ to $\vy$, Eq.~\eqref{def:mean_of_R}, and Eq.~\eqref{eq:fwd_transtion_rate_max}. This follows the ordering of the conditional distribution $p(\vy | \vy')$, which is the \textit{transpose} of the convention used in some other works. \\
$R^\gets_t(\vy,\vy')$ & Reverse transition rate at time $t$ from state $\vy'$ to $\vy$, $R_t^\gets(\vy, \vy^\prime) \coloneqq   R^\to(\vy^\prime, \vy)\cdot\frac{q^\gets_t(\vy)}{q^\gets_t(\vy^\prime)}$, Eq.~\eqref{eq:rev_func}\\
$\tilde{R}_t(\vy,\vy')$ & Estimated reverse transition rate using the learned density ratio, $\tilde{R}_{t}(\vy,\vy^\prime) = R^\to(\vy^\prime, \vy)\cdot \tilde{v}_{t,\vy^\prime}(\vy)$, Eq.~\eqref{eq:score_estimation}\\
$\hat{R}_t(\cdot,\cdot)$ & Truncated version of $\tilde{R}_t(\cdot,\cdot)$ %where $\sum_{\vy'\ne \vy} \hat{R}_t(\vy', \vy) \le \beta_t, \ \forall \vy$, 
with threshold $\beta_t$, Eq.~\eqref{def:prac_infi_oper_1}  \\
$R^\gets_t(\vy),\ \tilde{R}_t(\vy),\ \hat{R}_t(\vy)$ & Total reverse transition rate out of state $\vy$ for each rate type, defined as $R(\vy)\coloneqq\sum_{\vy' \ne \vy} R(\vy', \vy)$ with $R \in \{R^\gets_t,\ \tilde{R}_t,\ \hat{R}_t\}$ \\
$\beta_t$ & Upper bound on $R^\gets_t(\vy)$, $ \beta_t=  2d\log_2K \max\{1, (T-t)^{-1}\}$, Eq.~\eqref{def:beta_t_} \\
%satisfying $\sum_{\vy'\ne \vy} \hat{R}_t(\vy', \vy) \le \beta_t$, 
%$\mM_t(\vy'|\vy)$ & Normalized jump probability under uniformization \\
\midrule
$v_{t,\vy'}(\vy)$ & Density ratio $q^\gets_t(\vy) / q^\gets_t(\vy')$ \\
$\tilde{v}_{t,\vy'}(\vy)$ & Learned approximation to $v_{t,\vy'}(\vy) = q^\gets_t(\vy)/q^\gets_t(\vy')$ \\
$L_{\text{SE}}(\hat{v})$ & Score entropy loss used to train $\tilde{v}$, Eq.~\eqref{eq:score_estimation} \\
\midrule
%$\mathbb{R}^d$ & Continuous Euclidean space \\
$\ve_i$ & One-hot vector with a $1$ at position $i$ and $0$ elsewhere \\
%$\mathrm{Ham}(\cdot,\cdot)$ & Hamming distance between binary vectors \\
$l$ & Width of each quantization cell \\
$K = 2L/l$ & Number of quantization bins per dimension \\
%$\delta_{\vy}(\cdot)$ & Dirac delta function at $\vy$\\
\bottomrule
\end{tabular}
\end{table}
\vspace{-5pt}

\section{Technical Lemmas}
\begin{lemma}[Theorem 4.10 of~\cite{boucheron2003concentration}]
    \label{lem:thm4_10_boucheron2003concentration}
    Let $\Phi(x) = x\ln x$ for $x>0$ and $\Phi(0) = 0$. 
    Let $\rvx_1, \rvx_2, \ldots, \rvx_n$ be independent random variables taking values in a countable set $\gX$ and let $f\colon \gX \rightarrow [0, \infty)$.
    We have
    \begin{equation*}
        \begin{aligned}
            &\E_{\rvx_1, \rvx_2, \ldots, \rvx_n}\left[\Phi\left(f(\rvx_1, \rvx_2, \ldots,\rvx_n)\right)\right] - \Phi\left(\E_{\rvx_1,\rvx_2,\ldots, \rvx_n}\left[f\left(\rvx_1,\rvx_2,\ldots,\rvx_n\right)\right]\right)\\
            & \le \sum_{i=1}^n \E_{\rvx_1,\rvx_{i-1},\rvx_{i+1},\ldots,\rvx_n}\left[\E_{\rvx_i}\left[\Phi\left(f(\rvx_1,\rvx_2,\ldots,\rvx_n)\right)\right] - \Phi\left(\E_{\rvx_i}\left[f(\rvx_1,\rvx_2,\ldots,\rvx_n)\right]\right)\right].
        \end{aligned}
    \end{equation*}
\end{lemma}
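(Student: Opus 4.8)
The statement is the subadditivity (tensorization) of the entropy functional $\mathrm{Ent}(f) := \E[\Phi(f)] - \Phi(\E[f])$ with $\Phi(x)=x\ln x$, and I would prove it through the variational (Legendre-duality) characterization of $\mathrm{Ent}$. Two ingredients are needed. First, a variational inequality: for every nonnegative $g$ on $\gX$ and every $v$ with $\E[e^{v}]\le 1$ one has $\E[g v]\le \mathrm{Ent}(g)$. I would prove this by setting $P := g/\E[g]$, a probability density with respect to the underlying law, observing that $\mathrm{Ent}(g) = \E[g]\cdot \E_{P}\!\left[\ln(g/\E[g])\right]$, and applying Jensen's inequality to the concave function $\ln$:
\begin{equation*}
    \E[g v] - \mathrm{Ent}(g) = \E[g]\cdot \E_{P}\!\left[\ln\!\big(e^{v}\,\E[g]/g\big)\right] \le \E[g]\cdot \ln \E_{P}\!\left[e^{v}\,\E[g]/g\right] = \E[g]\cdot \ln \E[e^{v}] \le 0 .
\end{equation*}
The same inequality holds when everything is conditioned on any fixed subcollection of the $\rvx_j$'s.

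Second, a telescoping decomposition of the extremal test function. Assuming without loss of generality $\E[f]>0$ (otherwise both sides vanish) and using the convention $0\ln 0 = 0$ at atoms where $f=0$, put $u := \ln\!\big(f/\E[f]\big)$, so that $\E[e^{u}]=1$ and, crucially, $\E[f u] = \mathrm{Ent}(f)$. Ordering the coordinates, I would define for $i=1,\ldots,n$
\begin{equation*}
    u_i := \ln \frac{\E_{\rvx_{i+1},\ldots,\rvx_n}\!\left[f \mid \rvx_1,\ldots,\rvx_i\right]}{\E_{\rvx_{i},\ldots,\rvx_n}\!\left[f \mid \rvx_1,\ldots,\rvx_{i-1}\right]},
\end{equation*}
a function of $(\rvx_1,\ldots,\rvx_i)$ alone. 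A direct product/telescoping computation gives $\sum_{i=1}^n u_i = u$, and integrating the numerator over $\rvx_i$ collapses it onto the denominator, so $\E_{\rvx_i}\!\left[e^{u_i}\mid \rvx_1,\ldots,\rvx_{i-1}\right]=1$; since $u_i$ does not involve $\rvx_{i+1},\ldots,\rvx_n$, this yields $\E_{\rvx_i}\!\left[e^{u_i}\mid \rvx_{-i}\right]=1$, where $\rvx_{-i}$ is the tuple of all coordinates except the $i$-th.

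Combining the two ingredients completes the argument: applying the conditional variational inequality with $g=f$ (regarded as a function of $\rvx_i$ with $\rvx_{-i}$ frozen) and $v=u_i$ gives $\E_{\rvx_i}[f u_i\mid\rvx_{-i}] \le \E_{\rvx_i}[\Phi(f)\mid\rvx_{-i}] - \Phi\!\big(\E_{\rvx_i}[f\mid\rvx_{-i}]\big)$; taking $\E_{\rvx_{-i}}$, summing over $i$, and using $\sum_i u_i = u$ together with $\E[f u]=\mathrm{Ent}(f)$ yields
\begin{equation*}
    \mathrm{Ent}(f) = \E[f u] = \sum_{i=1}^n \E[f u_i] \le \sum_{i=1}^n \E_{\rvx_{-i}}\!\left[\,\E_{\rvx_i}[\Phi(f)\mid\rvx_{-i}] - \Phi\!\big(\E_{\rvx_i}[f\mid\rvx_{-i}]\big)\right],
\end{equation*}
which is exactly the asserted inequality.

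The main obstacle is not any single deep step but the measure-theoretic bookkeeping in the countable-$\gX$ setting: making sense of $u=\sum_i u_i$ and of the conditional expectations when $f$ vanishes on some atoms, disposing of the trivial case where $\E[\Phi(f)]$ or the right-hand side is $+\infty$, and justifying the Fubini-type identity $\E[f u_i]=\E_{\rvx_{-i}}\E_{\rvx_i}[f u_i\mid\rvx_{-i}]$. I would handle this by first establishing the bound for $f$ bounded and bounded away from $0$ (where every quantity is finite and depends continuously on $f$), and then passing to the general case via the truncation $f\mapsto (f\vee\delta)\wedge M$ and monotone/dominated convergence as $\delta\downarrow 0$ and $M\uparrow\infty$.
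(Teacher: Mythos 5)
Your proof is correct: the duality bound $\E[gv]\le \mathrm{Ent}(g)$ for $\E[e^v]\le 1$, the telescoping decomposition $u=\sum_i u_i$ with $\E_{\rvx_i}[e^{u_i}\mid \rvx_{-i}]=1$, and the truncation to handle zeros and infinite values together give exactly the claimed subadditivity. The paper itself offers no proof (it imports the lemma from Boucheron et al.), and your argument is essentially the standard duality-plus-tensorization proof given in that cited source, so there is nothing further to reconcile.
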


\begin{lemma}[Chain rule of TV]
    \label{lem:tv_chain_rule}
    Consider four random variables, $\rvx, \rvz, \tilde{\rvx}, \tilde{\rvz}$, whose underlying distributions are denoted as $p_x, p_z, q_x, q_z$.
    Suppose $p_{x,z}$ and $q_{x,z}$ denotes the densities of joint distributions of $(\rvx,\rvz)$ and $(\tilde{\rvx},\tilde{\rvz})$, which we write in terms of the conditionals and marginals as
    \begin{equation*}
        \begin{aligned}
        &p_{x,z}(\vx,\vz) = p_{x|z}(\vx|\vz)\cdot p_z(\vz)=p_{z|x}(\vz|\vx)\cdot p_{x}(\vx)\\
        &q_{x,z}(\vx,\vz)=q_{x|z}(\vx|\vz)\cdot q_z(\vz) = q_{z|x}(\vz|\vx)\cdot q_x(\vx).
        \end{aligned}
    \end{equation*}
    then we have
    \begin{equation*}
        \begin{aligned}
            \TVD{p_{x,z}}{q_{x,z}} \le  \min & \left\{ \TVD{p_z}{q_z} + \E_{\rvz\sim p_z}\left[\TVD{p_{x|z}(\cdot|\rvz)}{q_{x|z}(\cdot|\rvz)}\right],\right.\\
            &\quad  \left.\TVD{p_x}{q_x}+\E_{\rvx \sim p_x}\left[\TVD{p_{z|x}(\cdot|\rvx)}{q_{z|x}(\cdot|\rvx)}\right]\right\}.
        \end{aligned}
    \end{equation*}
    Besides, we have
    \begin{equation*}
        \TVD{p_x}{q_x}\le \TVD{p_{x,z}}{q_{x,z}}.
    \end{equation*}
\end{lemma}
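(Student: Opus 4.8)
The plan is to reduce everything to the $L^1$ identity $\TVD{\mu}{\nu}=\tfrac12\int|\mu-\nu|$ together with a single add-and-subtract step, and then to obtain the two bounds inside the $\min$ by interchanging the roles of $\rvx$ and $\rvz$. First I would establish
\[
    \TVD{p_{x,z}}{q_{x,z}} \le \TVD{p_z}{q_z} + \E_{\rvz\sim p_z}\!\left[\TVD{p_{x|z}(\cdot|\rvz)}{q_{x|z}(\cdot|\rvz)}\right].
\]
Starting from the factorizations $p_{x,z}(\vx,\vz)=p_{x|z}(\vx|\vz)\,p_z(\vz)$ and $q_{x,z}(\vx,\vz)=q_{x|z}(\vx|\vz)\,q_z(\vz)$, I would insert the cross term $q_{x|z}(\vx|\vz)\,p_z(\vz)$ and apply the triangle inequality pointwise,
\[
    \bigl|p_{x,z}(\vx,\vz)-q_{x,z}(\vx,\vz)\bigr| \le p_z(\vz)\,\bigl|p_{x|z}(\vx|\vz)-q_{x|z}(\vx|\vz)\bigr| + q_{x|z}(\vx|\vz)\,\bigl|p_z(\vz)-q_z(\vz)\bigr|.
\]
Integrating first in $\vx$ and then in $\vz$: the first term contributes $2\,\E_{\rvz\sim p_z}[\TVD{p_{x|z}(\cdot|\rvz)}{q_{x|z}(\cdot|\rvz)}]$, and the second contributes $2\,\TVD{p_z}{q_z}$ since $\int q_{x|z}(\vx|\vz)\,\der\vx=1$ for every $\vz$. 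Dividing by $2$ yields the displayed bound.

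For the other bound inside the $\min$, I would repeat the same computation verbatim using instead $p_{x,z}(\vx,\vz)=p_{z|x}(\vz|\vx)\,p_x(\vx)$ and $q_{x,z}(\vx,\vz)=q_{z|x}(\vz|\vx)\,q_x(\vx)$ and inserting the cross term $q_{z|x}(\vz|\vx)\,p_x(\vx)$; this gives $\TVD{p_{x,z}}{q_{x,z}} \le \TVD{p_x}{q_x} + \E_{\rvx\sim p_x}[\TVD{p_{z|x}(\cdot|\rvx)}{q_{z|x}(\cdot|\rvx)}]$. Since the left-hand side is dominated by both right-hand sides, it is dominated by their minimum, which is the first assertion.

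Finally, for $\TVD{p_x}{q_x}\le\TVD{p_{x,z}}{q_{x,z}}$ I would write $p_x(\vx)-q_x(\vx)=\int\bigl(p_{x,z}(\vx,\vz)-q_{x,z}(\vx,\vz)\bigr)\der\vz$, move the absolute value inside the integral, integrate in $\vx$, and swap the order of integration (Tonelli), obtaining $2\TVD{p_x}{q_x}\le\iint|p_{x,z}-q_{x,z}|\,\der\vx\,\der\vz=2\TVD{p_{x,z}}{q_{x,z}}$; this is simply the data-processing inequality for TV under the marginalization map.

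There is no genuine obstacle here — the argument is a few lines of bookkeeping. The one place that deserves attention is the choice of cross term in the add-and-subtract step: inserting $q_{x|z}(\vx|\vz)\,p_z(\vz)$ rather than $p_{x|z}(\vx|\vz)\,q_z(\vz)$ is precisely what makes the conditional-TV term appear in expectation under $p_z$ (as in the statement) instead of under $q_z$. In the discrete setting relevant to the paper ($\gY$ finite) all integrals become finite sums, so measurability and the use of Tonelli are automatic.
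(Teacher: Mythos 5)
Your proof is correct. Note that the paper states this lemma in its ``Technical Lemmas'' appendix without supplying any proof (unlike its neighbors, it carries no citation either), so there is no in-paper argument to compare against; your derivation is the standard one and fills that gap. The add-and-subtract of the cross term $q_{x|z}(\vx|\vz)\,p_z(\vz)$ is exactly the right choice to make the conditional TV appear in expectation under $p_z$ as stated, the normalization $\int q_{x|z}(\vx|\vz)\,\der\vx=1$ handles the marginal term, the second bound follows by symmetry, and the final inequality is just the data-processing (marginalization) property of TV, so every step checks out.
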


\begin{lemma}[Backward Kolmogorov equation]
    \label{lem:bkw_kolmo}
    Suppose the infinitesimal operator of a Markov semigroup is $\gL$, If we denote the transition density from $\rvy_s= \vy$ to $\rvy_t = \vy^\prime$ as $p_{t|s}(\vy^\prime|\vy)$, then it solves the backward Kolmogorov equation
    \begin{equation*}
        -\frac{\partial p_{t|s}(\vy^\prime|\vy)}{\partial s} = \gL\left[p_{t|s}(\vy^\prime|\cdot)\right](\vy),\quad p_{s|s}(\vy^\prime|\vy) = \delta(\vy^\prime - \vy).
    \end{equation*}
\end{lemma}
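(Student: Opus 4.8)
\textit{Proof proposal.} The plan is to derive the equation from the Chapman--Kolmogorov (semigroup) identity together with the definition of the infinitesimal generator. Fix the terminal time $t$ and terminal state $\vy'$, and abbreviate $u(s,\vy) \coloneqq p_{t|s}(\vy'|\vy)$ for $s\le t$; the goal is to show $-\partial_s u(s,\vy) = \gL[u(s,\cdot)](\vy)$. Let $(\mathcal{P}_h)_{h\ge 0}$ denote the Markov semigroup associated with $\gL$, so that $(\mathcal{P}_h g)(\vy) = \E[g(\rvy_{s+h})\mid \rvy_s = \vy] = \sum_{\vz} p_{s+h|s}(\vz|\vy)\,g(\vz)$ (time-homogeneous, hence independent of $s$), and by definition $\gL g = \lim_{h\downarrow 0} h^{-1}(\mathcal{P}_h g - g)$. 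The terminal condition $p_{s|s}(\vy'|\vy) = \delta(\vy'-\vy)$ is immediate from the meaning of the transition density over zero elapsed time.

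The one substantive step is to insert the intermediate time $s+h$ (with $h>0$ small, $s+h\le t$) into the transition density via Chapman--Kolmogorov:
$$u(s,\vy) = p_{t|s}(\vy'|\vy) = \sum_{\vz} p_{s+h|s}(\vz|\vy)\,p_{t|s+h}(\vy'|\vz) = \bigl(\mathcal{P}_h\,u(s+h,\cdot)\bigr)(\vy).$$
Subtracting $u(s+h,\vy)$ from both sides and dividing by $h$ gives the backward difference quotient
$$\frac{u(s,\vy) - u(s+h,\vy)}{h} = \frac{\bigl((\mathcal{P}_h - I)\,u(s+h,\cdot)\bigr)(\vy)}{h}.$$
Letting $h\downarrow 0$: the left-hand side tends to $-\partial_s u(s,\vy)$, while the right-hand side tends to $\gL[u(s,\cdot)](\vy)$, using both that $h^{-1}(\mathcal{P}_h - I)\to\gL$ and that $u(s+h,\cdot)\to u(s,\cdot)$ as $h\downarrow 0$. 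A symmetric computation with the forward increment $s-h$ (for $s>0$) yields the same limit, so $u$ is differentiable in $s$ and the backward Kolmogorov equation holds.

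The main obstacle is the joint limit in the last step: one must evaluate $h^{-1}(\mathcal{P}_h - I)$ against the $h$-dependent function $u(s+h,\cdot)$, not the fixed function $u(s,\cdot)$. In the regime relevant to this paper the state space $\gY$ is finite, so $\mathcal{P}_h = e^{h\gL}$ is an entire matrix-valued function of $h$, the map $s\mapsto u(s,\cdot)$ is $C^\infty$, and the interchange is automatic: expand $u(s+h,\cdot) = u(s,\cdot) + h\,\partial_s u(s,\cdot) + O(h^2)$ and $\mathcal{P}_h = I + h\gL + O(h^2)$, multiply out, and read off the $O(1)$ term after dividing by $h$. For a general (non-finite) state space the same conclusion holds provided $u(s,\cdot)$ lies in the domain of $\gL$ and $s\mapsto u(s,\cdot)$ is continuous into that domain---the standard regularity tacitly assumed when invoking backward Kolmogorov equations---but since the lemma is applied here only to finite-state CTMCs no such hypotheses are needed. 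Finally, the identical argument applies to the time-inhomogeneous rate functions $R^\gets_t$ of the reverse process, with $\gL$ replaced by the time-$s$ generator.
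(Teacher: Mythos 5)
The paper does not actually prove this lemma: it is listed among the ``Technical Lemmas'' as a standard fact (the classical backward Kolmogorov equation), so there is no in-paper proof to compare against; I can only assess your argument on its own, and it is correct. Your route---inserting the intermediate time $s+h$ via Chapman--Kolmogorov, rewriting the backward difference quotient as $h^{-1}(\mathcal{P}_h-I)$ applied to $u(s+h,\cdot)$, and passing to the limit---is the standard semigroup derivation, and you rightly identify the only delicate point, namely that the generator is being evaluated on an $h$-dependent function. One small caveat: your expansion $u(s+h,\cdot)=u(s,\cdot)+h\,\partial_s u(s,\cdot)+O(h^2)$ presupposes the differentiability in $s$ that the lemma is partly asserting, so as written there is a mild circularity. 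In the finite-state setting in which the paper invokes the lemma this is harmless, because $u(s,\cdot)=\mathcal{P}_{t-s}\,\delta_{\vy^\prime}=e^{(t-s)\gL}\delta_{\vy^\prime}$ is manifestly smooth in $s$ (and differentiating this identity directly yields the backward equation in one line, which would be the cleanest fix); alternatively, your forward-increment computation, which applies $\mathcal{P}_h-I$ to the \emph{fixed} function $u(s,\cdot)$, already sidesteps the issue and establishes the one-sided derivative without assuming smoothness. The terminal condition and the closing remark about the finite-state CTMCs used in the paper are fine; the extension to the time-inhomogeneous reverse rates is not needed, since the paper only applies the lemma to the time-homogeneous forward process.
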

\begin{lemma}[Lemma 11 in~\cite{vempala2019rapid}]
    \label{lem:lem11_vempala2019rapid}
    Suppose the density function satisfies $p\propto \exp(-f)$ where $f$ is $H$-smooth, i.e.,~\ref{a2}. 
    Then, it has
    \begin{equation*}
        \E_{\rvx\sim p}\left[\left\|\grad f(\rvx)\right\|^2\right]\le Hd.
    \end{equation*}
\end{lemma}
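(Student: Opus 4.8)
The plan is to reduce the claim to the pointwise trace inequality $\Delta f\le dH$ via an integration-by-parts identity. Write $p = e^{-f}/Z$ with $Z=\int_{\R^d}e^{-f}$. A direct computation gives
\begin{equation*}
    \Delta\!\left(e^{-f}\right) = \grad\cdot\!\left(-(\grad f)\,e^{-f}\right) = \left(\|\grad f\|^2 - \Delta f\right)e^{-f}.
\end{equation*}
Integrating both sides over $\R^d$ and using that the integral of the Laplacian of a sufficiently fast-decaying function vanishes (equivalently, the divergence theorem applied to $\grad(e^{-f})$ with vanishing boundary flux), I obtain
\begin{equation*}
    0 = \int_{\R^d}\left(\|\grad f(\vx)\|^2 - \Delta f(\vx)\right)e^{-f(\vx)}\,\der\vx = Z\cdot\left(\E_{\rvx\sim p}\!\left[\|\grad f(\rvx)\|^2\right] - \E_{\rvx\sim p}\!\left[\Delta f(\rvx)\right]\right),
\end{equation*}
i.e. $\E_{\rvx\sim p}[\|\grad f(\rvx)\|^2] = \E_{\rvx\sim p}[\Delta f(\rvx)]$. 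To finish, note that $H$-smoothness of $f$ — equivalently Assumption~\ref{a2}, $\|\grad^2 f\| = \|\grad^2\ln p\|\le H$ in operator norm — forces every eigenvalue of $\grad^2 f(\vx)$ to lie in $[-H,H]$, so $\Delta f(\vx) = \Tr\!\left(\grad^2 f(\vx)\right)\le dH$ for all $\vx$. Taking expectations yields $\E_{\rvx\sim p}[\|\grad f(\rvx)\|^2]\le dH$.

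The only step needing care is the vanishing of the boundary flux in the integration by parts. I would make this rigorous by a truncation argument: replace $\R^d$ by the ball $B_r$, so the boundary term is $\oint_{\partial B_r}(\grad f\cdot\mathbf{n})\,e^{-f}\,\der S$, bounded in magnitude by $c_d\,r^{d-1}\cdot\big(\sup_{\|\vx\|=r}\|\grad f(\vx)\|\big)\cdot\big(\sup_{\|\vx\|=r}e^{-f(\vx)}\big)$. By $H$-smoothness, $\|\grad f(\vx)\|\le \|\grad f(\vzero)\|+H\|\vx\|$ grows at most linearly; and since $p$ is a normalizable density with, in our setting, a sub-Gaussian tail (Assumption~\ref{a3}), the supremum of $e^{-f}$ on the sphere of radius $r$ decays faster than any polynomial in $r$ along some sequence $r_k\to\infty$ — otherwise the tail mass $\int_{\|\vx\|>r}p$ would not vanish. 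Hence the boundary term tends to $0$ along $r_k$, giving the identity; the same tail control shows $\E_p[\|\grad f\|^2]<\infty$, so no ambiguity arises in the manipulations above.

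This is exactly Lemma 11 of \cite{vempala2019rapid}, a standard Stein-type identity for log-smooth densities, so I do not anticipate any conceptual difficulty. The only genuine (and routine) obstacle is the decay-at-infinity justification sketched above, which is handled uniformly by the at-most-linear growth of $\grad f$ and at-most-quadratic growth of $f$ implied by $H$-smoothness, together with the integrability and sub-Gaussianity of $p$.
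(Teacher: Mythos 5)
Your proof is correct and is essentially the argument behind the paper's treatment of this statement: the paper offers no proof of its own, simply importing Lemma 11 of Vempala--Wibisono, whose proof is exactly your Stein-type identity $\E_{\rvx\sim p}[\|\grad f(\rvx)\|^2]=\E_{\rvx\sim p}[\Delta f(\rvx)]$ via integration by parts, followed by $\Delta f=\Tr(\grad^2 f)\le dH$ from the eigenvalue bound. The only minor mismatch is that your justification of the vanishing boundary flux leans on the sub-Gaussian tail of Assumption~\ref{a3}, which is not part of the lemma's hypotheses (though it does hold for the $p_*$ to which the paper applies the lemma); this is a standard regularity detail that the cited reference likewise treats as routine.
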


\section{Forward and Reverse Processes of Discrete Diffusion Models}
\label{sec:app_DisFwdRev}
In order to simplify the notation in this section, we introduce some new notations as supplementary to Section~\ref{sec:pre}.
Since we consider the discrete diffusion on $\gY$, we defined the inner product on this discrete space for two functions as
\begin{equation*}
    \left<f,g\right>_{\gY}\coloneqq \sum_{\vy\in \gY} f(\vy)\cdot g(\vy).
\end{equation*}
Besides, the delta on $\gY$ is defined as
\begin{equation*}
    \delta_{\vy}(\vy^\prime) = \left\{
        \begin{aligned}
            & 1 && \vy^\prime = \vy\\
            & 0 && \text{otherwise}
        \end{aligned}
    \right. .
\end{equation*}

\subsection{The Forward Process of Discrete Diffusion Models}
\label{sec:a1_fwd_and_infi}
In this section, we refine the introduction about the forward process of discrete diffusion in Section~\ref{sec:pre} with the same notations.
In general, the time-homogeneous CTMC 
can be described by a Markov semigroup $\gQ^\to_t$ defined as:
\begin{equation}
    \label{eq:app_semigroup_ope}
    \gQ^\to_{t}[f](\vy) = \E\left[f(\rvy_t)|\rvy_0 =\vy\right] = \left<f, q^\to_{t|0}(\cdot|\vy)\right>_{\gY}
\end{equation}
where the function $f\colon \gY\rightarrow \R$.
Due to the definition, the infinitesimal operator $\gL^\to$ of the time homogeneous $\gQ^\to_t$ is denoted as
\begin{equation}
    \label{def:infinitesimal_oper}
    \gL^\to[f](\vy) = \lim_{t\rightarrow 0} \left[\frac{\gQ^\to_t[f] - f}{t}\right](\vy) = \left<f, \partial_t q^\to_{t|0}(\cdot|\vy)\Big|_{t=0}\right>_{\gY}\coloneqq \left<f, R^\to(\cdot,\vy)\right>_{\gY}
\end{equation}
where 
\begin{equation}
    \label{def:app_mean_of_R}
    R^\to(\vy^\prime, \vy) \coloneqq  \partial_t q^\to_{t|0}(\vy^\prime|\vy)\Big|_{t=0} = \lim_{t\rightarrow 0}\left[\frac{q^\to_{t|0}(\vy^\prime|\vy)-\delta_{\vy}(\vy^\prime)}{t}\right].
\end{equation}
According to the time-homogeneous property, we have
\begin{equation*}
    q^\to_{t+\Delta t | t}(\vy^\prime|\vy) = \delta_{\vy}(\vy^\prime) + \Delta t\cdot R^\to(\vy^\prime, \vy)  + o(\Delta t)
\end{equation*}
for any $t$.
Here, the transition rate function $R^\to$ must satisfy
\begin{equation}
    \label{eq:transition_rate_prop}
    R^\to(\vy,\vy^\prime)\ge 0 \  \text{when}\  \vy^\prime\not=\vy\quad \text{and}\quad R^\to(\vy^\prime,\vy^\prime) = -\sum_{\vy\not=\vy^\prime}R^\to(\vy,\vy^\prime)\le 0
\end{equation}
due to the definition Eq.~\eqref{def:app_mean_of_R}.
Under this setting, we can provide the dynamic of $q_{t|0}$ for any $t$.
Specifically, we have
\begin{equation*}
    \begin{aligned}
        &\partial_t \gQ^\to_t[f](\vy) = \gQ^\to_t\left[\gL f\right](\vy) = \left<\gL^\to f, q^\to_{t|0}(\cdot|\vy)\right>_{\gY} = \sum_{\vy^\prime \in \gY} \gL^\to[f](\vy^\prime)\cdot q^\to_{t|0}(\vy^\prime|\vy)\\
        & = \sum_{y^\prime\in \gY}\left[  \sum_{\tilde{\vy}\in\gY} f(\tilde{\vy})\cdot R^\to(\tilde{\vy}, \vy^\prime) \cdot q_{t|0}(\vy^\prime|\vy)\right] = \sum_{\tilde{\vy}\in\gY}\left[f(\tilde{\vy})\cdot \sum_{\vy^\prime\in \gY}R^\to(\tilde{\vy},\vy^\prime)\cdot q_{t|0}(\vy^\prime|\vy)\right],
    \end{aligned}
\end{equation*}
where the first inequality follows from the semigroup property.
Combining with the fact
\begin{equation*}
    \partial_t\gQ^\to_t[f](\vy) = \left<f, \partial_t q^\to_{t|0}(\cdot|\vy)\right>_{\gY}
\end{equation*}
derived from Eq.~\eqref{eq:app_semigroup_ope},
we have
\begin{equation*}
    \partial_t q^\to_{t|0}(\tilde{\vy}|\vy) = \sum_{\vy^\prime\in \gY}R(\tilde{\vy},\vy^\prime)\cdot q^\to_{t|0}(\vy^\prime|\vy) = \left<R(\tilde{\vy},\cdot), q^\to_{t|0}(\cdot|\vy)\right>_{\gY}.
\end{equation*}
According to the time-homogeneous property, the above equation can be easily extended to 
\begin{equation}
    \label{eq:condi_fwd_dis}
    \partial_t q^\to_{t|s}(\tilde{\vy}|\vy) = \sum_{\vy^\prime\in \gY}R(\tilde{\vy},\vy^\prime)\cdot q^\to_{t|s}(\vy^\prime|\vy) = \left<R(\tilde{\vy},\cdot), q^\to_{t|s}(\cdot|\vy)\right>_{\gY}.
\end{equation}
Combining with Bayes' Theorem, the transition of the marginal distribution is
\begin{equation}
    \label{eq:app_fwd_func}
    \frac{\der q^\to_t}{\der t}(\vy) = \left<R(\vy,\cdot), q^\to_t\right>_{\gY}.
\end{equation}
\paragraph{Matrix Presentation.} Suppose the support set $\gY$ of $q^\to_t$ be written as $\gY =\{\vy_0, \vy_1,\ldots, \vy_{|\gX|}\}$, we may consider the marginal distribution $q^\to_s$ to be a vector, i.e.,
\begin{equation*}
    \vq^\to_t = \left[q_t(\vy_0), q_t(\vy_1),\ldots, q_t(\vy_{|\gY|-1})\right],
\end{equation*}
conditional transition probability function $q^\to_{t|s}$ to be a matrix, i.e.,
\begin{equation*}
    \mQ^\to_{t|s} = \left[
        \begin{matrix}
            q^\to_{t|s}(\vy_0|\vy_0) & q^\to_{t|s}(\vy_0|\vy_1) & \ldots & q^\to_{t|s}(\vy_0|\vy_{|\gY|-1})\\
            q^\to_{t|s}(\vy_1|\vy_0) & q^\to_{t|s}(\vy_1|\vy_1) & \ldots & q^\to_{t|s}(\vy_1|\vy_{|\gY|-1})\\
            \ldots & \ldots & \ldots & \ldots\\
            q^\to_{t|s}(\vy_{|\gY|-1}|\vy_0) & q^\to_{t|s}(\vy_{|\gY|-1}|\vy_1) & \ldots & q^\to_{t|s}(\vy_{|\gY|-1}|\vy_{|\gY|-1})\\
        \end{matrix}
    \right].
\end{equation*}
Similarly, the function $R$ can also be presented as
\begin{equation}
    \label{def:trans_ker_to_matrix}
    \mR^\to = \left[
        \begin{matrix}
            R^\to(\vy_0,\vy_0) & R^\to(\vy_0, \vy_1) & \ldots & R^\to(\vy_0, \vy_{|\gY|-1})\\
            R^\to(\vy_1,\vy_0) & R^\to(\vy_1, \vy_1) & \ldots & R^\to(\vy_1, \vy_{|\gY|-1})\\
            \ldots & \ldots & \ldots & \ldots\\
            R^\to(\vy_{|\gY|-1},\vy_0) & R^\to(\vy_{|\gY|-1}, \vy_1) & \ldots & R^\to(\vy_{|\gY|-1}, \vy_{|\gY|-1})\\
        \end{matrix}
    \right].
\end{equation}
Under this condition, Eq.~\eqref{eq:app_fwd_func} can be written as 
\begin{equation}
    \label{eq:fwd_vec}
    \der \vq^\to_t/\der t = \mR^\to\cdot \vq^\to_t
\end{equation}
matching the usual presentation shown in~\cite{chen2024convergence,zhang2024convergence}.
Besides, Eq.~\eqref{eq:transition_rate_prop} shown in Section~\ref{sec:pre} can also be presented as $\vone\cdot \mR =\vzero$.

% \begin{proposition}
%     [Proposition~3 of \cite{chen2024convergence}]
%     Let $Q_{y,y'}(t,t')$ be the probability transition kernel of the CTMC with the transition rate $R$ given in Eq.~\eqref{lem:fwd_trans_ker}.
% \end{proposition}

The following lemma gives the closed-form expression for the probability transition kernel of the forward process, which also suggests an efficient implementation.

\begin{lemma}[Forward transition kernel]
    \label{lem:fwd_trans_ker}
    Consider the forward CTMC, i.e., $\{\rvy_t\}_{t=0}^T$ with the infinitesimal operator $\R^\to$ given in Eq.~\eqref{eq:fwd_transtion_rate_max}. Then for any two timestamps $s\le t$, the forward transition probability satisfies
    \begin{equation*}
        q^\to_{t|s}(\vy|\vy^\prime) =2^{-d\log_2K}\cdot \prod_{i=0}^{d\log_2K-1}\left[1+(-1)^{|\vy_i - \vy_i^\prime|}\cdot e^{-2(t-s)}\right].
    \end{equation*}
\end{lemma}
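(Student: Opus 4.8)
The plan is to exploit the product structure of the forward CTMC. Writing $n := d\log_2 K$, the state space is the hypercube $\gY = \{0,1\}^n$, and the rate function in Eq.~\eqref{eq:fwd_transtion_rate_max} assigns rate $1$ to every single-coordinate flip and rate $-n$ to remaining in place. This is precisely the generator of $n$ independent, identical two-state continuous-time Markov chains, one per coordinate: in the matrix form of Eq.~\eqref{def:trans_ker_to_matrix} one has $\mR^\to = \sum_{i=0}^{n-1} I_2^{\otimes i} \otimes Q \otimes I_2^{\otimes(n-1-i)}$, where $Q = \left(\begin{smallmatrix}-1 & 1\\ 1 & -1\end{smallmatrix}\right)$ is the single-site generator. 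The summands act on distinct tensor factors and hence pairwise commute, so $\exp\bigl((t-s)\mR^\to\bigr) = \bigotimes_{i=0}^{n-1}\exp\bigl((t-s)Q\bigr)$; time-homogeneity guarantees that $q^\to_{t|s}$ is exactly this matrix exponential and depends only on $t-s$.

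Then I would compute the single-site exponential. Since $Q$ is symmetric with eigenvalue $0$ on $(1,1)$ and $-2$ on $(1,-1)$, its spectral decomposition gives $\exp(\tau Q) = \tfrac12\left(\begin{smallmatrix}1&1\\1&1\end{smallmatrix}\right) + \tfrac{e^{-2\tau}}{2}\left(\begin{smallmatrix}1&-1\\-1&1\end{smallmatrix}\right)$, so the probability of being in state $a$ after time $\tau$ when started from $b\in\{0,1\}$ equals $\tfrac12\bigl(1 + (-1)^{|a-b|}e^{-2\tau}\bigr)$. Tensoring over the $n$ coordinates and setting $\tau = t-s$ yields $q^\to_{t|s}(\vy|\vy') = 2^{-n}\prod_{i=0}^{n-1}\bigl(1+(-1)^{|\vy_i - \vy'_i|}e^{-2(t-s)}\bigr)$, which is the claim. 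As a self-contained alternative that stays within the ODE framework already developed, one may instead take the stated formula as an ansatz $g_{t-s}(\vy|\vy')$, check the initial condition $g_0(\vy|\vy')=\delta_{\vy'}(\vy)$ (each factor equals $2$ when $\vy_i=\vy'_i$ and $0$ once some coordinate differs), and verify that for fixed $\vy'$ it solves the forward Kolmogorov equation $\partial_\tau g_\tau(\vy|\vy') = \sum_{\vy''}R^\to(\vy,\vy'')g_\tau(\vy''|\vy')$ of Eq.~\eqref{eq:condi_fwd_dis}; this reduces to the identity $\partial_\tau g_\tau(\vy|\vy') = -n\,g_\tau(\vy|\vy') + \sum_{i=0}^{n-1} g_\tau(\vy^{(i)}|\vy')$, where $\vy^{(i)}$ is $\vy$ with its $i$-th bit flipped, which follows by differentiating the product and using $g_\tau(\vy|\vy') + g_\tau(\vy^{(i)}|\vy') = 2^{1-n}\prod_{j\ne i}\bigl(1+(-1)^{|\vy_j-\vy'_j|}e^{-2\tau}\bigr)$. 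Uniqueness for this finite linear ODE system then forces $q^\to_{t|s}=g_{t-s}$.

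There is no substantive obstacle here—the statement is essentially a computation—so the main points requiring care are bookkeeping rather than depth: (i) tracking the sign $(-1)^{|\vy_i-\vy'_i|}$ under a single-bit flip, namely that flipping bit $i$ negates it, so that the $n$ neighbors of $\vy$ at Hamming distance $1$ are exactly $\{\vy^{(i)}\}_{i=0}^{n-1}$; and (ii) invoking the correct promotion from the per-coordinate computation to the full chain, either via commuting matrix exponentials or via uniqueness of solutions of the Kolmogorov equations. Both are standard.
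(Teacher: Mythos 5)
Your proposal is correct and follows essentially the same route as the paper: decompose $\mR^\to$ into commuting single-coordinate flip generators in Kronecker form, exponentiate each $2\times 2$ factor, and read off the product formula (your spectral computation of $\exp(\tau Q)$ and the ODE-ansatz alternative are just slightly more explicit versions of the same argument). No gaps.
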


\begin{remark}
The transition probability in Lemma~\ref{lem:fwd_trans_ker} factorizes across coordinates. This means that the forward transition can be implemented as $d \log_2 K$ independent bit-wise updates. Specifically, for each coordinate $i$, flip $\vy_i^\prime$ with probability $\frac{1 - e^{-2(t - s)}}{2}$ to obtain $\vy_i$.
\end{remark}
\begin{proof}
    Combining Eq.~\eqref{def:trans_ker_to_matrix} and Eq.~\eqref{eq:fwd_vec}, the dynamic of marginal distribution $q^\to_t$ can be written as a matrix-vector product, i.e.,
    \begin{equation*}
        \der \vq^\to_t/\der t = \mR^\to\cdot \vq^\to_t
    \end{equation*}
    where
    \begin{equation*}
        \mR^\to = \left[
        \begin{matrix}
            R^\to(\vy_0,\vy_0) & R^\to(\vy_0, \vy_1) & \ldots & R^\to(\vy_0, \vy_{|\gY|-1})\\
            R^\to(\vy_1,\vy_0) & R^\to(\vy_1, \vy_1) & \ldots & R^\to(\vy_1, \vy_{|\gY|-1})\\
            \ldots & \ldots & \ldots & \ldots\\
            R^\to(\vy_{|\gY|-1},\vy_0) & R^\to(\vy_{|\gY|-1}, \vy_1) & \ldots & R^\to(\vy_{|\gY|-1}, \vy_{|\gY|-1})\\
        \end{matrix}
    \right].
    \end{equation*}
    Here, $\mR^\to$ can be decomposed into the sum 
    \begin{equation*}
        \mR^\to = \sum_{i=0}^{d\log_2K-1} \mR^\to_i,
    \end{equation*}
    we first note that the state space is $\{0,1\}^{d\log_2K}$, where each coordinate can flip independently. 
    Hence, each coordinate contributes its own ``flip'' component to the overall generator $\mR^\to$. 
    Concretely, let us label the coordinates $0, \dots, d\log_2K-1$, and consider the generator corresponding to a single coordinate $i$. 
    Such a generator only acts nontrivially on the $i$th coordinate, which can flip from $0$ to $1$ or $1$ to $0$, while all other coordinates remain unchanged.

    Each ``flip'' for coordinate $i$ can be represented by a $2\times 2$ generator matrix (reflecting the two possible states, $0$ or $1$). 
    Moreover, since the flipping of different coordinates occurs independently, we adopt the tensor-product (or Kronecker-product) structure, placing the $2\times 2$ flip matrix in the $i$th position and $2\times 2$ identity matrices in all other positions. 
    Hence, each $\mR^\to_i$ is of the form
    \begin{equation*}
        \mR^\to_i =  \mI \otimes\, \cdots \,\otimes\, \mA \,\otimes\, \cdots \,\otimes\, \mI,
    \end{equation*}
    where 
    \begin{equation*}
        \mA \coloneqq \left[
        \begin{matrix}
            -1 & 1\\
            1 & -1
        \end{matrix}
        \right]
    \end{equation*}
    is a generator of the flip in the $i$th coordinate, and $\mI$ is the $2 \times 2$ identity in all coordinates. 
    By the Kolmogorov forward equation, we have
    \begin{equation*}
        \mQ^\to_{t|s} = \exp\left((t-s)\mR^\to\right) = \exp\left((t-s) \mA\right)^{\otimes d} = \left[
            \begin{aligned}
                & \frac{1+e^{-2(t-s)}}{2} && \frac{1-e^{-2(t-s)}}{2} \\
                & \frac{1-e^{-2(t-s)}}{2} && \frac{1+e^{-2(t-s)}}{2} \\
            \end{aligned}
        \right]^{\otimes d} ,
    \end{equation*}
    which implies 
    \begin{equation*}
        q^\to_{t|s}(\vy|\vy^\prime) =2^{-d\log_2K}\cdot \prod_{i=0}^{d\log_2K-1}\left[1+(-1)^{|\vy_i - \vy_i^\prime|}\cdot e^{-2(t-s)}\right]\quad \text{and}\quad \vy,\vy^\prime\in \gY.
    \end{equation*}
    Hence, the proof is completed.
\end{proof}

Figure~\ref{fig:graph} visualizes the evolution of transition probabilities under different forward processes. The tridiagonal CTMC (second row) can be viewed as a discrete analogue of the normalized Gaussian transition (first row), where the domain $[0,1]$ is quantized into 8 bins.
The tridiagonal structure results in slow mixing, as transitions are restricted to immediate neighbors. At small time steps (e.g., $t = 0.01$, first column), the transition kernel satisfies $\mQ^\to_{t+\Delta t|t} \approx \mI + \Delta t \cdot \mR^\to$, so the sparsity of the transition kernel closely reflects that of the rate matrix $\mR^\to$. For efficient simulation of the reverse process, defined by $R_t^\gets(\vy, \vy^\prime) \coloneqq   R^\to(\vy^\prime, \vy)\cdot\frac{q^\gets_t(\vy)}{q^\gets_t(\vy^\prime)}$ as Eq.~\eqref{eq:rev_func}, it is essential that $\mR^\to$ remains sparse.
While the dense forward process (third row) mixes rapidly, it incurs high computational cost per step when simulating the reverse process. In contrast, the hypercube structure (fourth row) achieves a favorable balance: it enables efficient long-range transitions for fast mixing while preserving an $\mathcal{O}(\log |\gY|)$ sparse structure for efficient implementation.

\begin{figure}[H]
    \centering
    \includegraphics[width=0.99\linewidth]{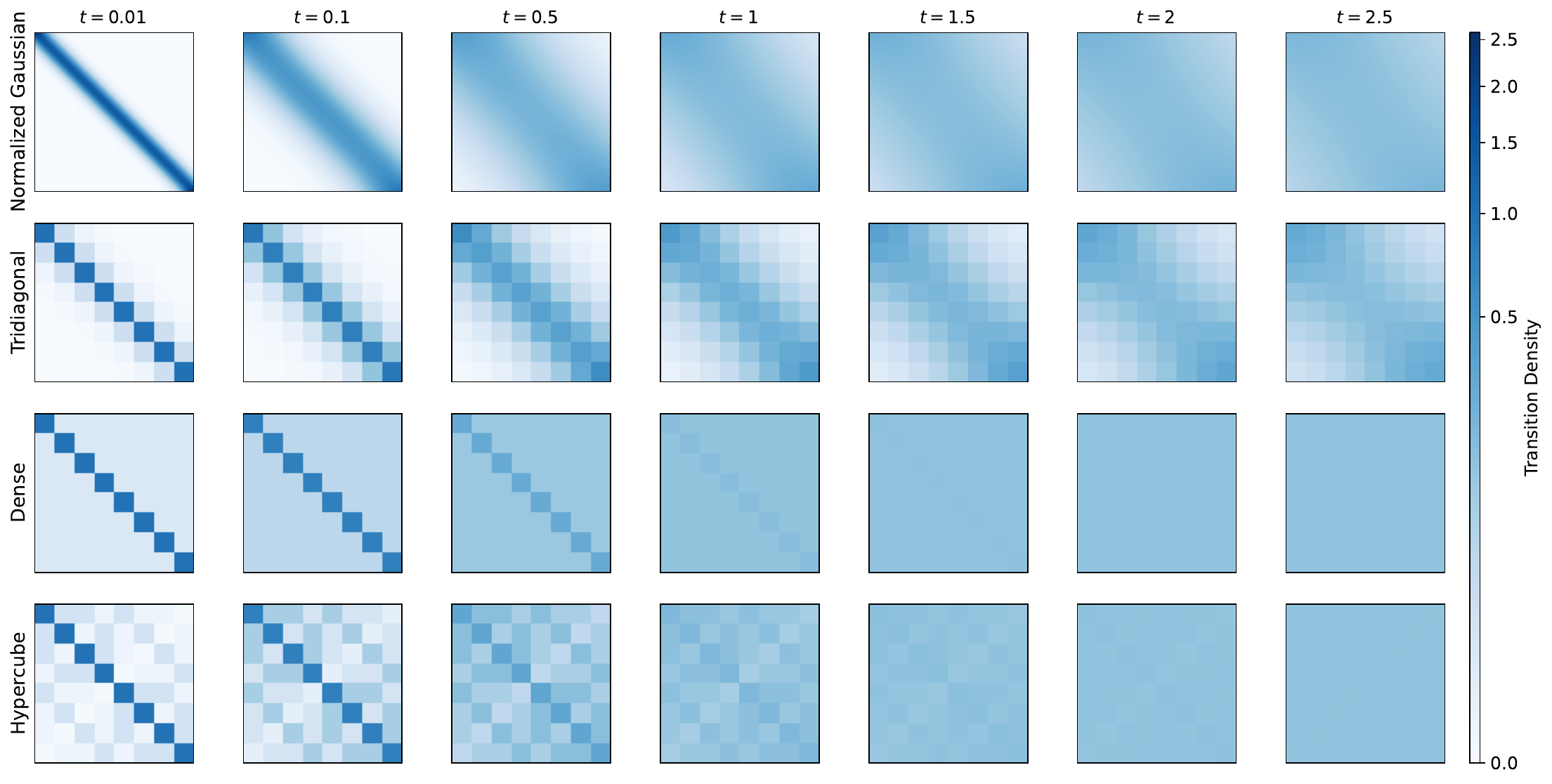}
    \caption{
    Heatmaps of the probability transition at different time steps $t$ for four diffusion processes: a continuous normalized Gaussian kernel on $[0,1]$ (top row), and discrete CTMCs over $|\gY| = 8$ states based on tridiagonal, dense, and hypercube transition rate matrices (bottom three rows).
    %Heatmaps of the transition matrices after $1$ to $7$ steps for the tridiagonal, dense, and hypercube graphs shown in Figure~\ref{fig:adj_diff_space} with $|\gY| = 8$. 
    %This figure demonstrates the tradeoff between implementation complexity and mixing rate in multi-step transitions.
    %While the tridiagonal and dense graphs either mix slowly or lack sparsity in the adjacency matrix, the hypercube design maintains $\mathcal{O}(\log |\gY|)$-sparse one-step transitions for efficient implementation and quickly achieves dense connectivity within $\mathcal{O}(\log |\gY|)$ steps.
        %Heatmaps of the (weighted) adjacency matrices corresponding to the graphs in Figure~\ref{fig:adj_diff_space}. Each row shows the transition matrices for 1-step (top) and 3-step (bottom) transitions. An ideal design maintains sparsity of one-step transition matrix for efficient implementation and achieves high connectivity over multiple steps to ensure rapid mixing. Notably, after a number of iterations roughly equal to the graph diameter, the transition matrix becomes dense.
    }
    \label{fig:graph}
\end{figure}

\subsection{Proof of Eq.~\eqref{eq:rev_func}}
\label{sec:a2_reverse_prop}

\begin{proof}
For any $t\in[0,T]$, the marginal, joint, and conditional distribution w.r.t. $\{\rvy_t^\gets\}$ are denoted as
\begin{equation*}
    \rvy^\gets_t\sim q^\gets_t,\quad (\rvy^\gets_t, \rvy^\gets_{t^\prime}) \sim q^\gets_{t,t^\prime}, \quad \text{and}\quad q^\gets_{t^\prime|t} = q_{t^\prime, t}/q_t,
\end{equation*}
which have $q^\gets_t = q^\to_{T-t}$.
Then, we start to check the dynamic of $q^\gets_{t|s}$, i.e.,
\begin{equation}
    \label{ineq:discrete_rev_0}
    \begin{aligned}
        & \partial_t q_{t|s}^\gets(\vy^\prime|\vy) = -1\cdot \partial_{T-t} q^\to_{T-t|T-s}(\vy^\prime|\vy) = -1\cdot \partial_{T-t}\left[\frac{q^\to_{T-s|T-t}(\vy|\vy^\prime)\cdot q^\to_{T-t}(\vy^\prime)}{q^\to_{T-s}(\vy)}\right]\\
        & = - \underbrace{\partial_{T-t} q^\to_{T-s|T-t}(\vy|\vy^\prime)\cdot \frac{q^\to_{T-t}(\vy^\prime)}{q^\to_{T-s}(\vy)}}_{\text{Term 1}} - \underbrace{\frac{q^\to_{T-s|T-t}(\vy|\vy^\prime)}{q^\to_{T-s}(\vy)}\cdot \partial_{T-t} q^\to_{T-t}(\vy^\prime)}_{\text{Term 2}}.
    \end{aligned}
\end{equation}
For $\text{Term 1}$ of Eq.~\eqref{ineq:discrete_rev_0}, we have
\begin{equation*}
    \begin{aligned}
        \text{Term 1} & = -\sum_{\tilde{\vy}\in \gY} R^\to(\tilde{\vy}, \vy^\prime)\cdot q^\to_{T-s|T-t}(\vy|\tilde{\vy})\cdot \frac{q^\to_{T-t}(\tilde{\vy})}{q^\to_{T-s}(\vy)}\cdot \frac{q^\to_{T-t}(\vy^\prime)}{q^\to_{T-t}(\tilde{\vy})}\\
        & = -\sum_{\tilde{\vy}\in \gY} R^\to(\tilde{\vy}, \vy^\prime)\cdot\frac{q^\to_{T-t}(\vy^\prime)}{q^\to_{T-t}(\tilde{\vy})}\cdot q^\to_{T-t|T-s}(\tilde{\vy}|\vy),
    \end{aligned}
\end{equation*}
where the first equation follows from the Kolmogorov backward theorem (Lemma~\ref{lem:bkw_kolmo}) and Eq.~\eqref{def:infinitesimal_oper}:
\begin{equation*}
    \partial_{T-t} q^\to_{T-s|T-t}(\vy|\vy^\prime) = -\gL^\to[q^\to_{T-s|T-t}(\vy| \cdot)](\vy^\prime) = - \left<q^\to_{T-s|T-t}(\vy|\cdot), R^\to(\cdot,\vy^\prime)\right>_{\gY}.
\end{equation*}
For Term 2 of Eq.~\eqref{ineq:discrete_rev_0}, we have
\begin{equation*}
    \begin{aligned}
        \text{Term 2} & =  \frac{q^\to_{T-s|T-t}(\vy|\vy^\prime)}{q^\to_{T-s}(\vy)}\cdot \sum_{\tilde{\vy}\in \gY} R^\to(\vy^\prime, \tilde{\vy})\cdot q^\to_{T-t}(\tilde{\vy})\\
        & = \frac{q^\to_{T-s|T-t}(\vy|\vy^\prime)\cdot q^\to_{T-t}(\vy^\prime)}{q^\to_{T-s}(\vy)}\cdot \sum_{\tilde{\vy}\in\gY} R^\to(\vy^\prime, \tilde{\vy})\cdot \frac{q^\to_{T-t}(\tilde{\vy})}{q^\to_{T-t}(\vy^\prime)} = 0,
    \end{aligned}
\end{equation*}
where the first equation follows from Eq.~\eqref{eq:fwd_func} and the last equation follows from the fact
\begin{equation*}
    \begin{aligned}
        & \sum_{\tilde{\vy}\in\gY} R^\to(\vy^\prime, \tilde{\vy})\cdot \frac{q^\to_{T-t}(\tilde{\vy})}{q^\to_{T-t}(\vy^\prime)} = \sum_{\tilde{\vy}\in \gY} \lim_{t\rightarrow 0}\left[\frac{q^\to_{t|0}(\vy^\prime|\tilde{\vy})-\delta_{\tilde{\vy}}(\vy^\prime)}{t}\right] \cdot \frac{q^\to_{T-t}(\tilde{\vy})}{q^\to_{T-t}(\vy^\prime)} \\
        & =  \sum_{\tilde{\vy}\in \gY} 
 \lim_{t^\prime\rightarrow T-t} \left[\frac{q^\to_{t^\prime|T-t}(\vy^\prime|\tilde{\vy})-\delta_{\tilde{\vy}}(\vy^\prime)}{t^\prime - (T-t)}\right] \cdot \lim_{t^\prime \rightarrow T-t} \frac{q^\to_{T-t}(\tilde{\vy})}{q^\to_{t^\prime}(\vy^\prime)} = \sum_{\tilde{\vy}\in \gY} \lim_{t^\prime\rightarrow T-t} \left[\frac{q^\to_{T-t|t^\prime}(\tilde{\vy}|\vy^\prime) - \delta_{\vy^\prime}(\tilde{\vy})}{t^\prime - (T-t)}\right] = 0.
    \end{aligned}
\end{equation*}
Under this condition, by setting 
\begin{equation*}
    \begin{aligned}
        R_t^\gets(\vy^\prime, \tilde{\vy}) \coloneqq   R(\tilde{\vy}, \vy^\prime)\cdot\frac{q^\gets_t(\vy^\prime)}{q^\gets_t(\tilde{\vy})},
    \end{aligned}
\end{equation*}
then Eq.~\eqref{ineq:discrete_rev_0} can be summarized as
\begin{equation}
    \label{eq:condi_rev_func}
    \begin{aligned}
        \partial_t q_{t|s}^\gets(\vy^\prime|\vy) = \left<R^\gets_t(\vy^\prime, \cdot), q^\gets_{t|s}(\cdot|\vy)\right>_{\gY} = \sum_{\tilde{\vy}\in \gY} R^\gets_{t}(\vy^\prime, \tilde{\vy})\cdot q_{t|s}^\gets(\tilde{\vy}|\vy).
    \end{aligned}
\end{equation}
Combining with the Bayes' Theorem, we have
\begin{equation}
    \label{eq:app_rev_func}
    \frac{\der q^\gets_t}{\der t}(\vy) = \left<R^\gets_t(\vy,\cdot), q^\gets_t\right>_{\gY}.
\end{equation}
Hence, Eq.~\eqref{eq:rev_func} establishes.
\end{proof}

\subsection{Proof of Eq.~\eqref{def:mean_of_R_gets}}
\label{sec:app_prof_mean_of_r_gets}
\begin{proof}[Adapted from Proposition 1 of~\cite{campbell2022continuous}]
    The RHS of Eq.~\eqref{def:mean_of_R_gets} satisfies
    \begin{equation*}
        \lim_{\Delta t\rightarrow 0}\left[\frac{q^\gets_{t+\Delta t|t}(\vy|\vy^\prime)-\delta_{\vy^\prime}(\vy)}{\Delta t}\right] = \lim_{s\rightarrow t} \partial_t q^\gets_{t|s}(\vy|\vy^\prime).
    \end{equation*}
    Besides, we have
    \begin{equation*}
        \begin{aligned}
            &\lim_{s\rightarrow t} \partial_t q^\gets_{t|s}(\vy|\vy^\prime) = \lim_{s\rightarrow t} \partial_t \left[q^\to_{T-s|T-t}(\vy^\prime|\vy)\cdot \frac{q^\to_{T-t}(\vy)}{q^\to_{T-s}(\vy^\prime)}\right]\\
            & = \lim_{s\rightarrow t}\left[\partial_t(q^\to_{T-s|T-t}(\vy^\prime|\vy))\cdot \frac{q^\to_{T-t}(\vy)}{q^\to_{T-s}(\vy^\prime)} + q^\to_{T-s|T-t}(\vy^\prime|\vy)\cdot \frac{\partial_t q^\to_{T-t}(\vy)}{q^\to_{T-s}(\vy^\prime)}\right].
        \end{aligned}
    \end{equation*}
    When $\vy\not=\vy^\prime$, we have
    \begin{equation*}
        \lim_{s\rightarrow t}q^\to_{T-s|T-t}(\vy^\prime|\vy) = 0,
    \end{equation*}
    which implies
    \begin{equation*}
        \lim_{s\rightarrow t} \partial_t q^\gets_{t|s}(\vy|\vy^\prime) =\lim_{s\rightarrow t}\partial_t(q^\to_{T-s|T-t}(\vy^\prime|\vy))\cdot \frac{q^\to_{T-t}(\vy)}{q^\to_{T-s}(\vy^\prime)} = R^\to(\vy^\prime, \vy)\cdot \frac{q^\to_{T-t}(\vy)}{q^\to_{T-t}(\vy^\prime)}.
    \end{equation*}
    The last equation follows from the Kolmogorov backward theorem, i.e., Lemma~\ref{lem:bkw_kolmo} and Eq.~\eqref{def:infinitesimal_oper}
    \begin{equation*}
        \partial_{T-t} q^\to_{T-s|T-t}(\vy^\prime|\vy) = -\gL^\to[q^\to_{T-s|T-t}(\vy^\prime| \cdot)](\vy) = - \left<q^\to_{T-s|T-t}(\vy^\prime|\cdot), R^\to(\cdot,\vy)\right>_{\gY} = R^\to(\vy^\prime,\vy).
    \end{equation*} 
    Combining with Eq.~\eqref{eq:rev_func}, we have
    \begin{equation}
        \label{eq:R_gets_temp1}
        \lim_{\Delta t\rightarrow 0}\left[\frac{q^\gets_{t+\Delta t|t}(\vy|\vy^\prime)-\delta_{\vy^\prime}(\vy)}{\Delta t}\right] = \lim_{s\rightarrow t} \partial_t q^\gets_{t|s}(\vy|\vy^\prime)= R^\to(\vy^\prime, \vy)\cdot \frac{q^\to_{T-t}(\vy)}{q^\to_{T-t}(\vy^\prime)} = R^\gets_{t}(\vy,\vy^\prime)
    \end{equation}
    when $\vy^\prime\not=\vy$.
    Besides, we have
    \begin{equation*}
        \begin{aligned}
            &\sum_{\vy\in\gY} R^\gets_t(\vy,\vy^\prime) = \sum_{\vy\in\gY}R^\to(\vy^\prime,\vy)\cdot \frac{q_{T-t}^\to(\vy)}{q_{T-t}^\to(\vy^\prime)}\\
            & = \sum_{\vy\in\gY} \lim_{\Delta t\rightarrow 0}\left[\frac{q^\to_{T-t+\Delta t|T-t}(\vy^\prime|\vy)-\delta_{\vy}(\vy^\prime)}{\Delta t}\right]\cdot \frac{q_{T-t}^\to(\vy)}{q_{T-t}^\to(\vy^\prime)} = \sum_{\vy\in\gY} \lim_{\Delta t\rightarrow 0}\left[\frac{q^\to_{T-t+\Delta t|T-t}(\vy|\vy^\prime)-\delta_{\vy^\prime}(\vy)}{\Delta t}\right]=0,
        \end{aligned}
    \end{equation*}
    which means 
    \begin{equation*}
        R_t^\gets(\vy^\prime,\vy^\prime) = -\sum_{\vy\not=\vy^\prime} R^\gets_t(\vy,\vy^\prime) = \lim_{\Delta t\rightarrow 0} - \left[\frac{1-\sum_{\vy\not=\vy^\prime} q^\gets_{t+\Delta t|t}(\vy|\vy^\prime)}{\Delta t}\right],
    \end{equation*}
    where the last inequality follows from Eq.~\eqref{eq:R_gets_temp1}.
    Hence, the proof is completed.
\end{proof}

\section{Proof of Lemma~\ref{lem:discrete_quantization_gap}}
\label{sec:discrete_quantization_gap}

\begin{lemma}
    \label{lem:histog_dis_comp}
    Suppose the data distribution $p_*$ is $\sigma$ sub-Gaussian, by choosing
    $L\ge \sigma\cdot \sqrt{2\ln(2d/\epsilon)}$, the TV distance between $p_*$ and $\tilde{p}_*$ defined in Eq.~\eqref{def:bounded_approx} will be smaller than $\epsilon$, i.e., $\TVD{p_*}{\tilde{p}_*}\le \epsilon$.
\end{lemma}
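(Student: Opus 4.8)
The plan is to reduce the total variation distance to a tail-probability estimate and then apply a coordinate-wise Chernoff bound. First I would observe that $\tilde p_*$ is, by its definition in Eq.~\eqref{def:bounded_approx}, just $p_*$ restricted to $\Cb{L}$ and renormalized by the mass $p_*(\Cb{L}) = \int_{\Cb{L}} p_*(\vx)\,\der\vx$. Splitting the integral $\int_{\R^d}|p_* - \tilde p_*|$ over $\Cb{L}$ and its complement and using that on $\Cb{L}$ we have $\tilde p_* = p_*/p_*(\Cb{L}) \ge p_*$ while off $\Cb{L}$ we have $\tilde p_* = 0$, a short computation gives the exact identity
\begin{equation*}
    \TVD{p_*}{\tilde p_*} = \tfrac12\int_{\R^d}\bigl|p_*(\vx) - \tilde p_*(\vx)\bigr|\,\der\vx = \Pr_{\rvx\sim p_*}\!\left[\rvx\notin\Cb{L}\right].
\end{equation*}
In particular this quantity is monotone decreasing in $L$, so it suffices to verify the bound at $L = \sigma\sqrt{2\ln(2d/\epsilon)}$.

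Next I would control the tail mass by a union bound over the $d$ coordinates, writing
\begin{equation*}
    \Pr_{\rvx\sim p_*}\!\left[\rvx\notin\Cb{L}\right] \le \sum_{j=1}^d\Bigl(\Pr[\rvx_j > L] + \Pr[\rvx_j < -L]\Bigr).
\end{equation*}
For a single coordinate, specializing the vector sub-Gaussian hypothesis~\ref{a3} to $\vu = \ve_j$ (respectively $\vu = -\ve_j$) yields $\E\!\left[\exp(t\rvx_j)\right]\le \exp(\sigma^2 t^2/2)$ for every $t\in\R$. A standard Chernoff argument — $\Pr[\rvx_j > L]\le \inf_{t>0} e^{-tL}\E[e^{t\rvx_j}]$ with the optimal choice $t = L/\sigma^2$ — then gives $\Pr[\rvx_j > L]\le \exp\!\bigl(-L^2/(2\sigma^2)\bigr)$, and the same for the lower tail. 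Hence $\Pr_{\rvx\sim p_*}[\rvx\notin\Cb{L}] \le 2d\exp\!\bigl(-L^2/(2\sigma^2)\bigr)$.

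Finally, substituting $L = \sigma\sqrt{2\ln(2d/\epsilon)}$ makes $L^2/(2\sigma^2) = \ln(2d/\epsilon)$, so the right-hand side is exactly $2d\cdot\epsilon/(2d) = \epsilon$, which together with the identity from the first step proves $\TVD{p_*}{\tilde p_*}\le\epsilon$. I do not expect a serious obstacle in this lemma; the only points that need care are getting the restriction–renormalization identity for TV right (so that no stray constant factor of $2$ creeps in) and noting that the scalar moment-generating bound, and hence both one-sided Chernoff estimates, follow from the stated vector sub-Gaussianity simply by taking $\vu = \pm\ve_j$.
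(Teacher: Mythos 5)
Your proposal is correct and follows essentially the same route as the paper's proof: reduce $\TVD{p_*}{\tilde p_*}$ to the tail mass $\Pr_{\rvx\sim p_*}[\rvx\notin\Cb{L}]$ via the restriction--renormalization identity, bound each coordinate's tail by $\exp(-L^2/(2\sigma^2))$ using sub-Gaussianity with $\vu=\pm\ve_j$, and conclude by a union bound with the stated choice of $L$. The only cosmetic difference is that you spell out the one-sided Chernoff optimization explicitly, whereas the paper quotes the two-sided tail bound directly.
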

\begin{proof}
    When $p_*$ satisfies $\sigma$ sub-Gaussian properties, i.e.,
    \begin{equation*}
        \E_{\rvx\sim p_*}\left[\exp\left(l\left<\rvx, \vu\right>\right)\right]\le \exp\left(\frac{\sigma^2l^2 \cdot \|\vu\|^2}{2}\right).
    \end{equation*}
    By choosing $\vu = \ve_i$, we can easily found that each dimension of $\rvx$ will be $\sigma$ sub-Gaussian, i.e.,
    \begin{equation*}
        \E_{\rvx_i \sim p_{*,i}}\left[\exp\left(l\rvx_i \vu_i\right)\right]\le \exp\left(\frac{\sigma^2l^2 \cdot \|\vu_i\|^2}{2}\right).
    \end{equation*}
    According to the sub-Gaussian properties for each coordinate, we have
    \begin{equation*}
        \Pr_i\left[\left|\rvx_i\right|\ge l\right]\le 2\exp\left(-\frac{l^2}{2\sigma^2}\right).
    \end{equation*}
    With the union bound, we have
    \begin{equation*}
        \Pr_*\left[\max_{1\le i\le d}\ \left|\rvx_i\right| >L\right]\le\sum_{i=1}^d \Pr_*\left[|\rvx_i| > L\right] \le 2d\cdot \exp\left(-\frac{L^2}{2\sigma^2}\right).
    \end{equation*}
    Under this condition, by supposing 
    \begin{equation}
        \label{ineq:prob_beyond_cube_}
        2d\cdot \exp\left(-\frac{L^2}{2\sigma^2}\right) \le \epsilon \quad \Leftrightarrow\quad L\ge \sigma\cdot \sqrt{2\ln \frac{2d}{\epsilon}},
    \end{equation}
    we have $\Pr_*\left[\max_{1\le i\le d}\left|\rvx_i\right|\ge L\right]\le\epsilon$.
    Under this condition, the total variation distance between $\tilde{p}_*$ and $p_*$ can be upper bounded by 
    \begin{equation}
        \label{ineq:tv_gap_tilde_ori}
        \begin{aligned}
            &\TVD{p_*}{\tilde{p}_*} =\frac{1}{2} \int_{\R^d} \left|p_*(\vx) - \tilde{p}_*(\vx)\right| \der \vx \\
            & = \frac{1}{2}\int_{\vx\in\Cb{L}} \left(\tilde{p}_*(\vx) - p_*(\vx)\right)\der \vx + \frac{1}{2}\int_{\vx\not\in \Cb{L}} p_*(\vx) \der\vx\\
            & = \frac{1}{2}\left[1-\int_{\vx\in\Cb{L}} p_*(\vx) \der \vx\right]+ \frac{1}{2}\int_{\vx\not\in \Cb{L}} p_*(\vx) \der\vx\\
            & = \int_{\vx\not\in \Cb{L}} p_*(\vx)\der \vx \le \epsilon
        \end{aligned}
    \end{equation}
    where the last inequality follows from Eq.~\eqref{ineq:prob_beyond_cube_}. Hence, the proof is completed.
\end{proof}

\begin{lemma}
    \label{lem:histog_dis_quan}
    Suppose the distribution $\tilde{p}_*$ defined in Eq.~\eqref{def:bounded_approx} satisfies $H$--smoothness, by choosing
    \begin{equation*}
        l \le (2HL+\|\grad f_*(\vzero)\|)^{-1}\cdot d^{-1/2}\epsilon,
    \end{equation*}
    the TV distance satisfies $\TVD{\tilde{p}_*}{\overline{p}_*}\le 2\epsilon$ where  $\overline{p}_*$ is defined in Eq.~\eqref{def:quantize_cell}.
\end{lemma}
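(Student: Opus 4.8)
The plan is to bound $\TVD{\tilde{p}_*}{\overline{p}_*}$ cell by cell, exploiting that $\overline{p}_*$ is \emph{exactly} the cell-average of $\tilde{p}_*$ and that, over a single cell of side $l$, $\tilde{p}_*$ is nearly constant. Since both densities are supported on $\Cb{L}$ and integrate to $1$, $\TVD{\tilde{p}_*}{\overline{p}_*} = \tfrac{1}{2}\int_{\Cb{L}}|\tilde{p}_*(\vx) - \overline{p}_*(\vx)|\,\der\vx = \tfrac{1}{2}\sum_{\vi}\int_{\Ce{\vi}}|\tilde{p}_*(\vx) - \overline{p}_*(\vx)|\,\der\vx$, the sum ranging over the $K^d$ cells. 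On a fixed cell, Eq.~\eqref{def:quantize_cell} gives $\overline{p}_*(\vx) = l^{-d}\int_{\Ce{\vi}}\tilde{p}_*(\vu)\,\der\vu$, so $\tilde{p}_*(\vx) - \overline{p}_*(\vx) = l^{-d}\int_{\Ce{\vi}}\big(\tilde{p}_*(\vx) - \tilde{p}_*(\vu)\big)\,\der\vu$, whence $\int_{\Ce{\vi}}|\tilde{p}_* - \overline{p}_*|\,\der\vx \le l^{d}\cdot\sup_{\vx,\vu\in\Ce{\vi}}|\tilde{p}_*(\vx)-\tilde{p}_*(\vu)|$.

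Next I would turn the pairwise oscillation into a gradient bound. On $\Cb{L}$ we have $\tilde{p}_*\propto\exp(-f_*)$, so $\grad\tilde{p}_* = -\tilde{p}_*\,\grad f_*$; integrating along the segment $[\vu,\vx]$ (which stays inside the convex cell) and using $\|\vx-\vu\|\le\sqrt{d}\,l$ gives $|\tilde{p}_*(\vx)-\tilde{p}_*(\vu)|\le\sqrt{d}\,l\cdot\sup_{\Ce{\vi}}\big(\tilde{p}_*\,\|\grad f_*\|\big)$. By $H$--smoothness of $f_*$ (Assumption~\ref{a2}), $\|\grad f_*(\vz)\|\le\|\grad f_*(\vzero)\| + H\|\vz\|$, which is uniformly bounded over the compact set $\Cb{L}$; this is the source of the factor $2HL+\|\grad f_*(\vzero)\|$, with the $\sqrt d$ coming from the $\ell_2$ extent of $\Cb{L}$ being folded into the $d^{-1/2}$ of the stated bound on $l$. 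For the remaining density factor, since $\tilde{p}_*$ varies across a cell by at most the multiplicative factor $e^{\mathrm{osc}_{\Ce{\vi}}(f_*)}$, and the chosen $l$ makes $\mathrm{osc}_{\Ce{\vi}}(f_*) = O(1)$, we may replace $\sup_{\Ce{\vi}}\tilde{p}_*$ by $O(1)\cdot l^{-d}\int_{\Ce{\vi}}\tilde{p}_*\,\der\vu$.

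Assembling the pieces, $\int_{\Ce{\vi}}|\tilde{p}_*-\overline{p}_*|\,\der\vx\le O(1)\cdot\sqrt{d}\,l\,\big(2HL+\|\grad f_*(\vzero)\|\big)\int_{\Ce{\vi}}\tilde{p}_*\,\der\vu$; summing over all cells and using $\sum_{\vi}\int_{\Ce{\vi}}\tilde{p}_*\,\der\vu = 1$ gives $\TVD{\tilde{p}_*}{\overline{p}_*} = O\!\big(\sqrt{d}\,l\,(2HL+\|\grad f_*(\vzero)\|)\big)$. Substituting $l\le(2HL+\|\grad f_*(\vzero)\|)^{-1}d^{-1/2}\epsilon$ and fixing the absolute constants so the $O(\cdot)$ collapses to $\le 2$ delivers $\TVD{\tilde{p}_*}{\overline{p}_*}\le2\epsilon$.

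The main obstacle is the step that replaces the pointwise supremum $\sup_{\Ce{\vi}}\tilde{p}_*$ by the cell average: the naive bound $\int_{\Ce{\vi}}|\tilde{p}_*-\overline{p}_*|\le(\text{osc of }\tilde{p}_*\text{ on }\Ce{\vi})\cdot l^d$, summed over the $K^d$ cells, is only a Riemann upper sum and a priori need not collapse to $\int\tilde{p}_* = 1$, so one must verify that the per-cell oscillation of $f_*$ is genuinely $O(1)$ under the prescribed $l$. This is exactly why, when the lemma is combined with Lemma~\ref{lem:histog_dis_comp} to establish Lemma~\ref{lem:discrete_quantization_gap}, the quantities $d$ and $m_0$ reappear: one bounds $\|\grad f_*(\vzero)\|\le\sqrt{Hd}+H\sqrt{m_0}$ via Lemma~\ref{lem:lem11_vempala2019rapid} together with Assumption~\ref{a1}. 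Tracking the $\sqrt d$ factors from the cell diameter and from the $\ell_2$ extent of $\Cb{L}$, so that they match the $d^{-1/2}$ in the hypothesis, is then routine bookkeeping.
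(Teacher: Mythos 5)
Your proposal follows essentially the same route as the paper's proof: split the TV distance over the $K^d$ cells, use the $H$--smoothness of $f_*$ together with the cell diameter $\|\vx-\vu\|\le\sqrt{d}\,l$ and a gradient bound on $\Cb{L}$ to show $\tilde{p}_*$ is nearly constant on each cell, and---the step you correctly single out as the crux---express the per-cell error as a multiple of the cell's own probability mass so that the sum over cells collapses to $\int_{\Cb{L}}\tilde{p}_*=1$ rather than a count of cells. The paper's execution is slightly more direct: by the mean value theorem $\overline{p}_*(\vx)=\tilde{p}_*(\bar{\vx}_{\vi})$ for some $\bar{\vx}_{\vi}$ in the cell, and the multiplicative bound $e^{-2\epsilon}\le \tilde{p}_*(\vu)/\tilde{p}_*(\vx)\le e^{2\epsilon}$ (obtained from $|f_*(\vx)-f_*(\vu)|\le \|\grad f_*(\vu)\|\,\|\vx-\vu\|+\tfrac{H}{2}\|\vx-\vu\|^2$, with the quadratic term absorbed under a harmless $\epsilon\le 8HL^2$) gives the pointwise relative-error statement $|\overline{p}_*(\vx)-\tilde{p}_*(\vx)|\le 4\epsilon\,\tilde{p}_*(\vx)$ directly, so no separate sup-to-average conversion is needed; your additive-oscillation route via $\grad\tilde{p}_*=-\tilde{p}_*\grad f_*$ followed by converting $\sup_{\mathrm{cell}}\tilde{p}_*$ back to the cell average is an equivalent, marginally more roundabout way to the same conclusion.

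One bookkeeping caveat: your parenthetical claim that the $\sqrt{d}$ from the $\ell_2$ extent of $\Cb{L}$ is ``folded into the $d^{-1/2}$ of the stated bound on $l$'' does not work as written, because that single $d^{-1/2}$ is already spent on the cell diameter $\sqrt{d}\,l$; nothing is left to absorb $H\sup_{\vu\in\Cb{L}}\|\vu\|\le H\sqrt{d}\,L$, and with that honest bound your final display would read $O\bigl(\sqrt{d}\,l\,(H\sqrt{d}L+\|\grad f_*(\vzero)\|)\bigr)$, which the stated $l$ does not make $O(\epsilon)$. The paper instead asserts $\|\grad f_*(\vu)\|\le 2HL+\|\grad f_*(\vzero)\|$ for $\vu\in\Cb{L}$ (implicitly treating $\|\vu\|\le 2L$, an $\ell_\infty$-type bound), so you are not missing any ingredient the paper uses---but you should either state the gradient bound on $\Cb{L}$ in that form, as the paper does, or accept a correspondingly smaller choice of $l$, rather than spending the same $d^{-1/2}$ twice.
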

% \yl{new proof}
\begin{proof}
By Lagrange's mean value theorem, for each cell $\Ce{i_0, i_1,\ldots,i_{d-1}}$, there exists a point $\Bar{\vx}_{{i_0, i_1,\ldots,i_{d-1}}}\in \Ce{i_0, i_1,\ldots,i_{d-1}}$ such that 
\[
\tilde{p}_*(\Bar{\vx}_{{i_0, i_1,\ldots,i_{d-1}}})=\frac{\int_{\vu \in \Ce{i_0, i_1,\ldots,i_{d-1}}}\tilde{p}_*(\vu)\der \vu}{l^d}.
\]
Therefor, the piecewise constant density $\overline{p}_*$ satisfies $\overline{p}_*(\vx)=\tilde{p}_*(\Bar{\vx}_{{i_0, i_1,\ldots,i_{d-1}}})$, for any $\vx \in \Ce{i_0, i_1,\ldots,i_{d-1}}$.

We now aim to bound the difference $\left|\tilde{p}_*(\vu)-\tilde{p}_*(\vx)\right|$ for any $\vu, \vx \in \Ce{i_0, i_1,\ldots,i_{d-1}}$, using $H$--smoothness. Later, we will choose $\vu = \Bar{\vx}_{i_0, i_1,\ldots,i_{d-1}}$ to bound the total variation distance between $\tilde{p}_*$ and $\overline{p}_*$.

According to the construction of $\tilde{p}_*$, i.e., Eq.~\eqref{def:bounded_approx}, we have
\begin{equation}
    \label{eq:density_ratio_mid}
    \frac{\tilde{p}_*(\vu)}{\tilde{p}_*(\vx)} = \frac{p_*(\vu)}{p_*(\vx)} = \exp\left(f_*(\vx)-f_*(\vu)\right).
\end{equation}
With $H$--smoothness, i.e., $\left\|\grad^2 f_*\right\|\le H$, we have
\begin{equation}
    \label{ineq:density_ratio_mid_upb}
    \begin{aligned}
        & f_*(\vx) - f_*(\vu) \le \grad f_*(\vu)\cdot \left(\vx - \vu\right) + \frac{H}{2}\cdot \left\|\vu - \vx\right\|^2\\
        & \le \left\|\grad f_*(\vu)\right\|\cdot \left\|\vx- \vu\right\| + \frac{H}{2}\cdot \left\|\vx- \vu\right\|^2.
    \end{aligned}
\end{equation}
Since $\vu, \vx \in \Ce{i_0, i_1,\ldots,i_{d-1}}$, and each cell is an axis-aligned hypercube of side length $l$, we have
\begin{equation*}
    \left\|\vx - \vu\right\|^2 = \sum_{i=1}^d \left\|\vx_i - \vu_i\right\|^2 \le dl^2.
\end{equation*}
Let $G_0\coloneqq \|\grad f_*(\vzero)\|$. Then we have
\begin{equation*}
    \left\|\grad f_*(\vu)\right\|\le \left\|\grad f_*(\vu) - \grad f_*(\vzero)\right\| + G_0\le H\cdot 2L + G_0,
\end{equation*}
where the last inequality follows from $\vu\in \Cb{L}$.
Therefore, by requiring 
\begin{equation*}
    l \le \frac{\epsilon}{\sqrt{d}\cdot (2HL+G_0)},
\end{equation*}
and $\epsilon\leq 8 H L^2$ without loss of generality, we will have $l\leq \sqrt{2\epsilon/(d H)}$, which means
\begin{equation}
    \label{ineq:density_ratio_mid_fin}
   \left\|\grad f_*(\vu)\right\|\cdot \left\|\vx- \vu\right\| + \frac{H}{2}\cdot \left\|\vx- \vu\right\|^2\le (2HL+G_0)\cdot \sqrt{d}l + \frac{H}{2}\cdot dl^2 \le 2\epsilon.
\end{equation}
Plugging Eq.~\eqref{ineq:density_ratio_mid_upb} and Eq.~\eqref{ineq:density_ratio_mid_fin} into Eq.~\eqref{eq:density_ratio_mid}, we have
\begin{equation}
    \label{ineq:tilde_ratio_upper}
    \frac{\tilde{p}_*(\vu)}{\tilde{p}_*(\vx)}\le \exp(2\epsilon)\le (1+4\epsilon).
\end{equation}
With a similar technique, we have
\begin{equation*}
    \begin{aligned}
        -(f_*(\vx) - f_*(\vu)) = f_*(\vu) - f_*(\vx) \le \left\|\grad f_*(\vx)\right\|\cdot \left\|\vx- \vu\right\| + \frac{H}{2}\cdot \left\|\vx- \vu\right\|^2.
    \end{aligned}
\end{equation*}
Under the same setting, it implies
\begin{equation}
    \label{ineq:tilde_ratio_lower}
    \frac{\tilde{p}_*(\vx)}{\tilde{p}_*(\vu)}\le \exp(2\epsilon) \quad \Leftrightarrow\quad  \frac{\tilde{p}_*(\vu)}{\tilde{p}_*(\vx)}\ge \exp(-2\epsilon)\ge 1-2\epsilon.
\end{equation}
Combining Eq.~\eqref{ineq:tilde_ratio_upper} with Eq.~\eqref{ineq:tilde_ratio_lower}, we have
\begin{equation}
    \label{ineq:tilde_p_ratio_lr_bound}
    1-4\epsilon\le \frac{\tilde{p}_*(\vu)}{\tilde{p}_*(\vx)} \le 1+4\epsilon.
\end{equation}

Hence we are able to control the TV distance between $\tilde{p}_*$ and $\overline{p}_*$, i.e.,
\begin{equation}
    \label{ineq:tv_gap_tilde_overline}
    \begin{aligned}
        &\TVD{\overline{p}_*}{\tilde{p}_*}= \frac{1}{2}\int_{\vx\in \Cb{L}} \left|\overline{p}_*(\vx) - \tilde{p}_*(\vx)\right| \der \vx \\
        & = \frac{1}{2} \sum_{i_0, i_1, \dots, i_{d-1}} \int_{\vx \in \Ce{i_0, i_1, \dots, i_{d-1}}} \left|\overline{p}_*(\vx) - \tilde{p}_*(\vx)\right| \der \vx \\
        & = \frac{1}{2} \sum_{i_0, i_1, \dots, i_{d-1}} \int_{\vx \in \Ce{i_0, i_1, \dots, i_{d-1}}} \left|\tilde{p}_*(\Bar{\vx}_{i_0, i_1,\ldots,i_{d-1}}) - \tilde{p}_*(\vx)\right| \der \vx\\
        & = \frac{1}{2} \sum_{i_0, i_1, \dots, i_{d-1}} \int_{\vx \in \Ce{i_0, i_1, \dots, i_{d-1}}} \tilde{p}_*(\vx)\left|\frac{\tilde{p}_*(\Bar{\vx}_{i_0, i_1,\ldots,i_{d-1}})}{\tilde{p}_*(\vx)} - 1\right| \der \vx\\ 
        & \le \frac{1}{2} \sum_{i_0, i_1, \dots, i_{d-1}} \int_{\vx \in \Ce{i_0, i_1, \dots, i_{d-1}}} \tilde{p}_*(\vx) 4 \epsilon \der \vx \\
        & = 2\epsilon,
    \end{aligned}
\end{equation}
where the last inequality follows from Eq.~\eqref{ineq:tilde_p_ratio_lr_bound}.
Hence, the proof is completed.
\end{proof}

\begin{lemma}
    \label{lem:zero_grad_bound}
    Suppose the data distribution $p_*$ satisfy Assumption~\ref{a1}--\ref{a2}, we have
    \begin{equation*}
       \left\|\grad f_*(\vzero)\right\|^2  \le 2Hd + 2H^2m_0
    \end{equation*}
\end{lemma}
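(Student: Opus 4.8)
The plan is to reduce the claim to the averaged gradient bound of Lemma~\ref{lem:lem11_vempala2019rapid} together with the second-moment bound~\ref{a1}, using $H$-smoothness to transfer information from a generic point $\vx$ back to the origin.

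First I would record that $p_*\propto\exp(-f_*)$ together with Assumption~\ref{a2} gives $\|\grad^2 f_*\|\le H$, i.e. $f_*$ is $H$-smooth, so that $\grad f_*$ is $H$-Lipschitz:
\begin{equation*}
    \left\|\grad f_*(\vx) - \grad f_*(\vzero)\right\| \le H\cdot \left\|\vx\right\| \quad \text{for all } \vx\in\R^d.
\end{equation*}
By the triangle inequality this yields $\left\|\grad f_*(\vzero)\right\| \le \left\|\grad f_*(\vx)\right\| + H\left\|\vx\right\|$, and squaring (using $(a+b)^2\le 2a^2+2b^2$) gives the pointwise bound $\left\|\grad f_*(\vzero)\right\|^2 \le 2\left\|\grad f_*(\vx)\right\|^2 + 2H^2\left\|\vx\right\|^2$.

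Next I would take the expectation of this inequality over $\rvx\sim p_*$. Since the left-hand side is a constant independent of $\vx$, it is unchanged, and we obtain
\begin{equation*}
    \left\|\grad f_*(\vzero)\right\|^2 \le 2\,\E_{\rvx\sim p_*}\!\left[\left\|\grad f_*(\rvx)\right\|^2\right] + 2H^2\,\E_{\rvx\sim p_*}\!\left[\left\|\rvx\right\|^2\right].
\end{equation*}
Applying Lemma~\ref{lem:lem11_vempala2019rapid} (valid since $f_*$ is $H$-smooth) to bound the first expectation by $Hd$, and Assumption~\ref{a1} to bound the second by $m_0$, gives $\left\|\grad f_*(\vzero)\right\|^2 \le 2Hd + 2H^2 m_0$, which is exactly the claim.

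There is no real obstacle here: the only points to double-check are that Assumption~\ref{a2} indeed translates into $H$-smoothness of $f_*$ (so that both Lemma~\ref{lem:lem11_vempala2019rapid} and the Lipschitz gradient bound apply), and that the constant $\|\grad f_*(\vzero)\|^2$ legitimately passes through the expectation unchanged. Everything else is the elementary inequality $(a+b)^2\le 2a^2+2b^2$ and linearity of expectation.
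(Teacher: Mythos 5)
Your proposal is correct and follows essentially the same argument as the paper: write $\left\|\grad f_*(\vzero)\right\|^2$ under the expectation over $\rvx\sim p_*$, split via $(a+b)^2\le 2a^2+2b^2$ into $2\left\|\grad f_*(\rvx)\right\|^2+2\left\|\grad f_*(\vzero)-\grad f_*(\rvx)\right\|^2$, and bound the two terms by Lemma~\ref{lem:lem11_vempala2019rapid} and by $H$-Lipschitzness of $\grad f_*$ together with Assumption~\ref{a1}. The only cosmetic difference is that you pass through the triangle inequality before squaring, while the paper applies the squared decomposition directly; the resulting bound is identical.
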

\begin{proof}
    We start with the following inequality
    \begin{equation*}
        \begin{aligned}
            & \left\|\grad f_*(\vzero)\right\|^2 = \int_{\vx\in\R^d} p_*(\vx)\left\|\grad f_*(\vzero)\right\|^2 \der \vx\\
            & \le 2 \int_{\vx\in\R^d} p_*(\vx)\left\|\grad f_*(\vx)\right\|^2 \der \vx  + 2 \int_{\vx\in\R^d} p_*(\vx)\left\|\grad f_*(\vzero) - \grad f_*(\vx)\right\|^2 \der \vx \\
            & \le 2Hd + 2H^2\int_{\vx\in \R^d} p_*(\vx) \left\|\vx\right\|^2 \der \vx = 2Hd + 2H^2m_0
        \end{aligned}
    \end{equation*}
    where the second inequality follows from Lemma~\ref{lem:lem11_vempala2019rapid} and Assumption~\ref{a2} and the last inequality follows from Assumption~\ref{a1}.
    Hence, the proof is completed.
\end{proof}

\begin{proof}[Proof of Lemma~\ref{lem:discrete_quantization_gap}]
    The TV distance between the original data distribution $p_*$ and the histogram approximation $\overline{p}_*$ can be written as
    \begin{equation*}
        \TVD{p_*}{\overline{p}_*}\le \TVD{p_*}{\tilde{p}_*} + \TVD{\tilde{p}_*}{\overline{p}_*}.
    \end{equation*}
    Following from Lemma~\ref{lem:histog_dis_comp}, we will have $\TVD{p_*}{\tilde{p}_*} $ by choosing 
    \begin{equation}
        \label{ineq:choice_of_L_main}
        L\ge \sigma\cdot \sqrt{2\ln(2d/\epsilon)}.
    \end{equation}
    Moreover, with the quantization shown in Eq.~\eqref{def:quantize_cell}, it has
    $\TVD{\tilde{p}_*}{\overline{p}_*}\le 2\epsilon$ by choosing 
    \begin{equation}
        \label{ineq:choice_of_l_main}
        l \le (2HL+\|\grad f_*(\vzero)\|)^{-1}\cdot d^{-1/2}\epsilon,
    \end{equation}
    which follows from Lemma~\ref{lem:histog_dis_quan}.
    Combining Eq.~\eqref{ineq:choice_of_L_main} with Eq.~\eqref{ineq:choice_of_l_main}, if we set
    \begin{equation*}
        L=\sigma\cdot \sqrt{2\ln(2d/\epsilon)}\quad \text{and}\quad l \coloneqq  \frac{\epsilon}{2H(L\sqrt{d}+d+\sqrt{dm_0})}
    \end{equation*}
    and $l$ satisfies
    \begin{equation*}
        \begin{aligned}
            & l \le \frac{\epsilon}{\left(2HL + 2\sqrt{Hd} + 2H\sqrt{m_0}\right)\sqrt{d}}\le \frac{\epsilon}{\left(2HL + \sqrt{2Hd + 2H^2m_0}\right)\sqrt{d}}\\
            & \le (2HL+\|\grad f_*(\vzero)\|)^{-1}\cdot d^{-1/2}\epsilon
        \end{aligned}
    \end{equation*}
    where the last inequality follows from Lemma~\ref{lem:zero_grad_bound}.
    That means
    \begin{equation*}
        l= \Omega\left(\left[2H\cdot \left(\sigma\sqrt{2d\ln(2d/\epsilon)} + d + \sqrt{dm_0}\right)\right]^{-1} \cdot \epsilon\right),
    \end{equation*}
    it will have $ \TVD{p_*}{\overline{p}_*}\le 3\epsilon$.
    Hence, the proof is completed.
\end{proof}

\section{Proof of Lemma~\ref{lem:fwd_convergence}}
\label{sec:app_dis_fwd_conv}

To make our analysis clear, we define the variables, the random variables, and the marginal density derived by a specific ordered set $S\subseteq \{0,1,\ldots, d\log_2K-1\}$.
Specifically, we have
\begin{equation*}
    \vy_{S} = \sum_{i=0}^{|S|-1} \ve_i \cdot y_{S_i}\quad \text{and}\quad \rvy_{t,S} = \sum_{i=0}^{|S|-1}\ve_i\cdot \ry_{t, S_i}
\end{equation*}
where there are
\begin{equation*}
    \vy=[y_0, y_1,\ldots , y_{d\log_2K-1}]\quad \text{and}\quad \rvy_t = [\ry_{t,0}, \ry_{t,1},\ldots , \ry_{t, d\log_2K-1}].
\end{equation*}
Suppose $\rvy_t\sim q_t$ The underlying distribution of $\rvy_{t,S}$ is denoted as
\begin{equation*}
    q_{t,S}(\vy_S) = \sum_{\tilde{\vy}\in\gY} q_t(\tilde{\vy})\cdot \vone_{\vy_S}(\tilde{\vy}_S).
\end{equation*}

\begin{lemma}[Modified log-Sobolev inequality for the forward process]
    \label{lem:m_lsi_lemma}
    Suppose the transition rate function $R^\to$ of the CTMC $\{\rvy^\to_t\}_{t=0}^T$ be defined as Eq.~\eqref{eq:fwd_transtion_rate_max}.
    CTMC satisfies modified log-Sobolev inequality with a constant $2$, that is to say, for any $f \in \mathbb{L}_{2}(q_\infty^\to)$, it has 
    \begin{equation*}
        \mathrm{Ent}_{q^\to_\infty}[f]\le \gE(f,\ln f)
    \end{equation*}
    where $\mathrm{Ent}$ and $\gE$ denote the entropy and the Dirichlet functional.
\end{lemma}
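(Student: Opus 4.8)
The plan is to prove this by \emph{tensorization}: split the entropy over the $d\log_2K$ independent bits of $q^\to_\infty$, split the Dirichlet functional over the same bits, and reduce everything to the one‑bit modified log‑Sobolev inequality. Concretely, first note that $q^\to_\infty$ is the uniform law on $\{0,1\}^{d\log_2K}$, i.e.\ a product of $d\log_2K$ i.i.d.\ uniform bits, and that by the proof of Lemma~\ref{lem:fwd_trans_ker} the rate matrix decomposes as $\mR^\to=\sum_i\mR^\to_i$ with $\mR^\to_i=\mI\otimes\cdots\otimes\mA\otimes\cdots\otimes\mI$ acting only on bit $i$; hence the infinitesimal operator decomposes as $\gL^\to=\sum_i\gL^\to_i$ as well.

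\emph{Step 1 (entropy subadditivity).} Apply Lemma~\ref{lem:thm4_10_boucheron2003concentration} with $\Phi(x)=x\ln x$ and $n=d\log_2K$ to the independent bits of $q^\to_\infty$. Since $\E[\Phi(f)]-\Phi(\E[f])=\mathrm{Ent}[f]$ and the $i$‑th summand on the right‑hand side of that lemma is exactly $\E_i[\Phi(f)]-\Phi(\E_i[f])$, we obtain, for nonnegative $f$,
\begin{equation*}
  \mathrm{Ent}_{q^\to_\infty}[f]\ \le\ \sum_{i=0}^{d\log_2K-1}\E_{\rvy\sim q^\to_\infty}\bigl[\mathrm{Ent}_i[f]\bigr],
\end{equation*}
where $\mathrm{Ent}_i[f]$ is the entropy of $f$ over bit $i$ with the other coordinates frozen.

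\emph{Step 2 (Dirichlet functional decomposition).} Using self‑adjointness of each $\gL^\to_i$ with respect to $q^\to_\infty$ and the fact that flipping bit $i$ is a $q^\to_\infty$‑preserving involution, the standard reversible‑chain identity gives $\gE(f,\ln f)=-\langle\gL^\to f,\ln f\rangle_{q^\to_\infty}=\sum_i\E_{\rvy\sim q^\to_\infty}[\gE_i(f,\ln f)]$, where $\gE_i$ is the Dirichlet form of the rate‑$1$ symmetric two‑state chain acting on bit $i$ (other coordinates frozen), namely $\gE_i(u,v)=\tfrac12\bigl(u(0)-u(1)\bigr)\bigl(v(0)-v(1)\bigr)$. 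It then suffices to prove the two‑point base case: for a function with values $a,b>0$ on a single bit under the uniform measure,
\begin{equation*}
  \tfrac{a+b}{2}\Bigl(\ln 2-H\bigl(\tfrac{a}{a+b}\bigr)\Bigr)\ \le\ \tfrac12(a-b)(\ln a-\ln b),
\end{equation*}
with $H$ the binary entropy. Both sides are positively $1$‑homogeneous in $(a,b)$, so with $p=\tfrac{a}{a+b}\in(0,1)$ this reduces to $\ln 2-H(p)\le(2p-1)\ln\tfrac{p}{1-p}$. The function $\varphi(p):=(2p-1)\ln\tfrac{p}{1-p}-\bigl(\ln 2-H(p)\bigr)$ satisfies $\varphi(\tfrac12)=\varphi'(\tfrac12)=0$ and $\varphi''(p)=\tfrac{p^2-p+1}{(p(1-p))^2}>0$ on $(0,1)$, so $\varphi\ge 0$; alternatively one may cite the classical two‑point modified log‑Sobolev inequality. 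Combining Steps~1--3 termwise,
\begin{equation*}
  \mathrm{Ent}_{q^\to_\infty}[f]\ \le\ \sum_i\E\bigl[\mathrm{Ent}_i[f]\bigr]\ \le\ \sum_i\E\bigl[\gE_i(f,\ln f)\bigr]\ =\ \gE(f,\ln f),
\end{equation*}
which is the claim (the case of $f$ with zeros follows by continuity/approximation by $f+\eta$).

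The routine parts are the subadditivity of entropy (immediate from Lemma~\ref{lem:thm4_10_boucheron2003concentration} and the product structure) and the generator splitting (already available from Lemma~\ref{lem:fwd_trans_ker}); the one genuinely non‑trivial ingredient is the two‑point modified log‑Sobolev inequality of Step~2, and the only other delicate point is tracking the factor‑$\tfrac12$ conventions in the definition of $\gE$ and $\gE_i$ so that the per‑bit constant matches the constant asserted in the statement.
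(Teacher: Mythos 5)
Your proposal is correct and follows essentially the same route as the paper's proof: entropy tensorization over the independent bits of the uniform stationary measure via Lemma~\ref{lem:thm4_10_boucheron2003concentration}, a coordinate-wise decomposition of the Dirichlet functional coming from $\mR^\to=\sum_i\mR^\to_i$, and a per-bit two-point comparison. The only cosmetic difference is in that last step: the paper gets the per-bit bound $\mathrm{Ent}_i[f]\le \tfrac12(a-b)(\ln a-\ln b)$ from Jensen's inequality (concavity of $\ln$) together with a symmetrization identity, whereas you verify the equivalent scalar inequality $\ln 2-H(p)\le(2p-1)\ln\frac{p}{1-p}$ by convexity; both deliver the same bound and hence the stated constant.
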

\begin{proof}
    We start from the setting of the transition rate matrix of the forward process shown in Eq.~\eqref{eq:fwd_transtion_rate_max}.
    Combining with the Eq.~\eqref{def:infinitesimal_oper}, the infinitesimal generator for the forward process can be obtained, i.e., 
    \begin{equation}
        \gL^\to[f](\vy) = \left<f, R^\to(\cdot,\vy)\right>_{\gY}.
    \end{equation}
    To verify the modified log-Sobolev inequality, we first require to calculate the Dirichlet functional $\gE(f,\ln f)$. 
    Here $\gE$ denotes the Dirichlet functional 
    \begin{equation*}
        \gE(f,g) \coloneqq \int \Gamma(f,g) \der q^\to_\infty,
    \end{equation*}
    where $q_\infty$ denotes the invariant measure of this forward process and $\Gamma$ denotes the carr\'e du champ operator, i.e.,
    \begin{equation*}
        \Gamma(f,g) \coloneqq \frac{1}{2}\left(\gL[f\cdot g] - f\cdot \gL[g] - g\cdot \gL[f]\right).
    \end{equation*}
    Specifically, presenting the transition rate matrix to be a matrix version Eq.~\eqref{eq:fwd_vec}, we have
    \begin{equation*}
        \der \vq^\to_t/\der t = \mR^\to\cdot \vq^\to_t
    \end{equation*}
    Combining the fact $\vone\cdot \mR =\vzero$ and $\mR$ is symmetric, the RHS of the above equation satisfies
    \begin{equation*}
        \mR^\to\cdot  2^{-d\log_2K}\cdot \vone = \vzero,
    \end{equation*}
    which implies the uniform distribution coincides with the invariant measure of $q^\to_\infty$.
    Then, for the Dirichlet functional, it has
    \begin{equation*}
        \begin{aligned}
            & \gE(f,\ln f) = \frac{1}{2}\int \gL[f\cdot \ln f](\vy) - f(\vy)\cdot \gL[\ln f](\vy) - \ln f(\vy)\cdot \gL[f](\vy) \der q_\infty(\vy)\\
            & = \frac{1}{2}\sum_{\vy\in \gY}q_\infty(\vy)\cdot \left[\sum_{\vy^\prime \in \gY} f(\vy^\prime)\ln f(\vy^\prime)\cdot R(\vy^\prime, \vy) - f(\vy)\cdot \sum_{\vy^\prime \in \gY}\ln f(\vy^\prime)\cdot R(\vy^\prime, \vy) \right.\\
            &\qquad\qquad \left. - \ln f(\vy)\cdot \sum_{\vy^\prime\in \gY} f(\vy^\prime)\cdot R(\vy^\prime,\vy) + f(\vy)\cdot \ln f(\vy)\cdot \underbrace{\sum_{\vy^\prime \in \gY} R(\vy^\prime, \vy)}_{=0}\right]\\
            & = \frac{1}{2}\sum_{\vy\in \gY}\sum_{\vy^\prime \in \gY} q_\infty(\vy)\left(f(\vy) - f(\vy^\prime)\right)\cdot R(\vy^\prime, \vy)\cdot (\ln f(\vy) - \ln f(\vy^\prime)).
        \end{aligned}
    \end{equation*}
    Plugging the definition of R into the above equation, we have
    \begin{equation}
        \label{ineq:dirich_simp}
        \begin{aligned}
            \gE(f,\ln f) = & \frac{1}{2}\cdot \sum_{\vy\in\gY}q_\infty(\vy)\cdot \sum_{i=0}^{d\log_2K-1} \sum_{\tilde{y}_i\in \{0,1\}} (f(\vy) - f(\vy+(\tilde{y}_i-y_i) \cdot \ve_i))\\
            &\cdot \left(\ln f(\vy) - \ln f(\vy+(\tilde{y}_i-y_i)\cdot \ve_i)\right).
        \end{aligned}
    \end{equation}
    Then, we consider $\mathrm{Ent}_{q^\to_\infty}[f]$, which satisfies
    \begin{equation}
        \label{ineq:entropy_upb}
        \begin{aligned}
            & \mathrm{Ent}_{q^\to_\infty}[f] = \E_{\rvy\sim q^\to_\infty}\left[f(\rvy)\ln f(\rvy)\right] - \E_{\rvy\sim q^\to_\infty}[f(\rvy)]\ln\left(\E_{\rvy\sim q^\to_\infty}[f(\rvy)]\right)\\
            & \le \sum_{i=0}^{d\log_2K-1}\E_{\rvy_{[0:i-1, i+1:d\log_2K-1]}}\left[\underbrace{\E_{\ry_i}\left[f(\rvy)\ln f(\rvy)\right] - \E_{\ry_i}\left[f(\rvy)\right]\ln\left(\E_{\ry_i}\left[f(\rvy)\right]\right)}_{\text{Term 1}}\right].
        \end{aligned}
    \end{equation}
    due to the sub-additivity of the entropy, i.e., Lemma~\ref{lem:thm4_10_boucheron2003concentration}.
    Term 1 of Eq.~\eqref{ineq:entropy_upb} satisfies
    \begin{equation*}
        \begin{aligned}
            \text{Term 1} = & \sum_{y_i\in \{0,1\}} q^\to_{\infty,i}(y_i)\cdot f(\vy_{0:i-1}, y_i, \vy_{i+1:d\log_2K-1})\ln f(\vy_{0:i-1}, y_i, \vy_{i+1:d\log_2K-1}) \\
            & - \sum_{y_i\in \{0,1\}} q^\to_{\infty,i}(\vy_i) f(\vy_{0:i-1}, y_i, \vy_{i+1:d\log_2K-1})\\
            &\qquad \cdot \ln\left(\sum_{\tilde{y}_i\in \{0,1\}} q^\to_{\infty, i}(\tilde{y}_i) f(\vy_{0:i-1}, \tilde{y}_i, \vy_{i+1:d\log_2K-1})\right)\\
            \le & \sum_{y_i\in \{0,1\}} q^\to_{\infty,i}(y_i)\cdot f(\vy_{0:i-1}, y_i, \vy_{i+1:d\log_2K-1})\\
            & \quad \cdot \sum_{\tilde{y}_i\in \{0,1\}}\left[\frac{\ln f(\vy_{0:i-1}, y_i, \vy_{i+1:d\log_2K-1})}{2} - \frac{\ln f(\vy_{0:i-1}, \tilde{y}_i, \vy_{i+1:d\log_2K-1})}{2}\right]\\
            \le & \frac{1}{2}\sum_{y_i, \tilde{y}_i\in\{0,1\}} q^\to_{\infty,i}(y_i) \cdot \left(f(\vy_{0:i-1}, y_i, \vy_{i+1:d\log_2K-1}) - f(\vy_{0:i-1}, \tilde{y}_i, \vy_{i+1:d\log_2K-1})\right)\\
            & \quad \cdot \left(\ln f(\vy_{0:i-1}, y_i, \vy_{i+1:d\log_2K-1}) - \ln f(\vy_{0:i-1}, \tilde{y}_i, \vy_{i+1:d\log_2K-1})\right),
        \end{aligned}
    \end{equation*}
    where the first inequality follows from the concavity of the logarithm function, and the last inequality follows from 
    \begin{equation*}
        \begin{aligned}
            & \sum_{\vy, \tilde{\vy}} \cdot f(\vy)\cdot (\ln f(\vy) - \ln f(\tilde{\vy})) = \sum_{\tilde{\vy}} f(\tilde{\vy})\cdot (\ln f(\tilde{\vy}) - \ln f(\vy))\\
            & = \frac{1}{2}\sum_{\vy, \tilde{\vy}} \cdot (f(\vy)-f(\tilde{\vy}))\cdot (\ln f(\vy) - \ln f(\tilde{\vy}))
        \end{aligned}
    \end{equation*}
    and $q^\to_\infty(\cdot)$ is a constant function.
    Then, plugging this inequality into Eq.~\eqref{ineq:entropy_upb}, we have
    \begin{equation}
        \label{ineq:ent_upb}
        \begin{aligned}
            & \mathrm{Ent}_{q^\to_\infty}[f]  \le \frac{1}{2}\cdot \sum_{i=0}^{d\log_2K - 1}\left[\sum_{\vy_{[0:i-1, i+1:d\log_2K-1]}} q^\to_{\infty, [0:i-1, i+1:d\log_2K-1]}(\rvy_{[0:i-1, i+1:d\log_2K-1]})\right.\\
            & \qquad \left. \sum_{y_i} q^\to_{\infty,i}(y_i) \sum_{\tilde{\vy}_i} \left(f(\vy_{0:i-1}, y_i, \vy_{i+1:d\log_2K-1}) - f(\vy_{0:i-1}, \tilde{y}_i, \vy_{i+1:d\log_2K-1})\right) \right.\\
            &\qquad \left. \cdot \left(\ln f(\vy_{0:i-1}, y_i, \vy_{i+1:d\log_2K-1}) - \ln f(\vy_{0:i-1}, \tilde{y}_i, \vy_{i+1:d\log_2K-1})\right) \right]\\
            & = \frac{1}{2}\cdot \sum_{\vy} q^\to_{\infty}(\vy)\cdot \sum_{i=0}^{d\log_2K-1} \sum_{\tilde{\vy}_i} (f(\vy) - f(\vy+(\tilde{y}_i-y_i)\cdot \ve_i))\\
            & \qquad \cdot \left(\ln f(\vy) - \ln f(\vy+(\tilde{y}_i-y_i)\cdot \ve_i)\right)
        \end{aligned}
    \end{equation}
    Comparing Eq.~\eqref{ineq:ent_upb} and Eq.~\eqref{ineq:dirich_simp}, it satisfies
    \begin{equation*}
        \mathrm{Ent}_{q^\to_\infty}[f]\le \frac{C_{\text{LSI}}}{2}\cdot \gE(f,\ln f)
    \end{equation*}
    by choosing $C_\text{LSI} = 2$.
\end{proof}

\begin{proof}[Proof of Lemma~\ref{lem:fwd_convergence}]
    We investigate the dynamic of KL divergence between $q_t^\to$ and $q_\infty^\to$ in the forward process.
    Specifically, we have
    \begin{equation*}
        \begin{aligned}
            & \frac{\der \KL{q^\to_t}{q^\to_\infty}}{\der t} =\sum_{\vy\in \gY} \frac{\der q^\to_t(\vy)}{\der t} \cdot \ln \frac{q^\to_t(\vy)}{q^\to_\infty(\vy)} = \sum_{\vy\in \gY} \ln \frac{q^\to_t(\vy)}{q^\to_\infty(\vy)}\left(\sum_{\vy_0 \in \gY}R(\vy, \vy_0)\cdot q^\to_t(\vy_0)\right)\\
            & = \sum_{\vy_0} q^\to_\infty(\vy_0)\cdot \frac{q^\to_t(\vy_0)}{q^\to_\infty(\vy_0)}\cdot\sum_{\vy} \ln\frac{q^\to_t(\vy)}{q^\to_\infty(\vy)}\cdot R^\to(\vy, \vy_0)\\
            & = \sum_{\vy_0} q^\to_\infty(\vy_0)\cdot \frac{q^\to_t(\vy_0)}{q^\to_\infty(\vy_0)}\cdot \gL[\ln \frac{q^\to_t}{q^\to_\infty}](\vy^\prime) = - \gE\left(\frac{q^\to_t}{q^\to_\infty}, \ln \frac{q^\to_t}{q^\to_\infty}\right)
        \end{aligned}
    \end{equation*}
    Due to Lemma~\ref{lem:m_lsi_lemma}, we have
    \begin{equation*}
        \frac{\der \KL{q^\to_t}{q^\to_\infty}}{\der t} = - \gE\left(\frac{q^\to_t}{q^\to_\infty}, \ln \frac{q^\to_t}{q^\to_\infty}\right)\le \mathrm{Ent}_{q_\infty}\left[\frac{q^\to_t}{q_\infty^\to}\right]  = -\KL{q^\to_t}{q^\to_\infty}.
    \end{equation*}
    According to the Gronwall's theorem, we have
    \begin{equation*}
        \KL{q^\to_t}{q^\to_\infty}\le e^{-t}\cdot \KL{q^\to_0}{q^\to_\infty}.
    \end{equation*}
    Combining with the following initialization error bound, 
    \begin{equation*}
        \KL{q^\to_0}{q^\to_\infty} = \sum_{\vy\in \gY} q^\to_0(\vy)\ln \frac{q^\to_0(\vy)}{2^{-d\log_2K}}\le d\log_2K.
    \end{equation*}
    Hence, the proof is completed.
\end{proof}

\section{Supplementary Proofs for the Discrete Reverse Process}

\subsection{Proof of Lemma~\ref{lem:out_degree_rate_wrt_time}}
\label{sec:out_degree_rate_wrt_time}

\begin{proof}[Proof of Lemma~\ref{lem:out_degree_rate_wrt_time} (adapted from Proposition 5 of~\cite{chen2024convergence})]
    Suppose the transition rate function $R^\to$ of the CTMC $\{\rvy^\to_t\}_{t=0}^T$ be defined as Eq.~\eqref{eq:fwd_transtion_rate_max}, 
    the marginal distribution at time $t$ can be written as
    \begin{equation*}
        q^\to_t(\vy) = \sum_{\vy_0\in \gY} q_0^\to(\vy_0)\cdot q_{t|0}^\to(\vy|\vy_0). 
    \end{equation*}
    Define the plus operator as follows
    \begin{equation*}
        \vy \oplus \ve_i = \left[y_0, y_1,\ldots, y_{i-1}, (y_i+1)\  \mathrm{mod}\ 2, y_{i+1},\ldots, y_{d\log_2K -1}\right],
    \end{equation*}
    then we have
    \begin{equation*}
        \begin{aligned}
            \frac{q_t^\to(\vy\oplus \ve_i)}{q^\to_t(\vy)} = \frac{\sum_{\vy_0\in \gY}q_0^\to(\vy_0)\cdot q_{t|0}^\to(\vy+\ve_i|\vy_0)}{\sum_{\vy_0\in \gY}q_0^\to(\vy_0)\cdot q_{t|0}^\to(\vy|\vy_0)} = \frac{\sum_{\vy_0\in \gY}q_0^\to(\vy_0)\cdot q^\to_{t|0}(\vy|\vy_0)\cdot \frac{q_{t|0}^\to(\vy+\ve_i|\vy_0)}{q^\to_{t|0}(\vy|\vy_0)} }{\sum_{\vy_0\in \gY}q_0^\to(\vy_0)\cdot q_{t|0}^\to(\vy|\vy_0)}.
        \end{aligned}
    \end{equation*}
    According to Bayes Theorem, we have
    \begin{equation*}
        q^\to_{0|t}(\vy_0|\vy)\cdot q^\to_{t}(\vy) = q_{t|0}^\to(\vy|\vy_0)\cdot q^\to_0(\vy_0)\quad \Leftrightarrow\quad q^\to_{0|t}(\vy_0|\vy)\propto  q_{t|0}^\to(\vy|\vy_0)\cdot q^\to_0(\vy_0),
    \end{equation*}
    which implies
    \begin{equation*}
        \frac{q_t^\to(\vy\oplus \ve_i)}{q^\to_t(\vy)} = \E_{\rvy_0\sim q^\to_{0|t}(\cdot|\vy)}\left[\frac{q_{t|0}^\to(\vy+\ve_i|\vy_0)}{q^\to_{t|0}(\vy|\vy_0)} \right].
    \end{equation*}
    With Lemma~\ref{lem:fwd_trans_ker}, we have
    \begin{equation*}
        \frac{q_{t|0}^\to(\vy+\ve_i|\vy_0)}{q^\to_{t|0}(\vy|\vy_0)} = \frac{1+(-1)^{|(y_i + 1 - y_{0,i})\ \mathrm{mod}\ 2|}\cdot e^{-2t}}{1+(-1)^{|(y_i - y_{0,i})\ \mathrm{mod}\ 2|}\cdot e^{-2t}} \le \frac{1+e^{-2t}}{1-e^{-2t}},
    \end{equation*}
    which means 
    \begin{equation*}
        \frac{q_t^\to(\vy\oplus \ve_i)}{q^\to_t(\vy)}\le \frac{1+e^{-2t}}{1-e^{-2t}}\le 1+t^{-1}.
    \end{equation*}
    Therefore, if we consider the transition rate matrix of the reverse process, i.e.,
    \begin{equation*}
        R_t^\gets(\vy^\prime, \vy) \coloneqq   R^\to(\vy, \vy^\prime)\cdot\frac{q^\gets_t(\vy^\prime)}{q^\gets_t(\vy)}
    \end{equation*}
    provided by Eq~\eqref{eq:rev_func}, it has
    \begin{equation*}
        \sum_{\vy^\prime\not=\vy} R^\gets_t(\vy^\prime,\vy) = \sum_{i=0}^{d\log_2K-1} \frac{q_t^\gets(\vy \oplus \ve_i)}{q^\gets_t(\vy)} =  \sum_{i=0}^{d\log_2K-1} \frac{q^\to_{T-t}(\vy \oplus \ve_i)}{q^\to_{T-t}(\vy)}\le (d\log_2K)\cdot (1+(T-t)^{-1}).
    \end{equation*}
    Hence, the proof is completed.
\end{proof}

\subsection{Proof of Theorem~\ref{thm:main_thm}}
\label{sec:proof_main_thm}

\begin{table*}[t]
    \centering
    \begin{tabular}{ccccc}
    \toprule
     Results & Algorithm & Assumptions & Early Stopping & Complexity (for TV) \\
     \midrule
    \cite{chen2024convergence} & Uniformization & \ref{a4}, \eqref{ineq:estimated_score_bound} & Yes  & $\tilde{O}(d)$\\
    \midrule
     \cite{zhang2024convergence} & Euler-Method & \ref{a4}  & Yes & $\tilde{\mathcal{O}}(d^{4/3}\epsilon^{-4/3})$\\
     \midrule
     \cite{ren2025fast} &  $\tau$--leaping & \ref{a4} & Yes & $\tilde{\mathcal{O}}(d\epsilon^{-1})$ \\
     \midrule
     ours & Truncated-Uniformization & \ref{a4} & Yes & $\tilde{\mathcal{O}}(d)$\\
     \bottomrule 
    \end{tabular}
    \caption{Comparison with prior discrete inference algorithm. Stopping time will be $T-\epsilon/d$  to guarantee the TV convergence.} 
    \label{tab:comp_old_dis}
    % \vspace{-14pt}
\end{table*}

The ultimate target of Alg.~\ref{alg:uni_inf} is to generate sample $\hat{\rvx}$ and require its underlying distribution $\hat{p}$ to be close to the continuous data distribution $p_*$. 
However, Alg.~\ref{alg:uni_inf} can be divided into two parts:
\begin{enumerate}
    \item \textbf{Truncated Uniformization}: Generate a discrete sample following $\hat{q}_{T-\delta} = \hat{q}_{t_W}$ which approximates $q_*$, which is from Step.~\ref{step:infer_with_trunc_uni_start} to Step.~\ref{step:approximate_discrte_sample}.\\
    \item Mapping the generated discrete data to the corresponding cell in Euclidean space and uniformly drawing a sample from the cell, which is from Step.~\ref{step:from_discrete_to_continuous}
\end{enumerate}

All the following notations correspond to those mentioned in Alg.~\ref{alg:uni_inf}.

\begin{lemma}
    \label{lem:timestamp_prop}
    Suppose we have a timestamp sequence satisfying 
    \begin{equation*}
        t_0 = 0 \quad \text{and}\quad  t_{w+1} - t_w = 0.5\cdot (T-t_{w+1}),
    \end{equation*}
    then we know the sequence $\{t_w\}_{w=0}^W$ is strict increasing and $t_W<T$ for any $W$.
\end{lemma}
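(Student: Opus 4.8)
The plan is to solve the recursion in closed form and read off both claims. Starting from $t_{w+1}-t_w = \tfrac12(T-t_{w+1})$, I would first isolate $t_{w+1}$: rearranging gives $\tfrac32 t_{w+1} = \tfrac12 T + t_w$, i.e. $t_{w+1} = \tfrac13 T + \tfrac23 t_w$. Subtracting both sides from $T$ turns this into the clean multiplicative recursion
\begin{equation*}
    T - t_{w+1} = \tfrac{2}{3}\,(T - t_w).
\end{equation*}

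Iterating from $t_0 = 0$ then yields $T - t_w = \left(\tfrac23\right)^{w} T$ for every $w \ge 0$. Since $T > 0$ under the parameter choice $T = \ln(d/\epsilon) + \ln\log_2 K$ in Theorem~\ref{thm:main_thm} (indeed $d/\epsilon>1$ and $\log_2 K > 1$), we get $T - t_w > 0$, hence $t_w < T$, and in particular $t_W < T$ for every $W$. For strict monotonicity, substitute this back into the defining relation:
\begin{equation*}
    t_{w+1} - t_w = \tfrac12 (T - t_{w+1}) = \tfrac12 \left(\tfrac23\right)^{w+1} T > 0,
\end{equation*}
so the sequence $\{t_w\}_{w=0}^W$ is strictly increasing.

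This argument is entirely elementary — a first-order linear recursion — so I do not anticipate a genuine obstacle; the only point worth stating explicitly is that positivity of $T$ (which is what makes the geometric factor act on a positive quantity) is inherited from the hypotheses of the surrounding theorem rather than from the lemma statement in isolation. I would phrase the write-up so that the closed forms $T - t_w = (2/3)^w T$ and $t_{w+1}-t_w = \tfrac12 (2/3)^{w+1}T$ are displayed, since these exact expressions (not just their signs) are reused later when bounding the number of segments $W$ and the expected count $\sum_w \beta_{t_w}(t_w - t_{w-1})$ of transition events.
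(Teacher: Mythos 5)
Your proposal is correct. The core manipulation is the same as the paper's—rearranging the defining relation to $t_{w+1} = (T+2t_w)/3$—but where the paper then argues by induction (if $t_w < T$ then $t_{w+1} < T$, and $t_{w+1}-t_w = (T-t_w)/3 > 0$), you solve the recursion in closed form, $T - t_w = (2/3)^w\,T$, and read both claims off the explicit formula. The closed form is slightly stronger than what the lemma asserts and subsumes the paper's induction; it also makes the geometric decay of the segment lengths explicit, which would let one immediately compute the number of segments $W = O(\log(T/\delta))$ needed to reach $t_W = T-\delta$. (The paper's later complexity bound, Lemma~\ref{lem:ideal_transtion_event_num}, does not actually need the closed form—it works with the decreasing sequence $s_w = T-t_w$ and an integral comparison—so displaying the exact expressions is optional rather than required.) Your remark that positivity of $T$ is an implicit hypothesis is accurate: the paper's induction likewise starts from $t_0 = 0 < T$, so both arguments rely on $T>0$, which holds under the parameter choices of Theorem~\ref{thm:main_thm}.
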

\begin{proof}
According to the timestamp setting, i.e.,
\begin{equation*}
    t_0 = 0 \quad \text{and}\quad  t_{w+1} - t_w = 0.5\cdot (T-t_{w+1}),
\end{equation*}
solve for $t_{k+1}$, we have
\begin{equation*}
    t_{w+1} = \frac{0.5T + t_w}{1.5}  = \frac{T + 2t_w}{3}.
\end{equation*}
Then, we consider the difference:
\begin{equation*}
    t_{w+1} - t_w = \frac{T + 2\,t_w}{3} - t_w = \frac{T + 2t_w - 3t_w}{3}= \frac{T - t_w}{3}.
\end{equation*}
If $T - t_w > 0$, then we have
\begin{equation*}
    t_{w+1} - t_w =\frac{T - t_w}{3} > 0,
\end{equation*}
which shows \(t_{w+1} > t_w\). Thus, as long as \(t_w < T\), the sequence is strictly increasing.

Moreover, due to the fact $t_0  =0 < T$, we can prove that \(t_w < T\) for all \(w\).
Specifically, assume \(t_w < T\); then
\begin{equation*}
    t_{w+1} = \frac{T + 2t_w}{3} < \frac{T + 2T}{3} =T.
\end{equation*}
Therefore, \(t_{w+1} < T\) as well, completing the induction. 
Hence \(t_w\) remains below \(T\) for all \(w\), and the sequence \(\{t_w\}\) is strictly increasing.
\end{proof}

\begin{lemma}
    \label{lem:ideal_transtion_event_num}
    Suppose the reverse process is divided into $W$ segments with endpoints $\{t_w\}_{w=0}^W$ satisfying
    \begin{equation*}
        t_0 = 0, \quad  t_{w+1} - t_w = 0.5\cdot (T-t_{w+1})\quad \text{and}\quad t_W= T-\delta,
    \end{equation*}
    if we set 
    \begin{equation*}
        \beta_{t_w}\coloneqq 2d \log_2K/{\min\{1, T-t_w\}}
    \end{equation*}
    then we have
    \begin{equation*}
        \sum_{k=1}^W \beta_{t_w}\cdot (t_w-t_{w-1}) \le 2d\log_2K\cdot \left(T+\ln(1/\delta)\right)
    \end{equation*}
\end{lemma}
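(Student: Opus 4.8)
The plan is to pass to the ``time‑to‑go'' variable $s_w \coloneqq T - t_w$ and exploit the geometric structure of the partition from Lemma~\ref{lem:timestamp_prop}. The defining recursion $t_w - t_{w-1} = \tfrac12(T - t_w)$ yields two facts at once: (i) $t_w - t_{w-1} = \tfrac12 s_w$, and (ii) $s_{w-1} - s_w = t_w - t_{w-1} = \tfrac12 s_w$, hence $s_{w-1} = \tfrac32 s_w$ and therefore $s_w = (2/3)^w\, T$ with $s_0 = T$ and, by the endpoint constraint $t_W = T - \delta$, also $s_W = \delta$. Substituting (i) together with $\beta_{t_w} = 2d\log_2K/\min\{1,s_w\}$ collapses the target sum to
\begin{equation*}
    \sum_{w=1}^W \beta_{t_w}(t_w - t_{w-1}) \;=\; d\log_2K\sum_{w=1}^W \frac{s_w}{\min\{1,s_w\}} \;=\; d\log_2K\sum_{w=1}^W \max\{1,s_w\},
\end{equation*}
so the whole statement reduces to bounding $\sum_{w=1}^W \max\{1,s_w\}$.

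Since $\{s_w\}$ is strictly decreasing, there is a threshold index $w^\ast$ with $s_w \ge 1$ exactly for $w \le w^\ast$, and I would split the sum there. On the prefix $w \le w^\ast$ one has $\max\{1,s_w\} = s_w$, so summing the geometric series gives $\sum_{w=1}^{w^\ast} s_w = 2T\bigl(1-(2/3)^{w^\ast}\bigr) \le 2T$. On the tail $w > w^\ast$ every term equals $1$, so the contribution is $W - w^\ast$; to control it I use $\delta = s_W = s_{w^\ast}\,(2/3)^{W-w^\ast}$ together with $1 \le s_{w^\ast} < \tfrac32$ (lower bound by the definition of $w^\ast$, upper bound because $s_{w^\ast} = \tfrac32 s_{w^\ast+1}$ with $s_{w^\ast+1} < 1$), which gives $(2/3)^{W-w^\ast} \le \delta$ and hence $W - w^\ast \le \log_{3/2}(1/\delta)$ up to an additive constant from rounding. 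Combining the two pieces,
\begin{equation*}
    \sum_{w=1}^W \beta_{t_w}(t_w - t_{w-1}) \;\le\; d\log_2K\bigl(2T + \log_{3/2}(1/\delta)\bigr) \;\le\; 2d\log_2K\bigl(T + \ln(1/\delta)\bigr),
\end{equation*}
which is the claimed estimate.

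I do not expect a genuine obstacle: once Lemma~\ref{lem:timestamp_prop} is in hand this is a self‑contained bookkeeping computation, driven entirely by the exact identity $t_w - t_{w-1} = \tfrac12(T-t_w)$ and the decay $s_w = (2/3)^w T$. The two points that deserve a little care are (a) the ``transition'' segment straddling $s = 1$, where one should still use the exact identity rather than comparing with the continuous integral $\int_0^{T-\delta}\beta_t\,\mathrm{d}t$ — the step‑function overestimate $\beta_{t_w}\le \tfrac32\beta_t$ on each segment does give $\sum_w\beta_{t_w}(t_w-t_{w-1})\le\tfrac32\int_0^{T-\delta}\beta_t\,\mathrm{d}t$, which is a clean alternative route but loses an extra constant; and (b) the passage from $\log_{3/2}(1/\delta)$ to $\ln(1/\delta)$ and the absorption of the $O(1)$ rounding terms into the overall constant. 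For (b) it is cleanest to split the cases $T\ge 1$ and $T<1$ so that the remaining elementary integral $\int_\delta^{\min\{1,T\}} u^{-1}\,\mathrm{d}u = \ln(\min\{1,T\}/\delta)$ is transparent; either way the multiplicative/additive slack is immaterial for the $O(d\ln^2(d/\epsilon))$ consequence used in Theorem~\ref{thm:main_thm}.
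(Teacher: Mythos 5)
Your route is structurally the same as the paper's (pass to the time-to-go variable $s_w=T-t_w$, use $s_{w-1}=\tfrac32 s_w$ from the recursion, and split at $s=1$), and your exact reduction of the sum to $d\log_2K\sum_{w=1}^W\max\{1,s_w\}$ together with the geometric-series bound $\sum_{w\le w^\ast}s_w\le 2T$ is correct and cleaner than the paper's integral comparison. The genuine gap is in the last step. First, a direction slip: from $\delta=s_{w^\ast}(2/3)^{W-w^\ast}$ the inequality $(2/3)^{W-w^\ast}\le\delta$ (which you get from $s_{w^\ast}\ge1$) yields $W-w^\ast\ge\log_{3/2}(1/\delta)$, a \emph{lower} bound; the upper bound you need comes from $s_{w^\ast}<\tfrac32$ and reads $W-w^\ast\le\log_{3/2}(1/\delta)+1$. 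Second, and more seriously, $\log_{3/2}(1/\delta)=\ln(1/\delta)/\ln(3/2)\approx 2.47\,\ln(1/\delta)$, so your concluding inequality $2T+\log_{3/2}(1/\delta)\le 2\bigl(T+\ln(1/\delta)\bigr)$ is false for every $\delta<1$, and the ``$O(1)$ rounding'' term cannot be absorbed because the target bound has no multiplicative slack to spare.

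This is not something you can repair by sharper bookkeeping within this partition: every segment with $s_w<1$ contributes exactly $d\log_2K$ (since $\beta_{t_w}(t_w-t_{w-1})=d\log_2K\,\max\{1,s_w\}$), and there are about $\log_{3/2}(1/\delta)$ such segments, so the true value of the sum is $\approx d\log_2K\bigl(2T+\log_{3/2}(1/\delta)\bigr)$, which exceeds the stated $2d\log_2K\bigl(T+\ln(1/\delta)\bigr)$ once $\delta$ is small. The paper's own proof hides this point: because $\beta_{t_w}$ is evaluated at the right endpoint $t_w$ (the smaller value of $s$), the comparison $\tfrac{1}{s_w}(s_{w-1}-s_w)\le\int_{s_w}^{s_{w-1}}s^{-1}\,\der s$ invoked there goes the wrong way — per segment the left-hand side equals $\tfrac12$ while the integral is $\ln\tfrac32<\tfrac12$. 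So your computation is in fact the more accurate one; the lemma is true with $\log_{3/2}(1/\delta)$ (equivalently, a slightly larger absolute constant such as $3$) in place of $2\ln(1/\delta)$, which is all that Theorem~\ref{thm:main_thm} requires, but the constant-$2$ statement does not follow from your argument as written (nor, strictly speaking, from the paper's).
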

\begin{proof}[Adapted from Theorem 6 of~\cite{chen2024convergence}]
    Suppose there exist time steps $t_0, t_1, \dots, t_W$ such that $T - t_w = s_w$ for each $w=0,\dots,W$. 
    According to Lemma~\ref{lem:timestamp_prop}, we know $\{t_w\}_{w=0}^W$ is a increasing sequence, if we set
    \begin{equation*}
        s_w\coloneqq T - t_w,
    \end{equation*}
    then it can be expected that $s_0> s_1 > \dots > s_W \ge \delta > 0$.
    According to the choice of $\beta_w$, it has
    \begin{equation*}
        \beta_w = \frac{Cd\log_2K}{\min(1,\, s_w)}, \quad \text{and} \quad s_{w-1} - s_w > 0.
    \end{equation*}
    
    For $w$ such that $\delta \le s_w < 1$, notice that $\min(1, s_w) = s_w$, we have $\beta_w=Cd\log_2K/s_w$ and 
    \begin{equation*}
        \sum_{\substack{w:\delta \le s_w < 1}}\beta_w\cdot (t_w - t_{w-1})  = \sum_{\substack{w:\delta \le s_w < 1}} \beta_w\,(s_{w-1} - s_w)\;=\; \sum_{\substack{w:\delta \le s_w < 1}} \frac{C d\log_2K}{s_w}\,(s_{w-1} - s_w).
    \end{equation*}
    Because $1/s$ is a decreasing function for $s>0$, we have
    \begin{equation*}
        \frac{1}{s_w} \le \frac{1}{s}  \quad \text{for all}\ s \in [s_w, s_{w-1}],
    \end{equation*}
    which implies
    \begin{equation*}
        \frac{C d}{s_w}\,(s_{w-1} - s_w)\le  Cd\log_2K \int_{s_w}^{s_{w-1}} \frac{1}{s} \der s.
    \end{equation*}
    Hence,
    \begin{equation*}
        \sum_{\substack{w:\delta \le s_w < 1}} \frac{C d\log_2K}{s_w}\,(s_{w-1} - s_w) \le Cd\log_2K \sum_{\substack{w:\delta \le s_w < 1}} \int_{s_w}^{s_{w-1}} \!\!\frac{1}{s}\,ds = Cd\log_2K \int_{\delta}^{1} \frac{1}{s}\,ds.
    \end{equation*}
    Evaluating the integral on the right gives
    \begin{equation*}
        C d\log_2K \int_{\delta}^{1} \frac{1}{s} \der s = C d\log_2K \bigl[\ln(s)\bigr]_{\delta}^{1} = C d\log_2K \ln(1/\delta).
    \end{equation*}
    Therefore, we have established the exact upper bound
    \begin{equation*}
        \sum_{\substack{k:\delta \le s_k < 1}} \lambda_k\,(s_{k-1} - s_k) \le  C d\log_2K \ln(1/\delta).
    \end{equation*}
    For $s_w\ge 1$, , notice that $\min(1, s_w) = 1$, we have $\beta_w=Cd\log_2K$.
    \begin{equation*}
        \begin{aligned}
            & \sum_{\substack{w:1 \le s_w \le T }}\beta_w\cdot (t_w - t_{w-1})  = \sum_{\substack{w:1 \le s_w \le T}} \beta_w\,(s_{w-1} - s_w)\\
            & = \sum_{\substack{w:1 \le s_w \le T}} C d\log_2K \cdot (s_{w-1} - s_w) \le Cd\log_2K\cdot (T-1).
        \end{aligned}
    \end{equation*}
    Combining the two parts, we have
    \begin{equation*}
        \begin{aligned}
            &\sum_{w=1}^W \beta_w\cdot (t_w-t_{w-1}) = \sum_{w=1}^W \beta_w\cdot (s_{w-1}-s_{w})\\
            & = \sum_{\substack{w:\delta \le s_w < 1}} \beta_w\,(s_{w-1} - s_w)+  \sum_{\substack{w:1 \le s_w \le T}} \beta_w\,(s_{w-1} - s_w)\le Cd\log_2K\cdot \left(T+\ln(1/\delta)\right).
        \end{aligned}
    \end{equation*}
    Hence, the proof is completed.
\end{proof}

\begin{lemma}
    \label{lem:lim_deltat_ln}
    Following the notations shown in Section~\ref{sec:pre}, we have
    \begin{equation*}
         \lim_{\Delta t\rightarrow 0}\left[\Delta t^{-1} \cdot \ln \frac{1-\sum_{\vy^\prime\not=\vy}q^\gets_{t+\Delta t|t}(\vy^\prime|\vy)}{1-\sum_{\vy^\prime\not=\vy}\hat{q}_{t+\Delta t|t}(\vy^\prime|\vy)}\right] = \hat{R}_t(\vy) - R^\gets_t(\vy).
    \end{equation*}
\end{lemma}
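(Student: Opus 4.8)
The plan is to reduce the argument of the logarithm to a ratio of \emph{self}-transition probabilities, expand each factor to first order in $\Delta t$, and then apply the elementary estimate $\ln(1+x) = x + o(x)$. First I would use that a conditional distribution sums to one, so
$1 - \sum_{\vy'\neq\vy} q^\gets_{t+\Delta t|t}(\vy'|\vy) = q^\gets_{t+\Delta t|t}(\vy|\vy)$ and likewise
$1 - \sum_{\vy'\neq\vy} \hat q_{t+\Delta t|t}(\vy'|\vy) = \hat q_{t+\Delta t|t}(\vy|\vy)$.
This turns the quotient inside the $\ln$ into $q^\gets_{t+\Delta t|t}(\vy|\vy) \,/\, \hat q_{t+\Delta t|t}(\vy|\vy)$.

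For the ideal self-transition probability, I would invoke the definition in Eq.~\eqref{def:mean_of_R_gets}, namely $q^\gets_{t+\Delta t|t}(\vy|\vy) = 1 + \Delta t\cdot R^\gets_t(\vy,\vy) + o(\Delta t)$, together with the identity $R^\gets_t(\vy,\vy) = -\sum_{\vy'\neq\vy} R^\gets_t(\vy',\vy) = -R^\gets_t(\vy)$ proven in Appendix~\ref{sec:app_prof_mean_of_r_gets}. This gives $q^\gets_{t+\Delta t|t}(\vy|\vy) = 1 - \Delta t\cdot R^\gets_t(\vy) + o(\Delta t)$.

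For the practical self-transition probability, I would unwind the truncated-uniformization construction on the infinitesimal sub-interval $[t, t+\Delta t]$ (inside whichever segment contains $t$, where the rate is the constant $\beta_t$): with probability $1-\beta_t\Delta t + o(\Delta t)$ no Poisson event occurs, with probability $\beta_t\Delta t + o(\Delta t)$ exactly one event occurs at some time $\tau\in[t,t+\Delta t]$, and with probability $o(\Delta t)$ there are two or more events; conditioned on exactly one event the state remains $\vy$ with probability $1 - \beta_t^{-1}\hat R_\tau(\vy)$, and since $\tau\to t$ and $s\mapsto \hat R_s(\vy)$ is continuous near $t$ this yields $\hat q_{t+\Delta t|t}(\vy|\vy) = (1-\beta_t\Delta t) + \beta_t\Delta t\,(1-\beta_t^{-1}\hat R_t(\vy)) + o(\Delta t) = 1 - \Delta t\cdot\hat R_t(\vy) + o(\Delta t)$, which is precisely Eq.~\eqref{eq:prac_infini_ope} with $R^\gets_t$ replaced by the truncated rate $\hat R_t$; the truncation in Eq.~\eqref{def:prac_infi_oper_1}--\eqref{def:prac_infi_oper_2} guarantees $\hat R_t(\vy)\le\beta_t<\infty$ for $t<T$, so the error terms are genuinely $o(\Delta t)$. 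Combining, $\ln\bigl[(1-\Delta t R^\gets_t(\vy)+o(\Delta t))/(1-\Delta t\hat R_t(\vy)+o(\Delta t))\bigr] = -\Delta t R^\gets_t(\vy) + \Delta t\hat R_t(\vy) + o(\Delta t)$ by $\ln(1+x)=x+o(x)$ and finiteness of both rates; dividing by $\Delta t$ and letting $\Delta t\to 0$ gives $\hat R_t(\vy) - R^\gets_t(\vy)$.

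I expect the main obstacle to be the careful bookkeeping of the $o(\Delta t)$ terms for the \emph{practical} self-transition probability: unlike the ideal process, which is governed by a clean generator via Eq.~\eqref{def:mean_of_R_gets}, the practical process is built from a Poisson number of substeps with a time-varying truncated rate, so one must argue rigorously that the $\ge 2$-event contribution is $o(\Delta t)$ and that evaluating $\hat R_\tau$ at the event time rather than at $t$ perturbs the answer only at order $o(\Delta t)$ — both of which rely on the uniform bound $\hat R_s(\cdot)\le\beta_s$ and on continuity of $s\mapsto\hat R_s$ near $t$.
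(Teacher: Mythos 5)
Your proposal is correct, and it reaches the same conclusion by a slightly different (and in places more explicit) route than the paper. The paper keeps both numerator and denominator in the form $1-\sum_{\vy^\prime\not=\vy}(\cdot)$, writes the logarithm as $\ln(1+x)$ with $x$ the normalized difference of the off-diagonal sums, expands it as an infinite series, and invokes dominated convergence to show that only the linear term survives after multiplying by $\Delta t^{-1}$; the limit of $\Delta t^{-1}\sum_{\vy^\prime\not=\vy}\bigl(\hat{q}_{t+\Delta t|t}(\vy^\prime|\vy)-q^\gets_{t+\Delta t|t}(\vy^\prime|\vy)\bigr)$ is then identified with $\hat{R}_t(\vy)-R^\gets_t(\vy)$ by appealing to the infinitesimal characterizations of the two rate functions. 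You instead observe that both quantities are self-transition probabilities, expand each to first order — the ideal one via Eq.~\eqref{def:mean_of_R_gets} together with the vanishing row sum, the practical one via explicit Poisson event counting in the uniformization — and split the log as a difference of two logarithms, so no series expansion or dominated-convergence step is needed. What your route buys is that it makes fully explicit the ingredient the paper leaves implicit, namely that the simulated chain's conditional probabilities satisfy Eq.~\eqref{eq:prac_infini_ope} with the truncated rate $\hat{R}_t$ in place of $R^\gets_t$ (your handling of the $\ge 2$-event contribution and of evaluating $\hat{R}_\tau$ at the event time); the price is the mild extra hypothesis that $s\mapsto\hat{R}_s(\vy)$ is continuous near $t$ within the current segment, which the truncation bound $\hat{R}_s(\vy)\le\beta_s$ and continuity of the learned score make unproblematic, and which the paper's argument also needs implicitly when it equates the off-diagonal difference quotient with $\hat{R}_t(\vy^\prime,\vy)-R^\gets_t(\vy^\prime,\vy)$.
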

\begin{proof}
    Since we have required $\Delta t\rightarrow 0$, that is to say
    \begin{equation*}
        \hat{q}_{t+\Delta t|t}(\vy^\prime|\vy)\rightarrow \hat{q}_{t|t}(\vy^\prime|\vy)=0\quad \text{and}\quad q^\gets_{t+\Delta t|t}(\vy^\prime|\vy)\rightarrow q^\gets_{t|t}(\vy^\prime|\vy)=0\quad \forall \vy^\prime\not=\vy,
    \end{equation*}
    which automatically makes 
    \begin{equation*}
        \left|\frac{\sum_{\vy^\prime\not=\vy}\left(\hat{q}_{t+\Delta t|t}(\vy^\prime|\vy) - q^\gets_{t+\Delta t|t}(\vy^\prime|\vy)\right)}{1-\sum_{\vy^\prime\not=\vy}\hat{q}_{t+\Delta t|t}(\vy^\prime|\vy)}\right|\le \frac{1}{2}<1.
    \end{equation*}
    Under this condition, we have
    \begin{equation*}
        \begin{aligned}
            & \ln \frac{1-\sum_{\vy^\prime\not=\vy}q^\gets_{t+\Delta t|t}(\vy^\prime|\vy)}{1-\sum_{\vy^\prime\not=\vy}\hat{q}_{t+\Delta t|t}(\vy^\prime|\vy)} =\ln\left[ 1 + \frac{\sum_{\vy^\prime\not=\vy} \left(\hat{q}_{t+\Delta t|t}(\vy^\prime|\vy)-q^\gets_{t+\Delta t|t}(\vy^\prime|\vy)\right)}{1-\sum_{\vy^\prime\not=\vy}\hat{q}_{t+\Delta t|t}(\vy^\prime|\vy)}\right]\\
            & = \sum_{i=1}^\infty \frac{(-1)^{i+1}}{i}\cdot \left[\frac{\sum_{\vy^\prime\not=\vy} \left(\hat{q}_{t+\Delta t|t}(\vy^\prime|\vy)-q^\gets_{t+\Delta t|t}(\vy^\prime|\vy)\right)}{1-\sum_{\vy^\prime\not=\vy}\hat{q}_{t+\Delta t|t}(\vy^\prime|\vy)}\right]^i,
        \end{aligned}
    \end{equation*}
    which implies (with the dominated convergence theorem)
    \begin{equation*}
        \begin{aligned}
             & \lim_{\Delta t\rightarrow 0}\left[\Delta t^{-1} \cdot \ln \frac{1-\sum_{\vy^\prime\not=\vy}q^\gets_{t+\Delta t|t}(\vy^\prime|\vy)}{1-\sum_{\vy^\prime\not=\vy}\hat{q}_{t+\Delta t|t}(\vy^\prime|\vy)}\right]\\
             & = \sum_{i=1}^\infty \frac{(-1)^{i+1}}{i}\cdot \lim_{\Delta t\rightarrow 0} \frac{\sum_{\vy^\prime\not=\vy} \left(\hat{q}_{t+\Delta t|t}(\vy^\prime|\vy)-q^\gets_{t+\Delta t|t}(\vy^\prime|\vy)\right)}{\Delta t}\\
             & \quad \cdot \lim_{\Delta t \rightarrow 0}\frac{\left(\sum_{\vy^\prime\not=\vy} \left(\hat{q}_{t+\Delta t|t}(\vy^\prime|\vy)-q^\gets_{t+\Delta t|t}(\vy^\prime|\vy)\right)\right)^{i-1}}{\left(1-\sum_{\vy^\prime\not=\vy}\hat{q}_{t+\Delta t|t}(\vy^\prime|\vy)\right)^i}.
        \end{aligned}
    \end{equation*}
    Only when $i=1$, we have
    \begin{equation*}
        \lim_{\Delta t \rightarrow 0}\frac{\left(\sum_{\vy^\prime\not=\vy} \left(\hat{q}_{t+\Delta t|t}(\vy^\prime|\vy)-q^\gets_{t+\Delta t|t}(\vy^\prime|\vy)\right)\right)^{i-1}}{\left(1-\sum_{\vy^\prime\not=\vy}\hat{q}_{t+\Delta t|t}(\vy^\prime|\vy)\right)^i}=1,
    \end{equation*}
    otherwise it will be equivalent to $0$.
    Therefore, we have
    \begin{equation*}
        \begin{aligned}
            &\lim_{\Delta t\rightarrow 0}\left[\Delta t^{-1} \cdot \ln \frac{1-\sum_{\vy^\prime\not=\vy}q^\gets_{t+\Delta t|t}(\vy^\prime|\vy)}{1-\sum_{\vy^\prime\not=\vy}\hat{q}_{t+\Delta t|t}(\vy^\prime|\vy)}\right] = \lim_{\Delta t\rightarrow 0} \frac{\sum_{\vy^\prime\not=\vy} \left(\hat{q}_{t+\Delta t|t}(\vy^\prime|\vy)-q^\gets_{t+\Delta t|t}(\vy^\prime|\vy)\right)}{\Delta t}\\
            & = \sum_{\vy^\prime\not=\vy} \left(\hat{R}_t(\vy^\prime,\vy) - R^\gets_t(\vy^\prime,\vy)\right) = \hat{R}_t(\vy) - R^\gets_t(\vy).
        \end{aligned}
    \end{equation*}
    Hence, the proof is completed.
\end{proof}

%------------------KL divergence bound-----------------%
\begin{lemma}
    \label{thm:unif_linear_convergence}
    Suppose Assumption~\ref{a4} holds, if we conduct the reverse process as Alg.~\ref{alg:uni_inf}, then we have
    \begin{equation*}
        \KL{q^\gets_{T-\delta}}{\hat{q}_{T-\delta}}\le \KL{q^\gets_{0}}{\hat{q}_{0}}+(T-\delta)\epsilon^2_{\text{score}} 
    \end{equation*}
\end{lemma}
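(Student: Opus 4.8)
The plan is to track $\KL{q^\gets_t}{\hat q_t}$ as $t$ runs over the reverse-time interval $[0,T-\delta]$ and to bound its instantaneous growth by the local score-entropy error, then integrate (a Gr\"onwall-type argument). First I would record that $\hat q_t$, the law of the iterate produced by Alg.~\ref{alg:uni_inf}, is exactly the law of the CTMC with time-inhomogeneous generator $\hat R_t$ of Eqs.~\eqref{def:prac_infi_oper_1}--\eqref{def:prac_infi_oper_2}: this is precisely what the two-step uniformization construction guarantees (cf. Eq.~\eqref{eq:prac_infini_ope}), and the fact that the Poisson envelope on segment $w$ is $\beta_{t_w}\ge\beta_t$ only inflates the per-segment transition count, not the law. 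So both $q^\gets_t$ and $\hat q_t$ are CTMCs, with generators $R^\gets_t$ and $\hat R_t$, and on $[0,T-\delta]$ Lemma~\ref{lem:fwd_trans_ker} gives $q^\gets_t=q^\to_{T-t}>0$ pointwise, so all density ratios and KL terms below are well defined (under the mild, standard convention that the learned $\tilde v_{t,\vy}(\vy')$ is strictly positive on Hamming-$1$ pairs).

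Next I would set up the infinitesimal chain rule. For $t<t+\Delta t\le T-\delta$, applying the chain rule of KL divergence to the joints $q^\gets_{t,t+\Delta t}$ and $\hat q_{t,t+\Delta t}$ conditioned on the state at time $t$, followed by the data-processing (monotonicity) inequality to discard the time-$t$ marginal on the left, gives
\[
  \KL{q^\gets_{t+\Delta t}}{\hat q_{t+\Delta t}}\le\KL{q^\gets_t}{\hat q_t}+\E_{\rvy\sim q^\gets_t}\!\Big[\KL{q^\gets_{t+\Delta t|t}(\cdot|\rvy)}{\hat q_{t+\Delta t|t}(\cdot|\rvy)}\Big].
\]
Dividing by $\Delta t$ and sending $\Delta t\to0$ (the limit passes through the finite sum $\E_{\rvy\sim q^\gets_t}[\cdot]$) produces a differential inequality whose right side I would then compute. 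Using the infinitesimal transition laws Eq.~\eqref{eq:ideal_infini_ope} for $q^\gets$ and the same formula with $\hat R_t$ for $\hat q$, the conditional KL splits into off-diagonal terms --- whose rescaled limit is $\sum_{\vy'\ne\rvy}R^\gets_t(\vy',\rvy)\ln\!\big(R^\gets_t(\vy',\rvy)/\hat R_t(\vy',\rvy)\big)$ --- and the $\vy'=\rvy$ term, whose rescaled limit is $\hat R_t(\rvy)-R^\gets_t(\rvy)$ by Lemma~\ref{lem:lim_deltat_ln}. Since $\phi(c)=c\ln c$ gives $\Breg{a}{b}=a\ln(a/b)-a+b$, the sum of these two contributions equals $\sum_{\vy'\ne\rvy}\Breg{R^\gets_t(\vy',\rvy)}{\hat R_t(\vy',\rvy)}$, so
\[
  \frac{\der}{\der t}\KL{q^\gets_t}{\hat q_t}\le\E_{\rvy\sim q^\gets_t}\!\Big[\textstyle\sum_{\vy'\ne\rvy}\Breg{R^\gets_t(\vy',\rvy)}{\hat R_t(\vy',\rvy)}\Big].
\]

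The step I expect to be the main obstacle is replacing $\hat R_t$ by the untruncated $\tilde R_t$ on the right-hand side, since $\Breg{R^\gets_t(\vy',\vy)}{\hat R_t(\vy',\vy)}\le\Breg{R^\gets_t(\vy',\vy)}{\tilde R_t(\vy',\vy)}$ is false termwise but true after summing over $\vy'$. The argument I have in mind: write $\hat R_t(\vy',\vy)=c\,\tilde R_t(\vy',\vy)$ with $c=\min\{1,\beta_t/\tilde R_t(\vy)\}\in(0,1]$; then a short computation gives $\sum_{\vy'\ne\vy}\big[\Breg{R^\gets_t(\vy',\vy)}{\tilde R_t(\vy',\vy)}-\Breg{R^\gets_t(\vy',\vy)}{c\tilde R_t(\vy',\vy)}\big]=(1-c)\,\tilde R_t(\vy)+(\ln c)\,R^\gets_t(\vy)$. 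If $c=1$ there is nothing to prove; otherwise $\tilde R_t(\vy)=\beta_t/c$, and Lemma~\ref{lem:out_degree_rate_wrt_time} gives $R^\gets_t(\vy)\le\beta_t$, so --- using $\ln c\le0$ --- this expression is at least $\beta_t(1/c-1+\ln c)\ge0$, because $c\mapsto1/c-1+\ln c$ is nonincreasing on $(0,1]$ and vanishes at $c=1$. Hence truncation can only decrease the summed Bregman divergence.

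Finally I would identify the right side with $L_{\text{SE}}$. By the degree-one homogeneity of $\Breg{\cdot}{\cdot}$ (valid for $\phi(c)=c\ln c$) together with $R^\gets_t(\vy',\vy)=R^\to(\vy,\vy')v_{t,\vy}(\vy')$ and $\tilde R_t(\vy',\vy)=R^\to(\vy,\vy')\tilde v_{t,\vy}(\vy')$, we get $\Breg{R^\gets_t(\vy',\vy)}{\tilde R_t(\vy',\vy)}=R^\to(\vy,\vy')\,\Breg{v_{t,\vy}(\vy')}{\tilde v_{t,\vy}(\vy')}$, so after the substitution $t\leftrightarrow T-t$ this matches the integrand of $L_{\text{SE}}(\hat v)$ in Eq.~\eqref{eq:score_estimation}. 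Integrating the differential inequality over $[0,T-\delta]$ and using nonnegativity of the integrand yields $\KL{q^\gets_{T-\delta}}{\hat q_{T-\delta}}-\KL{q^\gets_0}{\hat q_0}\le L_{\text{SE}}(\hat v)\le\epsilon_{\text{score}}^2$ by Assumption~\ref{a4}; since $T-\delta\ge1$ under the parameter choices of Theorem~\ref{thm:main_thm}, this is in particular the claimed bound. The only residual routine points are the interchange of limit, sum and expectation in the chain-rule step (immediate because $\gY$ is finite) and the positivity conditions already noted.
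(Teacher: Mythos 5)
Your proposal is correct and follows essentially the same route as the paper's proof: the KL chain rule over infinitesimal intervals, splitting the conditional KL into off-diagonal and diagonal contributions (the latter via Lemma~\ref{lem:lim_deltat_ln}), showing that truncation can only decrease the summed Bregman divergence using $R^\gets_t(\vy)\le\beta_t$ from Lemma~\ref{lem:out_degree_rate_wrt_time}, and identifying the remaining term with the score-entropy loss before integrating. The only cosmetic differences are your scaling-factor ($\hat R_t = c\,\tilde R_t$) formulation of the truncation comparison, where the paper instead uses $\ln(1+x)\le x$, and that you obtain the additive error $\epsilon_{\text{score}}^2$ directly and recover the stated $(T-\delta)\epsilon_{\text{score}}^2$ via $T-\delta\ge 1$, which is if anything a cleaner accounting than the paper's final step.
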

\begin{proof}
    We start from the dynamic of KL divergence with the time growth in the reverse process, i.e.,
    \begin{equation*}
        \begin{aligned}
            &\frac{\der \KL{q^\gets_t}{\hat{q}_t} }{\der t} = \lim_{\Delta t \rightarrow 0}\left[\frac{\KL{q^\gets_{t+\Delta t}}{\hat{q}_{t+\Delta t}} - \KL{q^\gets_t}{\hat{q}_t}}{\Delta t}\right]\\
            & \le \lim_{\Delta t\rightarrow 0}\left[\frac{\E_{\rvy\sim q^\gets_t}\left[\KL{q^\gets_{t+\Delta t|t}(\cdot|\rvy)}{\hat{q}_{t+\Delta t|t}(\cdot|\rvy)}\right]}{\Delta t}\right]
        \end{aligned}
    \end{equation*}
    where the inequality follows from the chain rule of KL divergence, i.e., Lemma~\ref{}.
    Under this condition, we have 
    \begin{equation}
        \label{ineq:dKL_ineq_init}
        \begin{aligned}
            \frac{\der \KL{q^\gets_t}{\hat{q}_t} }{\der t} \le \sum_{\vy\in \gY} q^\gets_t(\vy) \cdot \underbrace{\lim_{\Delta t\rightarrow 0}\left[\frac{\KL{q^\gets_{t+\Delta t|t}(\cdot|\vy)}{\hat{q}_{t+\Delta t|t}(\cdot|\vy)}}{\Delta t}\right]}_{\text{Term 1}}.
        \end{aligned}
    \end{equation}
    For each $\vy\in\gY$, we focus on Term 1 of Eq.~\eqref{ineq:dKL_ineq_init}, and have
    \begin{equation}
        \label{eq:term1_equ}
        \begin{aligned}
            \text{Term 1} & = \lim_{\Delta t\rightarrow 0}\left[\Delta t^{-1}\cdot \sum_{\vy^\prime\in\gY} q^\gets_{t+\Delta t|t}(\vy^\prime|\vy)\cdot \ln \frac{q^\gets_{t+\Delta t|t}(\vy^\prime|\vy)}{\hat{q}_{t+\Delta t|t}(\vy^\prime|\vy)}\right]\\
            & = \underbrace{\lim_{\Delta t\rightarrow 0}\left[\sum_{\vy^\prime\not=\vy}\frac{q^\gets_{t+\Delta t|t}(\vy^\prime|\vy)}{\Delta t} \cdot \ln \frac{q^\gets_{t+\Delta t|t}(\vy^\prime|\vy)}{\hat{q}_{t+\Delta t|t}(\vy^\prime|\vy)} \right]}_{\text{Term 1.1}}+\\
            &\quad \underbrace{\lim_{\Delta t\rightarrow 0}\left[\Delta t^{-1}\cdot \left(1-\sum_{\vy^\prime\not=\vy}q^\gets_{t+\Delta t|t}(\vy^\prime|\vy)\right) \cdot \ln \frac{1-\sum_{\vy^\prime\not=\vy}q^\gets_{t+\Delta t|t}(\vy^\prime|\vy)}{1-\sum_{\vy^\prime\not=\vy}\hat{q}_{t+\Delta t|t}(\vy^\prime|\vy)} \right]}_{\text{Term 1.2}}.
        \end{aligned}
    \end{equation}
    For Term 1.1, we have
    \begin{equation}
        \begin{aligned}
            \label{ineq:term1.1_equ}
            \text{Term 1.1} = & \sum_{\vy^\prime\not=\vy} \lim_{\Delta t\rightarrow 0}\left[\frac{q^\gets_{t+\Delta t|t}(\vy^\prime|\vy)}{\Delta t}\right]\cdot \lim_{\Delta t\rightarrow 0}\left[ \ln \frac{q^\gets_{t+\Delta t|t}(\vy^\prime|\vy)}{\hat{q}_{t+\Delta t|t}(\vy^\prime|\vy)}\right]\\
            = & \sum_{\vy^\prime\not=\vy} R^\gets_{t}(\vy^\prime,\vy)\cdot \ln\left[\lim_{\Delta t\rightarrow 0}\left(\frac{q^\gets_{t+\Delta t|t}(\vy^\prime|\vy)}{\Delta t}\cdot \frac{\Delta t}{\hat{q}_{t+\Delta t|t}(\vy^\prime|\vy)}\right)\right]\\
            = & \sum_{\vy^\prime\not=\vy} R^\gets_t(\vy^\prime,\vy) \cdot \ln\frac{R_t^\gets(\vy^\prime,\vy)}{\hat{R}_t(\vy^\prime,\vy)}, 
        \end{aligned}
    \end{equation}
    where the second equation follows from the composition rule of the limit calculation.
    For Term 1.2, we have
    \begin{equation}
        \begin{aligned}
            \label{ineq:term1.2_equ}
            & \text{Term 1.2} = \lim_{\Delta t\rightarrow 0} \left[1-\sum_{\vy^\prime\not=\vy}q^\gets_{t+\Delta t|t}(\vy^\prime|\vy)\right]\cdot \lim_{\Delta t\rightarrow 0}\left[\Delta t^{-1} \cdot \ln \frac{1-\sum_{\vy^\prime\not=\vy}q^\gets_{t+\Delta t|t}(\vy^\prime|\vy)}{1-\sum_{\vy^\prime\not=\vy}\hat{q}_{t+\Delta t|t}(\vy^\prime|\vy)}\right]\\
            & = \sum_{\vy^\prime\not=\vy}\left(\hat{R}_t(\vy^\prime,\vy) - R^\gets_t(\vy^\prime,\vy)\right) = \hat{R}_t(\vy) - R^\gets_t(\vy)
        \end{aligned}
    \end{equation}
    where the first inequality follows from Lemma~\ref{lem:lim_deltat_ln}.
    Plugging Eq.~\eqref{ineq:term1.1_equ}, Eq.~\eqref{ineq:term1.2_equ} and Eq.~\eqref{eq:term1_equ}, into Eq.~\eqref{ineq:dKL_ineq_init} we have
    \begin{equation}
        \label{ineq:dKL_ineq_mid}
        \begin{aligned}
            & \frac{\der \KL{q^\gets_t}{\hat{q}_t} }{\der t} \le \sum_{\vy\in \gY} q^\gets_t(\vy) \cdot \left(\sum_{\vy^\prime\not=\vy} R^\gets_t(\vy^\prime,\vy) \cdot \ln\frac{R_t^\gets(\vy^\prime,\vy)}{\hat{R}_t(\vy^\prime,\vy)} + \hat{R}_t(\vy) - R^\gets_t(\vy)\right).
        \end{aligned}
    \end{equation}
    For any $\vy\in \gY$, we have
    \begin{equation}
        \label{eq:dKL_gap_ScoEst_err}
        \begin{aligned}
            &\sum_{\vy^\prime\not=\vy} R^\gets_t(\vy^\prime,\vy) \cdot \ln\frac{R_t^\gets(\vy^\prime,\vy)}{\hat{R}_t(\vy^\prime,\vy)} + \hat{R}_t(\vy) - R^\gets_t(\vy)\\
            & = \sum_{\vy^\prime\not=\vy} R^\gets_t(\vy^\prime,\vy)\ln \frac{R^\gets_t(\vy^\prime,\vy)}{\tilde{R}_t(\vy^\prime,\vy)} + \tilde{R}_t(\vy) - R^\gets_t(\vy)\\
            &\quad \underbrace{+\sum_{\vy^\prime\not=\vy}R^\gets_t(\vy^\prime,\vy)\ln\frac{\tilde{R}_t(\vy^\prime,\vy)}{\hat{R}_t(\vy^\prime,\vy)} + \hat{R}_t(\vy) - \tilde{R}_t(\vy)}_{\text{Term 2}}.
        \end{aligned}
    \end{equation}
    When $\tilde{R}_t(\vy)\le \beta_t$, we have 
    \begin{equation*}
        \hat{R}_(\vy^\prime, \vy) = \tilde{R}_t(\vy^\prime,\vy)\quad \text{and}\quad \hat{R}(\vy)=\sum_{\vy^\prime\not=\vy}\hat{R}(\vy^\prime,\vy) = \sum_{\vy^\prime\not=\vy}\tilde{R}(\vy^\prime,\vy) = \tilde{R}(\vy)
    \end{equation*}
    which implies $\text{Term 2}=0$ in Eq.~\eqref{eq:dKL_gap_ScoEst_err}.
    Otherwise, we have
    \begin{equation*}
        \frac{\hat{R}_(\vy^\prime, \vy)}{\tilde{R}_t(\vy^\prime,\vy)} = \frac{\beta_t}{\tilde{R}_t(\vy)} \quad\text{and}\quad \frac{\hat{R}_(\vy)}{\tilde{R}_t(\vy)}=\frac{\beta_t}{\tilde{R}_t(\vy)},
    \end{equation*}
    which implies
    \begin{equation*}
        \begin{aligned}
            &\text{Term 2} = \sum_{\vy^\prime\not=\vy}R_t^\gets(\vy^\prime,\vy)\cdot \ln \frac{\tilde{R}_t(\vy)}{\beta_t} + \beta_t - \tilde{R}_t(\vy)\\
            & = R^\gets_t(\vy)\cdot \ln \left[1+ \frac{\tilde{R}_t(\vy)-\beta_t}{\beta_t}\right]+\beta_t - \tilde{R}_t(\vy)\le \beta_t\cdot \left[\frac{\tilde{R}_t(\vy)-\beta_t}{\beta_t}\right] + \beta_t -\tilde{R}_t = 0.
        \end{aligned}
    \end{equation*}
    Combining with Eq.~\eqref{eq:dKL_gap_ScoEst_err} and Eq.~\eqref{ineq:dKL_ineq_mid}, we have 
    \begin{equation}
        \label{ineq:ineq:dKL_ineq_fin}
        \begin{aligned}
            & \frac{\der \KL{q^\gets_t}{\hat{q}_t} }{\der t} \le \sum_{\vy\in \gY} q^\gets_t(\vy) \cdot \left( \sum_{\vy^\prime\not=\vy} R^\gets_t(\vy^\prime,\vy) \cdot \ln\frac{R_t^\gets(\vy^\prime,\vy)}{\tilde{R}_t(\vy^\prime,\vy)} + \tilde{R}_t(\vy) - R^\gets_t(\vy) \right)\\
            &=\sum_{\vy\in \gY} q^\gets_t(\vy) \cdot \left( \sum_{\vy^\prime\not=\vy} R^\gets_t(\vy^\prime,\vy) \cdot \ln\frac{R_t^\gets(\vy^\prime,\vy)}{\tilde{R}_t(\vy^\prime,\vy)} + \sum_{\vy'\neq \vy}\tilde{R}_t(\vy',\vy) -\sum_{\vy'\neq \vy} R^\gets_t(\vy',\vy) \right)\\
            & = \sum_{\vy\in\gY} q^\gets_t(\vy)\cdot\sum_{\vy^\prime\neq \vy} R^\to(\vy,\vy^\prime)\cdot \left[-\frac{q_t^\gets(\vy^\prime)}{q^\gets_t(\vy)} + \hat{v}_{t,\vy}(\vy^\prime)+ \frac{q_t^\gets(\vy^\prime)}{q^\gets_t(\vy)}\ln \frac{q_t^\gets(\vy^\prime)}{q^\gets_t(\vy) \hat{v}_{t,\vy}(\vy^\prime)}\right]\\
            &=\sum_{\vy\in\gY} q^\gets_t(\vy)\cdot\sum_{\vy^\prime\neq \vy} R^\to(\vy,\vy^\prime) \Breg{\frac{q_t^\gets(\vy^\prime)}{q^\gets_t(\vy)}}{\hat{v}_{t,\vy}(\vy^\prime)},%\le \epsilon_{\text{score}}.
        \end{aligned}
    \end{equation}
    where $D_\phi$ is the Bregman divergence  with $\phi(c)=c\ln c$ (as Eq.~\eqref{eq:score_estimation}), and the last equation follows from the definition of Bregman divergence: 
    \[
    D_\phi(u\Vert v)=\phi(u)-\phi(v)-\langle\nabla \phi(v),u-v\rangle = u \ln \frac{u}{v}-u+v.
    \]
    Then, by Eq.~\eqref{eq:score_estimation} and Assumption~\ref{a4}, we have
    \begin{equation*}
        \int_0^{T-\delta} \der \KL{q^\gets_t}{\hat{q}_t}  \le (T-\delta)\epsilon^2_{\text{score}}.
    \end{equation*}
    Hence, the proof is completed.
\end{proof}

\paragraph{Bounding $\TVD{q_*}{q^\to _{\delta}}$} We adopt the proof strategy of Theorem~6 in \cite{chen2024convergence}.
Consider the forward process $(X_t)_{t\geq0}$. By the coupling characterization of the total variation distance, we have
\[
\TVD{q_*}{q^\to_{\delta}}\coloneqq\inf_{\gamma \in \Gamma(q_*,q^\to_{\delta})} \Pr_{(u,v)\sim \gamma} [u \neq v] \leq \Pr(X_0\neq X_\delta),
\]
where $\Gamma(q_*,q^\to_{\delta})$ is the set of all couplings of $(q_*,q^\to_{\delta})$, and the inequality holds because $(X_0, X_\delta)$ gives a coupling of $(q_*,q^\to_{\delta})$.

% Let $N_\delta$ denote the number of flips that occur in the CTMC defined by Eq.~\eqref{eq:fwd_transtion_rate_max} within time $[0,\delta]$. Then $N_\delta$ follows the Poisson distribution $\mathrm{Poisson}(\delta d\log_2 K)$. Since $\Pr(X_0= X_\delta)\geq \Pr(N_\delta=0)$, we have 
% \[
% \Pr(X_0\neq X_\delta)=1-\Pr(X_0=X_\delta)\leq 1-e^{-\delta d\log_2 K}.
% \]
% Thus $\TVD{q_*}{q^\to_{\delta}}\leq 1-e^{-\delta d\log_2 K}$.

% Another way: 

By the transition kernel given in \cite[Proposition~3]{chen2024convergence}, we have
$$\Pr(X_0= X_\delta)=\frac{1}{2^{d\log_2 K}}\Pi_{i=1}^{d\log_2K}(1+(-1)^0e^{-2\delta})^{d\log_2 K}=\left(\frac{1+e^{-2\delta}}{2}\right)^{d\log_2 K}\geq e^{-\delta d \log_2 K},$$
where the inequality holds due to the convexity of the exponential function.
Thus,
\begin{equation}
    \label{ineq:early_stopping_tv}
    \TVD{q_*}{q^\to_{\delta}}\leq 1-e^{-\delta d\log_2 K}
\end{equation}

\begin{proof}[Proof of Theorem~\ref{thm:main_thm}]
    We start from the quantization algorithm, i.e., Alg.~\ref{alg:data_quanta}.
    Since the data distribution $p_*$ is supposed to satisfy Assumption~\ref{a1}--\ref{a3}, by introducing Lemma~\ref{lem:discrete_quantization_gap}, the histogram-like approximation $\overline{p}_*$ will be close to $p_*$, i.e.,
    \begin{equation*}
        \TVD{\overline{p}_*}{p_*}\le 3\epsilon
    \end{equation*}
    by choosing
    \begin{equation*}
        L = \sigma\cdot \sqrt{2\ln (2d/\epsilon)} \quad \mathrm{and}\quad  l= \left[2H\cdot \left(\sigma\sqrt{2d\ln(2d/\epsilon)} + d + \sqrt{dm_0}\right)\right]^{-1} \cdot \epsilon.
    \end{equation*}
    Under this condition, we have
    \begin{equation*}
        \begin{aligned}
            &K = \frac{2L}{l} = 4H\cdot \left[2\sigma^2d^{1/2}\cdot  \ln\frac{2d}{\epsilon} + \sigma d\cdot \sqrt{2\ln \frac{2d}{\epsilon} } + d^{1/2}m_0^{1/2}\cdot \sqrt{2\ln \frac{2d}{\epsilon} }\right]\cdot \epsilon^{-1}\\
            & \le 24H\sigma^2dm_0\epsilon^{-1}\cdot \ln(2d/\epsilon)
        \end{aligned}
    \end{equation*}
    where the last inequality follows from $\sigma\ge 1$ and $m_0\ge 1$ without loss of generality. 
    Then, after the training, the implementation of Alg.~\ref{alg:uni_inf} requires $\overline{N}\sim \mathrm{Poisson}(\overline{\beta})$ steps. 
    \paragraph{Proof of bound of the expectation of $\overline{N}$, i.e., $\overline{\beta}$.}
    According to Lemma~\ref{lem:ideal_transtion_event_num}, if we set 
    \begin{equation*}
        t_0 = 0, \quad  t_{w+1} - t_w = 0.5\cdot (T-t_{w+1})\quad \text{and}\quad t_W= T-\delta,
    \end{equation*}
    for the time partitions, 
    \begin{equation*}
        \beta_{t_w}\coloneqq 2d \log_2K/{\min\{1, T-t_w\}}
    \end{equation*}
    for the intermediate Poisson, then it has
    \begin{equation*}
        \begin{aligned}
            & \overline{\beta}= \sum_{k=1}^W \beta_{t_w}\cdot (t_w-t_{w-1}) \le 2d\log_2K\cdot \left(T+\ln(1/\delta)\right)\\
            & \le 2d\cdot \left[\log_2 (24H\sigma^2) + \log_2(dm_0/\epsilon) + \log_2[\ln(2d/\epsilon)] \right]\cdot \left(T+\ln(1/\delta)\right).
        \end{aligned}
    \end{equation*}
    
    \paragraph{Proof of the TV distance bound.} 
    Since our truncated uniformization, i.e., Alg.~\ref{alg:uni_inf}, exactly simulates the reversed process, from Lemma~\ref{thm:unif_linear_convergence}, the KL divergence gap between $q^\gets_{T-\delta} = q^\to_{\delta}$ and $\hat{q}_{T-\delta}$ is bounded by the KL divergence as follows:
    \begin{equation}
        \label{ineq:accumulative_KL}
        \begin{aligned}
            &\KL{q^\gets_{T-\delta}}{\hat{q}_{T-\delta}}\le \KL{q^\gets_{0}}{\hat{q}_{0}}+(T-\delta)\epsilon_{\text{score}}\\
            &\le e^{-T}\cdot d\log_2K + (T-\delta)\epsilon^2_{\text{score}} = \epsilon^2+ (\ln(d/\epsilon)+\ln\log_2K)^2 \cdot \epsilon^2_{\text{score}}\le 2\epsilon^2
        \end{aligned}
    \end{equation}
    where the second inequality follows from Lemma~\ref{lem:fwd_convergence}, the third inequality establishes when $T$ is chosen as
    \begin{equation*}
        T = \ln(d/\epsilon) + \ln\log_2K,
    \end{equation*}
    and the last inequality is established when we have
    \begin{equation*}
        \epsilon_{\text{score}} = \frac{\epsilon}{\ln(d/\epsilon) + \ln\log_2 K} = \tilde{O}(\epsilon).
    \end{equation*}
    Under this condition, due to Pinsker's inequality, Eq.~\eqref{ineq:accumulative_KL} can be relaxed to 
    \begin{equation*}
        \TVD{q_{T-\delta}^\gets}{\hat{q}_{T-\delta}}\le \sqrt{\frac{\KL{q_{T-\delta}^\gets}{\hat{q}_{T-\delta}}}{2}} \le \epsilon.
    \end{equation*}
    Then we have
    \begin{equation*}
        \begin{aligned}
            & \TVD{q_*}{\hat{q}_{T-\delta}}\le \TVD{q_*}{q^\gets_{T-\delta}} + \TVD{q^\gets_{T-\delta}}{\hat{q}_{T-\delta}}\\
            & = \TVD{q_*}{q^\to_\delta} + \TVD{q^\gets_{T-\delta}}{\hat{q}_{T-\delta}} = 1-e^{-\delta d\log_2 K} + \epsilon \le 2\epsilon
        \end{aligned}
    \end{equation*}
    where the second equation follows from Eq.~\eqref{ineq:early_stopping_tv} and the last inequality is established by requiring
    \begin{equation*}
        \delta\le \frac{\epsilon}{d\cdot \log_2K}\quad \Leftrightarrow\quad \delta d\log_2K\le \epsilon.
    \end{equation*}
    Under this condition, we have
    \begin{equation*}
        \delta d\log_2K\le \epsilon\le \ln \frac{1}{1-\epsilon} \quad \Rightarrow\quad 1-e^{-\delta d\log_2 K}\le \epsilon.
    \end{equation*}
    Suppose the underlying distributions of $\overline{\rvy}, \hat{\rvx}$ are $\overline{q}, \hat{p}$ respectively, due to the connection between $\hat{p}, \overline{p}_*$ and $\overline{q}, \overline{q}_*$ shown in Eq.~\eqref{def:quantized_mass_func}, we have
    \begin{equation*}
        \TVD{\overline{p}_*}{\hat{p}} = \int \left|\overline{p}_*(\vx) - \hat{p}(\vx)\right| \der \vx =\sum_{\overline{\vy}\in \overline{\gY}}\left|\overline{q}(\overline{\vy}) - \overline{q}_*(\overline{\vy})\right| = \sum_{\vy\in\gY} \left|\hat{q}_{T-\delta}(\vy) - \hat{q}_*(\vy)\right|\le 2\epsilon.
    \end{equation*}
    Combining this result with Lemma~\ref{lem:discrete_quantization_gap}, we have
    \begin{equation*}
        \TVD{p_*}{\hat{p}}\le \TVD{p_*}{\overline{p}_*} + \TVD{\overline{p}_*}{\hat{p}} \le 3\epsilon + 2\epsilon \le 5\epsilon.
    \end{equation*}
    Hence, the proof is completed.
\end{proof}

\end{document}